\DeclareMathOperator*{\argmin}{arg\,min}
\DeclareMathOperator*{\argmax}{arg\,max}
\DeclareMathOperator{\dif}{\textup{d}\!}
\newcommand{\cbr}[1]{\left\{#1\right\}}
\newcommand{\del}[1]{\left(#1\right)}
\newcommand{\envert}[1]{\left|#1\right|}
\newcommand{\enVert}[1]{\left\|#1\right\|}
\newcommand{\ip}[2]{\left\langle #1, #2 \right \rangle}
\def\SPAN{\textup{span}}
\def\Pip{\Pi_\perp}
\def\R{\mathbb{R}}
\def\cR{\mathcal{R}}
\def\nR{\nabla\cR}
\def\nf{\nabla f}
\def\barw{\bar{\vw}}
\def\barv{\bar{\vv}}
\def\baru{\bar{\vu}}
\def\hgamma{{\hat \gamma}}
\def\bgamma{{\bar \gamma}}
\def\hu{\hat{\vu}}
\def\lexp{\ell_{\exp}}
\def\lrec{\ell_{\textup{recip}}}
\def\ddefloop#1{\ifx\ddefloop#1\else\ddef{#1}\expandafter\ddefloop\fi}
\def\ddef#1{\expandafter\def\csname v#1\endcsname{\ensuremath{\boldsymbol{#1}}}}
\newenvironment{proofof}[1]{\begin{proof}\textbf{(of {#1})}}{\end{proof}}
\title[Gradient descent follows the regularization path for general losses]{Gradient descent follows the regularization path for general losses}
\def\set@curr@file#1{\def\@curr@file{#1}} \makeatother
\begin{document}

\maketitle

\begin{abstract}
    Recent work across many machine learning disciplines
has
    highlighted that standard descent methods, even without explicit
    regularization, do not merely minimize the training error, but also
    exhibit an \emph{implicit bias}.
This bias is typically towards a certain regularized solution, and
    relies upon the details of the learning process,
    for instance the use of the cross-entropy loss.

    In this work, we show that for empirical risk minimization over linear
    predictors with \emph{arbitrary} convex, strictly decreasing losses, if the
    risk does not attain its infimum, then the gradient-descent path and the
    \emph{algorithm-independent} regularization path converge to the same
    direction (whenever either converges to a direction).
    Using this result, we provide a justification for the widely-used
    exponentially-tailed losses (such as the exponential loss or the logistic
    loss):
while this convergence to a direction for exponentially-tailed losses is
    necessarily to the maximum-margin direction, other losses such as
    polynomially-tailed losses may induce convergence to a direction
    with a poor margin.
\end{abstract}

\begin{keywords}implicit regularization, gradient descent, exponentially-tailed losses.
\end{keywords}

\section{Introduction}\label{sec:intro}

A central problem in machine learning is \emph{overfitting}, where a predictor
performs well on training data, but poorly on testing data.
A direct way to mitigate overfitting is to add an \emph{explicit} regularizer,
such as an $\ell_1$ or $\ell_2$ penalty on the model parameters.
Another approach, achieving strong empirical results in modern models
with many parameters \citep{rethinking},
is to exploit the \emph{implicit} regularization exhibited by common descent methods,
such as coordinate descent \citep{boosting_margin}
and gradient descent \citep{soudry_linear},
simply by running them a long time with no explicit regularization.

In fact, as will be explored in this work, there is a strong relationship
between implicit and explicit regularization.
For example, coordinate-descent iterates under exponential loss minimization
(or, equivalently, \emph{AdaBoost iterates}, see \citealp{freund_schapire_adaboost}) and
$\ell_1$-regularized solutions are both biased towards $\ell_1$-maximum-margin
solutions \citep{zhang_yu_boosting,margins_shrinkage_boosting,rosset,zhao_yu}.
Similarly, gradient-descent iterates under exponential or logistic loss minimization
and the corresponding $\ell_2$-regularized solutions are both biased towards $\ell_2$-maximum-margin
solutions \citep{soudry_linear,riskparam}.

The preceding methods, which rose to prominence for their empirical performance,
all shared a curious property:
an insistence upon a loss with \emph{exponential tails},
such as the exponential loss or the logistic loss.
This is an odd coincidence, as the classical theory of classification
performance of convex losses indicates a wide variety should work well, in both
theory and practice \citep{bartlett_jordan_mcauliffe,zhang_convex_consistency}.
This leads to the central question of this work:
\begin{quote}
  \emph{For general convex decreasing losses, what is the relationship between
  gradient descent iterates and the regularized solutions?}
\end{quote}

\begin{figure}[t]
    \subfigure[Zoomed in.]{
        \label{fig:intro:1}
        \includegraphics[width = 0.47\textwidth]{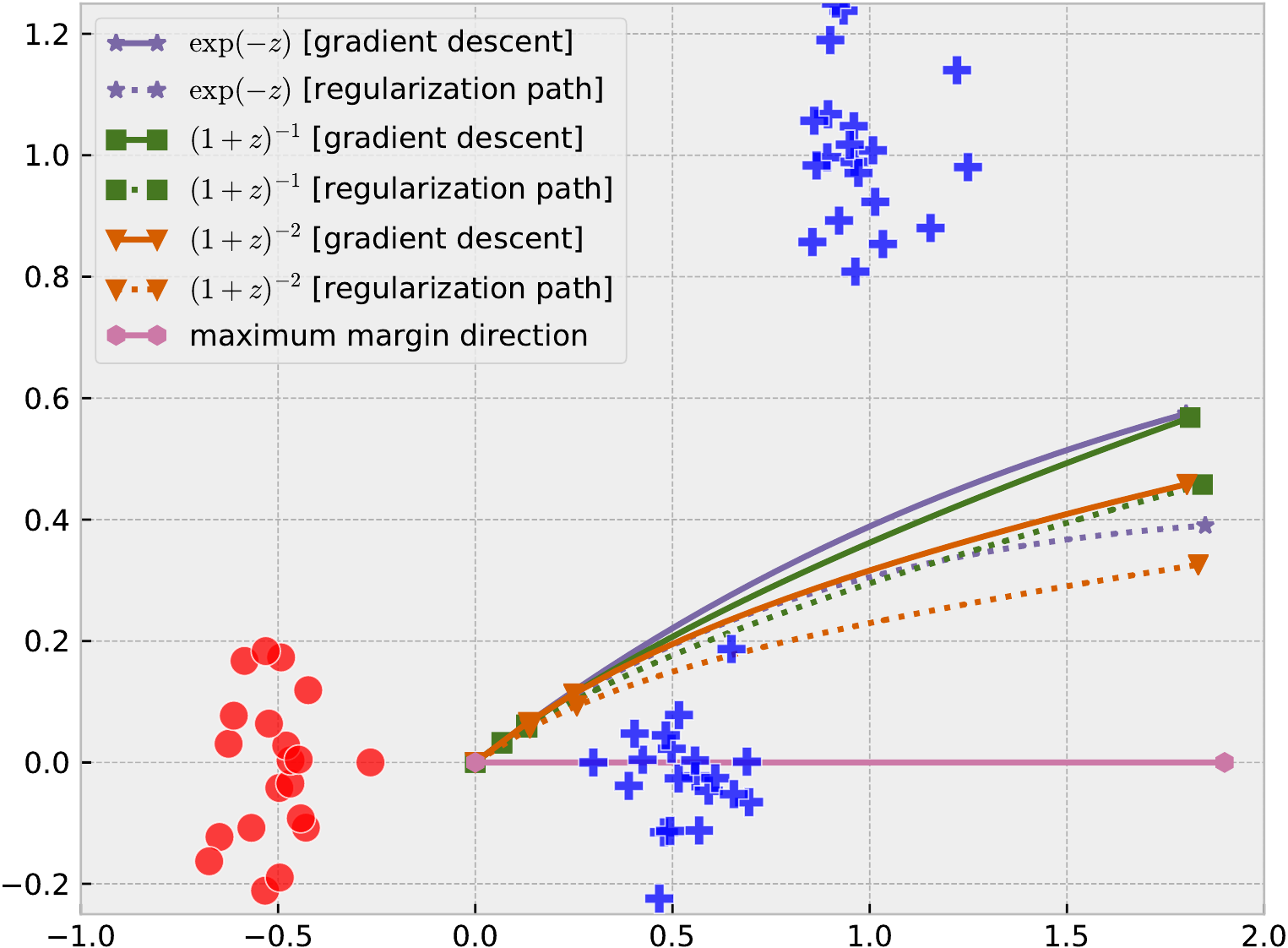}
    }
    \hfill
    \subfigure[Zoomed out.]{
        \label{fig:intro:2}
        \includegraphics[width = 0.47\textwidth]{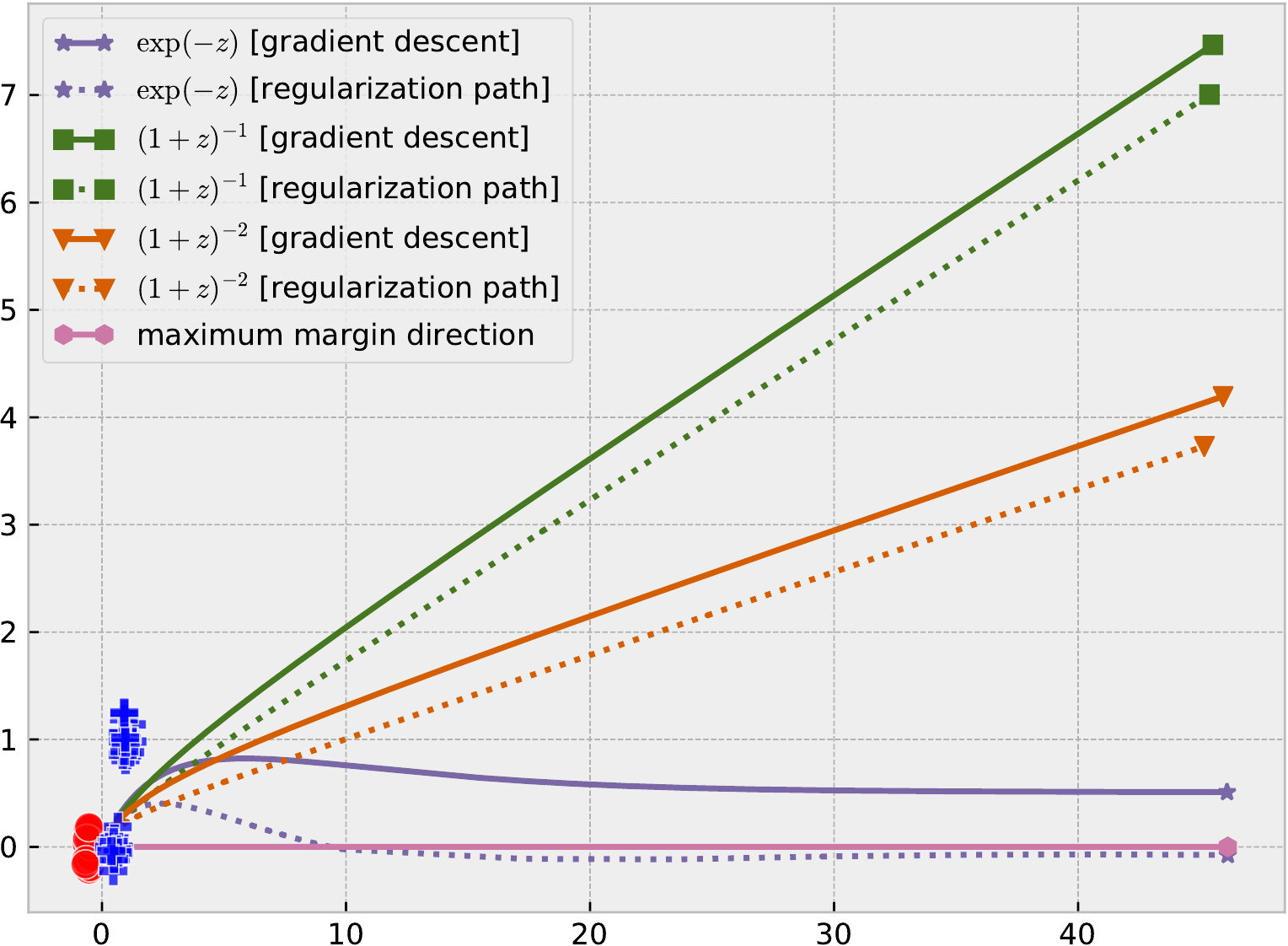}
    }
    \caption{
        Behavior of gradient descent and regularization path for three losses:
        the exponential loss $\exp(-z)$,
        and two polynomially-tailed losses $(1+z)^{-1}$ and $(1+z)^{-2}$
        (with a quadratic extension along $z<0$ for smoothness).
        The data has one negative (red) point cloud, and two positive
        (blue) point clouds; the upper positive cloud pulls the
        predictors trained with polynomially-tailed losses
        away from the maximum-margin direction,
        which points straight to the right.
}
    \label{fig:intro}
\end{figure}

This work focuses on gradient descent and $\ell_2$ regularization.
Before describing the formal results, we demonstrate on a concrete example
the trends we would like to capture. \Cref{fig:intro} shows
the path followed by gradient descent and the \emph{regularization path},
obtained by taking the regularization weight down to $0$, for three separate losses
on the same data set, consisting of the three depicted point clouds.
Zooming in on the data as in \Cref{fig:intro:1}, the behavior is unclear.
Zooming out in \Cref{fig:intro:2}, however, a trend emerges:
for each loss, its gradient-descent path and regularization path asymptotically
follow the same direction.  Moreover, the choice of loss function may lead
to a different convergent direction, and only the exponential loss
converges to the maximum-margin direction.\looseness=-1

\subsection{Contributions}

The goal of this work is to pin down the relationship between gradient-descent
paths and regularization paths for linear predictors, but only assuming the
losses are convex and strictly decreasing.

Definitions will be mostly deferred, but to summarize the main results, a bit of
notation is needed.
Throughout, $\cR$ will denote the empirical risk, and $(\vw_t)_{t\geq 0}$ will
denote gradient descent iterates given by
\begin{align}\label{eq:gd}
    \vw_{t+1} := \vw_t - \eta \nR(\vw_t),
\end{align}
where $\eta>0$ is a sufficiently small but constant step size.
Meanwhile, $\barw(B)$ will denote the regularized solution with $\ell_2$ norm
$B$; concretely,
\begin{align}\label{eq:reg}
    \barw(B):=\argmin_{\|\vw\|\le B}\cR(\vw),
\end{align}
and the \emph{regularization path} denotes the curve followed by $\barw$ as $B$
varies, meaning $(\barw(B))_{B\geq 0}$.
Choosing regularized rather than constrained solutions does not change our
results regarding the regularization path; moreover, in either case, the paths
are algorithm-independent.

As in \Cref{fig:intro}, this work is in the setting where the empirical risk $\cR$
does not attain its infimum, and consequently (as verified in \Cref{sec:if}),
both $\|\vw_t\|\to\infty$ and $\|\barw(B)\|\to\infty$.
As will be shown in \Cref{sec:sep,sec:non_sep}, with strictly decreasing losses,
$\cR$ does not attain its infimum if the training set has a nonempty
``separable'' part; it is also true in the cases of AdaBoost and deep networks,
where perfect classification is possible (cf. \Cref{sec:rw}).
Since the norms grow unboundedly,
to compare $\vw_t$ and $\barw(B)$, this work compares the directions to which they converge:
namely, $\smash[b]{\lim_{t\to\infty} \frac {\vw_t}{\|\vw_t\|}}$ and $\lim_{B\to\infty} \frac {\barw(B)}{B}$,
when the limits exist.
Since we use linear classifiers here, this normalization does not affect their
(binary) predictions.

Our core contribution can be summarized as follows.

\begin{theorem}[Coarsening of \Cref{fact:conv_dir_if,fact:conv_dir,fact:conv_dir_gen}]\label{fact:coarse}
    Suppose the loss function is convex, strictly decreasing to $0$,
    the empirical risk $\cR$ does not attain its infimum,
    and the step size $\eta>0$ is sufficiently small (as discussed in \Cref{sec:if}).
    Then $\lim_{t\to\infty}\frac{\vw_t}{\|\vw_t\|} = \lim_{B\to\infty}\frac{\bar \vw(B)}{B}$
    whenever either limit exists.
\end{theorem}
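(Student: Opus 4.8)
The plan is to view both the gradient-descent path and the regularization path as two reparametrizations of the same ``ascent toward a best direction,'' to show that this direction is the unique maximizer of an appropriate limiting functional, and to conclude that both paths must select it.

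\emph{Setup and the regularization path.} First I would invoke \Cref{sec:if}: since $\cR$ is convex and continuous with compact sublevel sets inside every ball, and $\cR^\star:=\inf\cR$ is not attained, we get $\enVert{\vw_t}\to\infty$, $\enVert{\barw(B)}\to\infty$, and $\cR(\vw_t)\to\cR^\star$, $\cR(\barw(B))\to\cR^\star$, together with quantitative norm-growth estimates on $\vw_t$. Because $\nR$ is always a combination of the data and $\vw_0=0$, the iterates $\vw_t$ (and one choice of $\barw(B)$) lie in the span of the data, on which the margin map $\vw\mapsto(y_i\ip{\vx_i}{\vw})_i$ is injective, so distinct directions induce distinct risks; reducing through \Cref{sec:non_sep} if necessary, I may assume the relevant part of the problem is separable, so that $\cR(r\vu)\to 0$ along ``good'' directions $\vu$. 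For $B$ large, $\enVert{\barw(B)}=B$ exactly --- otherwise $\barw(B)$ would be an interior, hence unconstrained, minimizer of the convex $\cR$, contradicting non-attainment --- so the KKT condition gives $-\nR(\barw(B))=\lambda_B\barw(B)$ with $\lambda_B>0$; thus $\barw(B)/B$ is precisely the unit vector along $-\nR(\barw(B))$.

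\emph{Gradient descent is competitive with the regularization path.} The monotone-decrease property of gradient descent together with the standard convexity bound (valid for the smooth convex $\cR$ under the step-size restriction of \Cref{sec:if}, with $\vw_0=0$) gives, for every comparator $\vz$,
\begin{align*}
    \cR(\vw_t) - \cR(\vz) \le \frac{\enVert{\vz}^2}{2\eta t}.
\end{align*}
Plugging in $\vz=\barw(B)$ and optimizing over $B$ yields $\cR(\vw_t)\le\min_{B}\bigl[\cR(\barw(B))+B^2/(2\eta t)\bigr]$, and combining this with the norm-growth estimates and the monotonicity (indeed convexity) of $B\mapsto\cR(\barw(B))$ upgrades it to the assertion that gradient descent at norm $\approx R$ is asymptotically no less risk-efficient than the regularization path at norm $\approx R$; i.e.\ $\cR(\vw_t)$ and $\cR(\barw(\enVert{\vw_t}))$ sit at the same asymptotic level.

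\emph{A unique best direction, and conclusion.} Introduce the limiting normalized risk $\mu(\vu):=\lim_{r\to\infty}\Psi\bigl(\cR(r\vu)\bigr)$ over unit $\vu$ in the data span, where $\Psi$ is the tail-dependent rescaling making the limit finite and nonzero: for exponentially-tailed losses $\mu(\vu)=\lim_r -\frac1r\log\cR(r\vu)$ (the margin of $\vu$), and for polynomially-tailed losses $\mu(\vu)=\lim_r r^p\cR(r\vu)=\frac1n\sum_i\bigl(y_i\ip{\vx_i}{\vu}\bigr)^{-p}$. Using convexity of $\ell$ on its tail, strict monotonicity, and injectivity of the margin map, $\mu$ has a \emph{unique} optimizer $\vu^\star$ on the sphere (averaging two distinct optimal directions strictly improves $\mu$). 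The regularization path converges to $\vu^\star$ since $\barw(B)$ minimizes $\cR$ at each radius; and by the previous step gradient descent, being asymptotically as risk-efficient, cannot converge to any direction other than the $\mu$-optimal one. Hence whenever $\vw_t/\enVert{\vw_t}$ or $\barw(B)/B$ converges, its limit is $\vu^\star$, and in particular the two limits coincide whenever either exists.

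\emph{Main obstacle.} The crux is this last step, and making the competitiveness bound of the previous step strong enough to feed it: ``competitive in value'' must be pushed to ``aligned in direction,'' which requires (i) the correct normalization $\Psi$, handled uniformly across the admissible tails, and (ii) a quantitative strict convexity of $\mu$ so that a direction merely as good in value as $\vu^\star$ cannot differ from it. A further genuine subtlety is passing from $\cR(\vw_t)$ at the perturbed iterate to $\cR(\enVert{\vw_t}\,\vu)$ along the exact ray: the perturbation $\vw_t-\enVert{\vw_t}\,\vu$ is only $o(\enVert{\vw_t})$, not $o(1)$, and $\cR$ is not uniformly continuous at infinity, so one must carefully control how much such a perturbation can move the normalized risk --- this is presumably where most of the work behind \Cref{fact:conv_dir,fact:conv_dir_gen} lies.
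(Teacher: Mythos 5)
There is a genuine gap, and it comes from attempting a strictly stronger statement than the theorem asserts. The theorem is conditional: \emph{whenever either} $\lim_t \vw_t/\|\vw_t\|$ \emph{or} $\lim_B \barw(B)/B$ exists, they coincide. Your plan instead tries to show that both paths converge to the unique maximizer $\vu^\star$ of a limiting functional $\mu(\vu)=\lim_r\Psi(\cR(r\vu))$, with $\Psi$ a tail-dependent rescaling. But the theorem assumes only that $\ell$ is convex and strictly decreasing to $0$, with no tail condition, and the paper's \Cref{fact:no_baru} exhibits exactly such a loss (alternating between $e^{-z}$ and $1/z$ infinitely often, via \Cref{fact:switch}) for which $\barw(B)/B$ does \emph{not} converge; hence no single normalization $\Psi$, no well-defined $\mu$, and no ``unique best direction'' exist in the stated generality, and your assertion that ``the regularization path converges to $\vu^\star$ since $\barw(B)$ minimizes $\cR$ at each radius'' is false as stated. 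Even restricting to exponential or polynomial tails, the two steps you yourself flag as the crux --- uniqueness of the $\mu$-optimizer for a general admissible tail, and upgrading ``competitive in risk value'' to ``aligned in direction'' under perturbations that are only $o(\|\vw\|)$ --- are precisely the missing arguments, so the proposal does not close the loop even in the special cases. (Smaller issues: you assume $\vw_0=0$, which the paper does not, and the reduction ``through \Cref{sec:non_sep} if necessary'' conceals the $D_s\cup D_c$ decomposition and the separate convergence of $\Pi_S\vw_t$ that \Cref{fact:conv_dir_gen} actually requires.)

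The paper's route avoids identifying the limit altogether. For ``gradient descent converges to a direction $\Rightarrow$ regularization path converges to the same direction'' (\Cref{fact:conv_dir_if}), it uses only convexity and the step-size condition through the inequality \cref{eq:gd_sq_dist}, which makes $\|\vw_t-\barw(B)\|$ nonincreasing while $\|\vw_t\|\le B$, and then a purely Euclidean contradiction: if $\barw(B)/B$ strayed from $\baru$, one could find iterates $\vw_{t_3}$ near $\ip{\barw(B)}{\baru}\baru$ and later $\vw_{t_4}$ near $B\baru$ whose distances to $\barw(B)$ increase. For the converse (\Cref{fact:conv_dir,fact:conv_dir_gen}), it first proves the limit direction $\baru$ of the regularization path has strictly positive margin ($\bgamma\ge\hgamma^2/(2n)$, via collinearity of $\barw(B)$ and $-\nR(\barw(B))$ and an integral-divergence contradiction with $\int_\alpha^\infty-\ell'(z)\dif z<\infty$), then shows $\cR\del{(1+\alpha)\|\vw\|\baru}\le\cR(\vw)$ for large $\|\vw\|$ (\Cref{fact:baru_dec}), and finishes with a perceptron-style telescoping bound yielding $\liminf_t\ip{\vw_t/\|\vw_t\|}{\baru}\ge1-\epsilon$. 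If you want to salvage your value-competitiveness idea, it would at best reprove the known tail-specific results (\Cref{fact:exp_baru,fact:poly_baru}), not the general equivalence.
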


In words, \Cref{fact:coarse} states that if either the gradient-descent path or
the algorithm-independent regularization path converge to a direction,
then both of them converge to the same direction.
In more detail, our full contributions and the paper organization are as follows.

\Cref{sec:if} shows that if the gradient-descent path converges to a direction,
then the regularization path converges to the same direction.
Interestingly, this proof holds for general convex functions not attaining their
infimum, and does not require any properties of the risk.

\Cref{sec:sep} focuses on the case of \emph{linearly separable data}.
The primary effort is in showing the converse to \Cref{sec:if} in this setting,
namely that if the regularization path converges to a direction, then the
gradient-descent path converges to the same direction.
This section also establishes that exponentially-tailed losses (cf.~eq.~\ref{eq:exp_tail})
all converge to the same maximum-margin direction,
that polynomially-tailed losses (cf.~eq.~\ref{eq:poly_tail}) converge to a
direction but may only achieve a poor margin, and lastly that for general losses
the iterates may fail to converge to a direction.

\Cref{sec:non_sep} completes the picture in the case of general data which is
potentially not linearly separable: that is, if the empirical risk does not
attain its infimum, and if the regularization path converges to a direction, then
the gradient-descent path converges to the same direction.
This setting introduces significant technicalities, but also comes with
interesting refinements: while gradient descent and the regularization path do
not converge to a point (only to a direction, as in \Cref{fig:intro}) in this
nonseparable setting, it is possible to show convergence to a point over a certain subspace.

We provide concluding remarks and open problems in \Cref{sec:open}.

\subsection{Related work}\label{sec:rw}

Arguably, the earliest relevant literature is the introduction of
the support vector machine (SVM),
which utilizes explicit regularization to select maximum margin classifiers
\citep{vapnik}---the property that was eventually tied to generalization performance
\citep{STBWA1998,bartlett_margin}.
This use of explicit regularization is significantly different from the setup here:
there, the loss is hinge loss (which attains 0) and the regularization level is constant,
whereas here,
the loss necessarily asymptotes to $0$, and the regularization level is also taken to $0$.
In a concrete sense, exponential losses with this decaying regularization behave asymptotically
like the SVM, and this analogy was used explicitly in the aforementioned gradient descent
proof of \citet{soudry_linear}.
Turning back to descent methods, the original use of margins was in the analysis of
perceptron \citep{novikoff}, however there is no implicit bias: the method terminates with
0 classification error, but no reasonable lower bound can be placed on the achieved margin.

The first concrete studies showing an implicit bias of descent methods were for the
$\ell_1$-regularized case.
Coordinate descent, when paired with the exponential loss, is implicitly biased
towards $\ell_1$-regularized solutions.
This observation is the result of separate lines of work on descent
methods and on regularization methods.
On one hand, AdaBoost was shown to exhibit \emph{positive margins}, meaning
its predictions are not only correct, but in a certain sense robust
\citep{boosting_margin};
indeed, with some further care on the descent step sizes, AdaBoost finds
maximum-margin solutions \citep{zhang_yu_boosting,margins_shrinkage_boosting}.
On the other hand, the $\ell_1$-regularized solutions also converge to
maximum-margin solutions as regularization strength is taken to $0$
\citep{rosset,zhao_yu}.\looseness=-1

Another line of research has shown that gradient descent, when paired with the exponential or
logistic loss, converges to $\ell_2$-regularized solutions.
This was first established for linear methods when the data is linearly
separable \citep{soudry_linear}, meaning there exists a linear predictor which
perfectly labels all data, but has since been extended to linear predictors on
nonseparable data \citep{riskparam}.
\citet{soudry_linear} and \citet{riskparam} only handled exponentially-tailed
losses, while in this paper we prove results for general losses and do not
require separability.\looseness=-1

The implicit bias of gradient descent has also been studied for linear
convolutional networks \citep{soudry_convolutional}, deep linear networks \citep{deeplinear_alignment}, and homogeneous networks \citep{kaifeng_margin},
where empirical results seem to suggest such a bias exists \citep{behnam__deeplearning_bias,spec}.
Similarly to the situation with AdaBoost, there is a variety of results
focusing purely on explicitly-regularized methods
\citet{wei_regularization_matters}.

As a final brief remark,
implicit bias and margins have been extended beyond standard classification
settings, for instance to adversarial training
\citep{adversarial_margin,li_adv}.

\section{Convergence of gradient descent implies convergence of regularization path}
\label{sec:if}

In this section we show one direction of the equivalence, which holds in a more
general setting.

Given a differentiable convex function $f:\R^d\to\R$ (not necessarily the empirical risk)
and an $\ell_2$-norm bound $B$, the regularized solution is defined as
\begin{align}\label{eq:reg_f}
    \barw(B):=\argmin_{\|\vw\|\le B}f(\vw).
\end{align}
Note that $\barw(B)$ is not unique in general, but we still have
$\lim_{B\to\infty}f\del{\barw(B)}=\inf_{\vw\in\R^d}f(\vw)$, as is often the case
when working with unregularized losses.
In this paper we are particularly interested in the case where the infimum of
$f$ is not attained.
In that case $\barw(B)$ is uniquely defined, because the set of minimizers is convex
and contained in the surface of the $\ell_2$ ball, and thus consists of exactly
one point due to the curvature of $\ell_2$ balls.
An example of a function $f$ that does not attain the infimum is $e^{-z}$:
its infimum is $0$, which is not attained by any
$z\in\R$.
A more interesting example is an empirical risk with a nonempty separable part,
which will be introduced in \Cref{sec:sep,sec:non_sep}.\looseness=-1

We minimize $f$ using gradient descent, meaning
\begin{align}\label{eq:gd_f}
    \vw_{t+1}:=\vw_t-\eta\nf(\vw_t).
\end{align}
Its basic properties are summarized in \Cref{fact:gd}.
If there exists a small step size which ensures decreasing function values, then
gradient descent on $f$ can minimize the function value to its infimum;
moreover, if the infimum of $f$ is not attained, then gradient descent iterates
go to infinity.
\begin{lemma}\label[lemma]{fact:gd}
    Given a convex differentiable function $f:\R^d\to\R$, suppose the step size
    $\eta$ satisfies
    \begin{align}\label{eq:gd_func}
        f(\vw_{t+1})-f(\vw_t)\le-\frac{\eta}{2}\enVert{\nf(\vw_t)}^2
    \end{align}
    for all $t\ge0$.
    Then for any $\vw\in\R^d$,
    \begin{align}\label{eq:gd_sq_dist}
        \|\vw_{t+1}-\vw\|^2\le\|\vw_t-\vw\|^2+2\eta\del{f(\vw)-f(\vw_{t+1})},
    \end{align}
    and thus $\|\vw_{t+1}-\vw\|\le\|\vw_t-\vw\|$ as long as
    $f(\vw)\le f(\vw_{t+1})$.
    Consequently,
    \begin{align*}
        \lim_{t\to\infty}f(\vw_t)=\inf_{\vw\in\R^d}f(\vw),
    \end{align*}
    which implies $\lim_{t\to\infty}\|\vw_t\|=\infty$ if the infimum of $f$ is not
    attained.
\end{lemma}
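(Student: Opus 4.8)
The plan is to prove the three assertions in order, each feeding into the next, with the only real idea being an algebraic fusion of convexity with the descent hypothesis \eqref{eq:gd_func}.

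First I would derive \eqref{eq:gd_sq_dist} by expanding the square: since $\vw_{t+1}=\vw_t-\eta\nf(\vw_t)$,
\[
    \|\vw_{t+1}-\vw\|^2 = \|\vw_t-\vw\|^2 - 2\eta\ip{\nf(\vw_t)}{\vw_t-\vw} + \eta^2\|\nf(\vw_t)\|^2 .
\]
Convexity of $f$ gives $\ip{\nf(\vw_t)}{\vw_t-\vw}\ge f(\vw_t)-f(\vw)$, and the hypothesis \eqref{eq:gd_func} gives $\eta^2\|\nf(\vw_t)\|^2\le 2\eta\del{f(\vw_t)-f(\vw_{t+1})}$; substituting both bounds, the two copies of $f(\vw_t)$ cancel and exactly \eqref{eq:gd_sq_dist} remains. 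The monotonicity claim is then immediate: if $f(\vw)\le f(\vw_{t+1})$ the extra term $2\eta\del{f(\vw)-f(\vw_{t+1})}$ is nonpositive, so $\|\vw_{t+1}-\vw\|\le\|\vw_t-\vw\|$.

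Next, for $\lim_{t}f(\vw_t)=\inf_\vw f(\vw)$: the hypothesis \eqref{eq:gd_func} already shows $\del{f(\vw_t)}_{t\ge0}$ is nonincreasing, hence converges to some $L\ge\inf_\vw f(\vw)$. To exclude $L>\inf_\vw f(\vw)$, I would suppose some $\vw$ has $f(\vw)<L$; then $f(\vw_{t+1})\ge L$ for every $t$, so telescoping \eqref{eq:gd_sq_dist} from $0$ to $T-1$ yields
\[
    0\le\|\vw_T-\vw\|^2\le\|\vw_0-\vw\|^2 + 2\eta\sum_{t=0}^{T-1}\del{f(\vw)-f(\vw_{t+1})} \le \|\vw_0-\vw\|^2 + 2\eta T\del{f(\vw)-L},
\]
whose right-hand side $\to-\infty$, a contradiction. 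Hence $f(\vw)\ge L$ for all $\vw$, so $\inf_\vw f(\vw)=L$. Finally, if the infimum is not attained, I would argue by contradiction: if $\liminf_t\|\vw_t\|<\infty$ then some subsequence of $\del{\vw_t}$ is bounded and has a convergent sub-subsequence $\vw_{t_k}\to\vw^\star$; continuity of $f$ forces $f(\vw^\star)=\lim_k f(\vw_{t_k})=\inf_\vw f(\vw)$, contradicting non-attainment, so $\|\vw_t\|\to\infty$.

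I expect essentially no obstacle in this argument; the single substantive step is the one-line combination of the convexity inequality with \eqref{eq:gd_func} that turns the expansion of $\|\vw_{t+1}-\vw\|^2$ into the telescoping-ready form \eqref{eq:gd_sq_dist}, after which everything is bookkeeping. Note that the existence of a step size $\eta$ satisfying \eqref{eq:gd_func} is not needed here, as it is assumed in the statement and discussed separately in \Cref{sec:if}.
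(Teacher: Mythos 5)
Your proof is correct and follows essentially the same route as the paper: the same expansion of $\|\vw_{t+1}-\vw\|^2$ combined with convexity and \eqref{eq:gd_func}, and the same contradiction via iterating \eqref{eq:gd_sq_dist} against a point with strictly smaller value. The only addition is your explicit Bolzano--Weierstrass argument for $\|\vw_t\|\to\infty$, a step the paper leaves implicit, and it is fine as written.
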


\begin{remark}
  The step size condition in \cref{eq:gd_func} holds if $f$ is (globally)
  $\beta$-smooth and $\eta\le1/\beta$.
  There are also standard situations where $f$ merely obeys local smoothness
  over its sublevel sets;
  see for example \cref{eq:exp_risk}, which considers empirical risk minimization
  with the exponential loss.
\end{remark}

Below is our main result of this section.
\begin{theorem}\label[theorem]{fact:conv_dir_if}
    Consider the gradient descent iterates $(\vw_t)_{t\ge0}$ given by
    \cref{eq:gd_f}, and the regularized solutions $(\barw(B))_{B\ge0}$ given by
    \cref{eq:reg_f}.
    Suppose $f$ is convex, differentiable, bounded below by $0$, and has an
    unattained infimum, and the step size $\eta$ satisfies \cref{eq:gd_func} and
    $\eta\le1/\del{2f(\vw_0)}$.
    If $\lim_{t\to\infty}\vw_t/\|\vw_t\|=\baru$ for some unit vector $\baru$, then
    also $\lim_{B\to\infty}\barw(B)/B=\baru$.
\end{theorem}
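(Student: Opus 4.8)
The plan is to exploit the key distance inequality from \Cref{fact:gd}, namely \cref{eq:gd_sq_dist}, which says that every gradient-descent iterate $\vw_{t+1}$ is closer to any fixed point $\vw$ than $\vw_t$ was, up to an additive error $2\eta(f(\vw)-f(\vw_{t+1}))$. Fix a target norm $B$ and consider the regularized solution $\barw(B)$. Since $f(\barw(B)) \to \inf f$ and gradient descent also drives $f(\vw_t)\to\inf f$, for large $t$ we will have $f(\barw(B))\le f(\vw_{t+1})$, so the distances $\|\vw_t - \barw(B)\|$ are eventually nonincreasing in $t$; summing the telescoping bound, they stay within roughly $\|\vw_{t_0}-\barw(B)\|^2 + 2\eta f(\barw(B))$ of each other. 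This gives quantitative control: $\vw_t$ cannot wander too far from the ball of radius $B$, and conversely, once $\|\vw_t\|$ exceeds $B$, the iterate is within a controlled distance of the sphere where $\barw(B)$ lives.

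**Converting norm comparison into direction comparison.**

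First I would establish that $\|\vw_t\|\to\infty$ (from \Cref{fact:gd}, since the infimum is unattained) and likewise $\|\barw(B)\|\to\infty$; the hypothesis gives $\vw_t/\|\vw_t\|\to\baru$. The goal is to show $\barw(B)/B\to\baru$. The idea is: pick $B$ large; find the first time $t(B)$ at which $\|\vw_{t(B)}\|\ge B$. Because the step size is small (this is where the condition $\eta\le 1/(2f(\vw_0))$ and \cref{eq:gd_func} enter, bounding $\|\nf(\vw_t)\|$ and hence the step length via $\eta\|\nf(\vw_t)\|^2 \le 2(f(\vw_t)-f(\vw_{t+1}))\le 2f(\vw_0)$, so each step has length at most $1$ — actually I'd want to be careful and track that the step sizes shrink), the iterate $\vw_{t(B)}$ has norm close to $B$, say in $[B, B+1]$. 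Its direction is close to $\baru$. Then I would use the distance inequality to argue $\barw(B)$ is close to $\vw_{t(B)}$ in the following sense: by optimality of $\barw(B)$ on the ball of radius $B$, and by the fact that $\vw_{t(B)}$ nearly lies on that sphere with $f$-value nearly $\inf f$, a projection/convexity argument forces $\barw(B)$ and $\vw_{t(B)}$ to point in nearly the same direction. Concretely, the projection of $\vw_{t(B)}$ onto the radius-$B$ ball has $f$-value at most $f(\vw_{t(B)})\to\inf f$, so it is a near-minimizer on that ball; since near-minimizers on the sphere cluster near the unique minimizer $\barw(B)$ (using strict convexity of the ball — the same curvature argument the authors use for uniqueness), the projection, and hence $\vw_{t(B)}$'s direction, is close to $\barw(B)/B$. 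Combining, $\barw(B)/B$ is close to $\vw_{t(B)}/\|\vw_{t(B)}\|\approx\baru$, and taking $B\to\infty$ finishes.

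**The main obstacle.**

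The hard part will be making the ``near-minimizers on the sphere cluster near $\barw(B)$'' step quantitative and uniform as $B\to\infty$. A priori, the modulus of this clustering could degrade with $B$: $f$ restricted to the sphere of radius $B$ may be very flat near its minimizer when $B$ is large (indeed it must be, since $f\to\inf f$ along the whole ray $\baru$). So I cannot get a fixed $\epsilon$-$\delta$ bound. The resolution is to turn the argument around: rather than showing $\barw(B)$ is close to a near-minimizer, show directly that $\vw_{t(B)}$ itself (after rescaling or projecting to norm exactly $B$) is essentially a valid competitor that the optimality of $\barw(B)$ must respect, then use the distance inequality \cref{eq:gd_sq_dist} with $\vw=\barw(B')$ for a slightly larger radius $B'$ to pin down where $\vw_t$ goes for all intermediate times, and pass to a limit of directions using compactness of the unit sphere. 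I expect the cleanest route is: assume for contradiction that $\barw(B)/B$ does not converge to $\baru$, extract a subsequence $B_k$ with $\barw(B_k)/B_k\to\vv\ne\baru$, and derive that along the corresponding times the gradient-descent iterates must also have a direction-limit point $\ne\baru$ (because $\vw_t$ stays within $o(\|\vw_t\|)$ of $\barw(\|\vw_t\|)$ by the distance bound), contradicting $\vw_t/\|\vw_t\|\to\baru$. Verifying that $\|\vw_t - \barw(\|\vw_t\|)\| = o(\|\vw_t\|)$ — equivalently that the accumulated additive error $2\eta f(\barw(B))$ in the telescoped distance bound is negligible compared to $B^2$, which holds since $f(\barw(B))\to\inf f$ is bounded — is the technical crux, and everything else is bookkeeping.
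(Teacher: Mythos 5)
You start from the same tool as the paper---the monotonicity of $\enVert{\vw_t-\barw(B)}$ coming from \cref{eq:gd_sq_dist}, and the observation that the first iterate to exit the ball of radius $B$ has norm within $1$ of $B$ and direction close to $\baru$---but two of your steps do not hold. First, the monotonicity condition is stated backwards: since $f(\vw_t)$ decreases to the \emph{unattained} infimum while $f(\barw(B))$ is a fixed value strictly above it, the requirement $f(\barw(B))\le f(\vw_{t+1})$ holds on an \emph{initial} segment of times (in particular while $\|\vw_t\|\le B$, which is how the paper uses it) and fails for all large $t$; so the distance to $\barw(B)$ is controlled only up to the exit time of the ball, and your ``eventually nonincreasing'' reading, as well as the claimed uniform bound ``$\|\vw_{t_0}-\barw(B)\|^2+2\eta f(\barw(B))$'', is not available (the additive error terms accumulate linearly in the number of steps once $f(\vw_{t+1})<f(\barw(B))$). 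Second---and you flag this yourself---the step ``near-minimizers on the sphere of radius $B$ cluster near $\barw(B)$'' is simply false without a modulus that is uniform in $B$, and no such modulus exists, since $f$ restricted to the sphere becomes arbitrarily flat near its constrained minimizer as $B\to\infty$.

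Your proposed repair does not close this gap. The ``technical crux'' you reduce to, $\enVert{\vw_t-\barw(\|\vw_t\|)}=o(\|\vw_t\|)$, is essentially a restatement of the theorem (it says exactly that the two directions merge), and the telescoped distance bound you invoke yields only $\enVert{\vw_{t(B)}-\barw(B)}\le\enVert{\vw_0-\barw(B)}\le\|\vw_0\|+B$, i.e.\ $O(B)$ rather than $o(B)$; for two vectors of norm about $B$ this constrains the angle between $\vw_{t(B)}$ and $\barw(B)$ only to a constant (about $\pi/3$), so it cannot rule out a limit point $\vv\ne\baru$ of $\barw(B_k)/B_k$. The missing idea, which is what the paper's proof supplies, is a comparison of the distances to a \emph{single} regularized point at \emph{two} exit times: assuming $\enVert{\barw(B_4)/B_4-\baru}>\delta$, one first proves $\ip{\barw(B)}{\baru}\to\infty$, sets $B_3:=\ip{\barw(B_4)}{\baru}$, and uses the elementary geometric inequality $\enVert{\barw(B_4)-B_4\baru}-\enVert{\barw(B_4)-B_3\baru}>B_4\delta^3/8$; since the iterates at the exit times of the balls of radii $B_3$ and $B_4$ lie within $B_4\delta^3/32+1$ of $B_3\baru$ and $B_4\baru$ respectively, the distance from the gradient-descent path to $\barw(B_4)$ would have to strictly \emph{increase} between these two times, contradicting \cref{eq:gd_sq_dist}, which is valid there because all iterates up to the second exit time stay in the ball of radius $B_4$ and hence have $f$-value at least $f(\barw(B_4))$. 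Without this two-time comparison (or some equivalent quantitative device replacing the flat ``near-minimizer'' argument), the proof does not go through.
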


The full proof of \Cref{fact:conv_dir_if} is given in \Cref{app_sec:if}.
Here we sketch the main arguments.
The key property used in the proof is \cref{eq:gd_sq_dist}.
Note that given any $B>0$, by the definition of $\barw(B)$, as long as
$\|\vw_t\|,\|\vw_{t+1}\|\le B$, it holds that
$\enVert{\vw_{t+1}-\barw(B)}\le\enVert{\vw_t-\barw(B)}$.
In other words, the distance from the gradient-descent path to $\barw(B)$ is
nonincreasing within the ball $\cbr{\vw:\|\vw\|\le B}$.

Suppose for some $\epsilon>0$, there exists arbitrarily large $B$ with
$\enVert{\frac{\barw(B)}{B}-\baru}>\epsilon$.
By Euclidean geometry, we can show that
\begin{align*}
    \enVert{B\baru-\barw(B)}-\enVert{\ip{\barw(B)}{\baru}\baru-\barw(B)}>\frac{B\epsilon^3}{8}.
\end{align*}
By the assumption, if $\|\vw_t\|$ is large enough, then $\vw_t/\|\vw_t\|$ and $\baru$
can be arbitrarily close.
The idea is then to find two gradient descent iterates $\vw_{t_1}$ and $\vw_{t_2}$,
where $t_1<t_2$, and $\vw_{t_1}$ is close to $\ip{\barw(B)}{\baru}\baru$,
and $\vw_{t_2}$ is close to $B\baru$.
It then follows that $\enVert{\vw_{t_2}-\barw(B)}>\enVert{\vw_{t_1}-\barw(B)}$,
which violates \cref{eq:gd_sq_dist}.

\section{Convergence to a direction for the linearly separable case}
\label{sec:sep}

In the remainder of the paper, we consider binary classification
with a training set $\{(\vx_i,y_i)\}_{i=1}^n$, where $\vx_i\in\R^d$ and
$y_i\in\{-1,+1\}$, and we assume $\|\vx_i\|\le1$ without loss of generality.
We use a linear classifier $\vw\in\R^d$, which is learned by minimizing the
empirical risk
\begin{align*}
    \cR(\vw):=\frac{1}{n}\sum_{i=1}^{n}\ell\del{y_i \langle \vw,\vx_i\rangle},
\end{align*}
where the loss function $\ell$ is assumed to be convex, differentiable, and
strictly decreasing to $0$, such as the logistic loss $\ln(1+e^{-z})$.

In this section, we assume that the training data is linearly separable: there
exists a unit vector $\vu$ and some $\gamma>0$ such that
$y_i \langle \vu,\vx_i\rangle\ge\gamma$ for all $1\le i\le n$.
Results in this section can be extended to the general case with no assumption
on the training data, as we will do in \Cref{sec:non_sep}.

Linear separability and a strictly decreasing loss imply that the infimum of
$\cR$ is not attained, and thus \Cref{fact:conv_dir_if} can be applied.
However, we can show a stronger result: the gradient-descent path
converges to a direction if and only if the regularization path converges
to (the same) direction.
\begin{theorem}\label{fact:conv_dir}
    Consider the gradient descent iterates $(\vw_t)_{t\ge0}$ given by
    \cref{eq:gd}, and the regularized solutions $(\barw(B))_{B\ge0}$ given by
    \cref{eq:reg}.
    Suppose the data is linearly separable, and the step size satisfies
    $\eta\le1/\del{2\cR(\vw_0)}$ and
    \begin{align}\label{eq:gd_risk}
        \cR(\vw_{t+1})-\cR(\vw_t)\le-\frac{\eta}{2}\enVert{\cR(\vw_t)}^2
    \end{align}
    for all $t\ge t_0$.
    Then $\lim_{t\to\infty}\vw_t/\|\vw_t\|$ exists if and only if
    $\lim_{B\to\infty}\barw(B)/B$ exists, and when they exist they are the same.
\end{theorem}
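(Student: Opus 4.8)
The ``only if'' implication is immediate from \Cref{fact:conv_dir_if} with $f=\cR$: linear separability together with strict decrease of $\ell$ forces $\inf\cR=0$ to be unattained, $\cR$ is convex, differentiable and nonnegative, and the step-size hypotheses are met (\cref{eq:gd_risk} supplies the required descent inequality, and the minor mismatch that it is assumed only from a finite time $t_0$ rather than from the start is harmless). Hence if $\vw_t/\|\vw_t\|\to\baru$ then $\barw(B)/B\to\baru$, and it remains to prove the converse: assuming $\barw(B)/B\to\baru$ for a unit vector $\baru$, show $\vw_t/\|\vw_t\|\to\baru$.

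For the converse, first collect the basic facts: by \Cref{fact:gd}, $\cR(\vw_t)\to0$, $\|\vw_t\|\to\infty$, and $\sum_t\|\nR(\vw_t)\|^2<\infty$; by separability $\cR(\barw(B))\le\ell(\gamma B)\to0$; and since $\cR$ has no finite minimizer, the norm constraint defining $\barw(B)$ is active, $\|\barw(B)\|=B$, with first-order condition $\nR(\barw(B))=-\lambda_B\barw(B)$ for some $\lambda_B>0$. The crux is then a one-line comparison, via convexity, of $\vw_t$ with the regularization-path point of the \emph{same norm}. Writing $B_t:=\|\vw_t\|$ and $\barv_t:=\barw(B_t)$, convexity of $\cR$ together with $\nR(\barv_t)=-\lambda_{B_t}\barv_t$ and $\|\barv_t\|=\|\vw_t\|=B_t$ yields
\begin{align*}
    1-\frac{\ip{\vw_t}{\barv_t}}{B_t^2}\;\le\;\frac{\cR(\vw_t)-\cR(\barv_t)}{\lambda_{B_t}B_t^2}\;=\;\frac{\cR(\vw_t)-\cR(\barv_t)}{B_t\,\|\nR(\barv_t)\|},
\end{align*}
whose left side equals $1-\cos\theta_t$, with $\theta_t$ the angle between $\vw_t$ and $\barv_t$ (using $\|\vw_t\|=\|\barv_t\|=B_t$). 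Hence if the right side tends to $0$, then $\theta_t\to0$, and since $B_t\to\infty$ and $\barw(B)/B\to\baru$ we get $\barv_t/B_t\to\baru$, whence $\vw_t/\|\vw_t\|\to\baru$, completing the proof.

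Everything therefore reduces to showing the right-hand ratio above tends to $0$, and I expect this to be the main obstacle. It has two parts, both pushing in the same direction. First, an \emph{upper bound on the excess risk} $\cR(\vw_t)-\cR(\barv_t)\ge0$ of gradient descent over the norm-constrained optimum at the same radius: here one feeds $\vw=\barv_t$ into the contraction inequality \cref{eq:gd_sq_dist} and exploits $\sum_t\|\nR(\vw_t)\|^2<\infty$. Second, a matching \emph{lower bound on the denominator} $B_t\,\|\nR(\barv_t)\|=\lambda_{B_t}B_t^2$ — equivalently, a lower bound on how fast the value function $B\mapsto\cR(\barw(B))$ is still decreasing at $B=B_t$; comparing $\cR(\barw(B))$ with $\cR(\barw(2B))$ via convexity gives $\|\nR(\barw(B))\|\ge(\cR(\barw(B))-\cR(\barw(2B)))/(3B)$, reducing this to control of the decay of the regularization-path risk. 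Balancing these two estimates — delicate, since numerator and denominator both vanish — is the technical heart of the separable case; for exponentially- or polynomially-tailed losses one can make the decay rates explicit, but the general statement needs a more robust argument, and this is the step I expect to be hardest.
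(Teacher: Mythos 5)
You correctly dispatch one implication via \Cref{fact:conv_dir_if}, and your convexity/first-order-optimality inequality bounding $1-\cos\theta_t$ by $\del{\cR(\vw_t)-\cR(\barw(B_t))}/\del{B_t\enVert{\nR(\barw(B_t))}}$ with $B_t:=\|\vw_t\|$ is correct. The genuine gap is exactly the step you flag: nothing in the proposal shows this ratio tends to $0$, and the two tools you sketch cannot show it. Summing \cref{eq:gd_sq_dist} against the comparator $\barw(B_t)$ yields only an averaged regret bound of order $\del{B_t+O(1)}^2/(\eta t)$ on the excess risk $\cR(\vw_t)-\cR(\barw(B_t))$, while the denominator needs an absolute lower bound; already for the exponential loss, where $\enVert{\nR(\vw)}\le\cR(\vw)$ and $\cR\del{\barw(B_t)}\le e^{-\hgamma B_t}\le n\,\cR(\vw_t)$, with $\cR(\vw_t)$ of order $1/(\eta t)$ up to logarithmic factors and $B_t$ of order $\ln t$, your estimates bound the ratio only by a quantity growing like $B_t\approx\ln t$, not vanishing. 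Closing the argument this way would require a sharp statement that gradient descent tracks the constrained optimum at the same norm up to small multiplicative error; no such result is available for general convex, strictly decreasing losses, and proving one is essentially as hard as the theorem itself.

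The paper's proof of the converse avoids any rate comparison. It first shows that the limit direction $\baru$ of the regularization path has strictly positive margin, $\bgamma\ge\hgamma^2/(2n)$ (\Cref{fact:baru_margin}), using the collinearity identity of \Cref{fact:barw_collinear} and a contradiction with $\int_0^\infty-\ell'(z)\dif z<\infty$. Positive margin together with $\barw(B)/B\to\baru$ then give \Cref{fact:baru_dec}: for any $\alpha>0$ and all $\vw$ of sufficiently large norm, $\cR\del{(1+\alpha)\|\vw\|\baru}\le\cR\del{\barw(\|\vw\|)}\le\cR(\vw)$, by a per-example geometric comparison. Finally, applying convexity at this comparison point and using \cref{eq:gd_risk} and $\|\vw_t\|\to\infty$, a perceptron-style telescoping argument gives $\ip{\vw_t-\vw_{t_0}}{\baru}\ge\del{\|\vw_t\|-\|\vw_{t_0}\|}/(1+\alpha)-\eta\cR(\vw_{t_0})$, hence $\liminf_{t\to\infty}\ip{\vw_t/\|\vw_t\|}{\baru}\ge1/(1+\alpha)$ with $\alpha$ arbitrary. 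The ideas missing from your proposal are precisely these two lemmas — the positive margin of $\baru$ and the inflation-along-$\baru$ comparison — which convert directional convergence of the regularization path into an inequality gradient descent can exploit directly, with no quantitative tracking of risks along the two paths.
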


\begin{remark}
    It can be verified that if the loss function $\ell$ is $\beta$-smooth, then
    so is the empirical risk function $\cR$, and \cref{eq:gd_risk} holds if
    $\eta\le1/\beta$.
    However, it may still hold for a loss function which is not globally smooth.
    For example, for the exponential loss $e^{-z}$, Lemma 3.4 of \citet{riskparam}
    ensures that
    \begin{align}\label{eq:exp_risk}
        \cR(\vw_{t+1})-\cR(\vw_t)\le-\eta\del{1-\frac{\eta\cR(\vw_t)}{2}}\enVert{\nR(\vw_t)}^2
    \end{align}
    as long as $\eta\cR(\vw_t)\le1$.
    Therefore, \cref{eq:gd_risk} holds as long as $\eta\le1/\cR(\vw_0)$.
\end{remark}

The ``if'' part of \Cref{fact:conv_dir} follows directly from
\Cref{fact:conv_dir_if}.
Next we give a proof sketch of the ``only if'' part of \Cref{fact:conv_dir}; the
full proof is given in \Cref{app_sec:sep}.

In the remainder of this section, we assume that
$\lim_{B\to\infty}\barw(B)/B=\baru$ for some unit vector~$\baru$, and define
its margin as
\begin{align*}
    \bgamma:=\min_{1\le i\le n}y_i \langle\baru,\vx_i\rangle.
\end{align*}
Moreover, the maximum margin $\hgamma$ and the maximum-margin solution $\hu$ are
defined as
\begin{align*}
    \hgamma:=\max_{\|\vu\|=1}\min_{1\le i\le n}y_i \langle \vu,\vx_i\rangle,\quad\textrm{and}\quad\hu:=\argmax_{\|\vu\|=1}\min_{1\le i\le n}y_i \langle \vu,\vx_i\rangle.
\end{align*}
We first show that $\bgamma$ is always positive.
\begin{lemma}\label[lemma]{fact:baru_margin}
    It holds that $\bgamma\ge\hgamma^2/(2n)>0$, where $\hgamma$ is the maximum
    margin.
\end{lemma}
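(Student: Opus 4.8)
The plan is to exploit the defining optimality of $\barw(B)$ together with the strict monotonicity of $\ell$, pushing $B\to\infty$ to read off information about the limiting direction $\baru$. First I would suppose, for contradiction (or simply to derive the quantitative bound directly), that $\bgamma$ is small; the key is that if $\baru$ has poor margin, then some data point $\vx_j$ with $y_j\ip{\baru}{\vx_j}=\bgamma$ contributes a loss term $\ell(B\bgamma + o(B))$ which decays slowly, whereas the maximum-margin vector $\hu$ makes \emph{every} term as small as $\ell(B\hgamma)$. So a natural competitor for $\barw(B)$ is a vector of the form $\barw(B) + \delta B \hu$ rescaled back to norm $B$, or more cleanly, compare $\cR(\barw(B))$ against $\cR(B\hu)$ directly. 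Since $\|\barw(B)\|\le B$ and $\barw(B)$ is the constrained minimizer, $\cR(\barw(B))\le \cR(B\hu) = \frac1n\sum_i \ell(B\, y_i\ip{\hu}{\vx_i}) \le \ell(B\hgamma)$, using $\ell\ge 0$ and that each margin is $\ge\hgamma$.

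Next I would lower-bound $\cR(\barw(B))$ in terms of $\bgamma$. Writing $\vu_B := \barw(B)/B$, which converges to $\baru$, there is an index $i$ with $y_i\ip{\vu_B}{\vx_i}\to\bgamma$ (using that the finite min is continuous), so for large $B$ that term alone gives $\cR(\barw(B)) \ge \frac1n\,\ell\bigl(B\,y_i\ip{\vu_B}{\vx_i}\bigr) \ge \frac1n\,\ell\bigl(B(\bgamma+\epsilon_B)\bigr)$ for some $\epsilon_B\to 0$. Combining the two bounds yields $\frac1n\,\ell\bigl(B(\bgamma+\epsilon_B)\bigr) \le \ell(B\hgamma)$ for all large $B$. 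Now the work is to convert this inequality between loss values into the claimed inequality $\bgamma \ge \hgamma^2/(2n)$. Here I expect the paper uses a clean trick: compare not with $B\hu$ but with a shorter vector along $\hu$, or compare $\barw(B)$ with the specific competitor $\barw(B)$ itself perturbed — actually the slickest route is to use that $\barw(B/\bgamma\cdot\text{something})$... let me instead use the substitution $B \mapsto B/\hgamma$ in one bound so the arguments $B$ and $B(\bgamma/\hgamma + \cdots)$ of $\ell$ can be matched. Concretely: $\cR(\barw(B)) \le \cR(B\hu) \le \ell(B\hgamma)$, and also, since scaling $\barw(B)$ to norm exactly $B$ can only help, $\cR$ evaluated at $(B\hgamma^{-1})\vu_{B\hgamma^{-1}}$ picks up a term $\ge \frac1n \ell(B\bgamma\hgamma^{-1}(1+o(1)))$; matching the $\ell$ arguments forces $B\bgamma/\hgamma \le B\hgamma$ up to lower-order terms, i.e. $\bgamma\le\hgamma^2$ — which is the wrong direction, so the real argument must instead bound how much the \emph{other} $n-1$ terms can help, which is exactly where the factor $1/(2n)$ comes from.

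The main obstacle is therefore this last conversion step: the crude bound "one term is at least $\frac1n\ell(\cdot)$" loses a factor $n$, and one must argue that $\barw(B)$ cannot do much better than $B\bgamma$ worth of margin \emph{on average} without already having margin close to $\hgamma$ everywhere, contradicting that its min-margin is $\bgamma$. I would handle this by noting $\cR(\barw(B)) \ge \frac1n\ell(B\bgamma(1+o(1)))$ and $\cR(B\hu)\le \ell(B\hgamma)$, then appealing to a quantitative comparison of $\ell$ at two points; since $\ell$ is only assumed convex and strictly decreasing (no explicit rate), the bound $\bgamma\ge\hgamma^2/(2n)$ must come from a \emph{norm} comparison rather than a loss-rate comparison — i.e. one shows that if $\bgamma<\hgamma^2/(2n)$ then the vector $\tfrac{\bgamma}{\hgamma}\hu + (\text{correction})$ of norm $\le B$ strictly beats $\barw(B)$ for all large $B$, contradicting optimality. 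Getting that correction term and the constant $1/(2n)$ exactly right (balancing the improvement on the worst point against possible degradation on the others) is the crux; everything else is continuity of $\min$ and monotonicity of $\ell$.
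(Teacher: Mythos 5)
There is a genuine gap. Your central device is the zeroth-order comparison $\cR\del{\barw(B)}\le\cR(B\hu)\le\ell(B\hgamma)$ together with $\cR\del{\barw(B)}\ge\frac1n\ell\del{B(\bgamma+o(1))}$, giving $\frac1n\ell\del{B(\bgamma+o(1))}\le\ell(B\hgamma)$ for large $B$. For a loss only assumed convex and strictly decreasing to $0$, this is too weak to yield a loss-independent bound such as $\bgamma\ge\hgamma^2/(2n)$: for a very slowly decaying convex loss (e.g.\ one behaving like $1/\log z$ for large $z$) the inequality $\log(B\hgamma)\le n\log(B\bgamma)$ holds for every fixed $\bgamma>0$ once $B$ is large, so no lower bound on $\bgamma$ follows from risk values alone. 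You sense this yourself---your attempted conversions run ``the wrong direction''---and your fallback (build a competitor $\tfrac{\bgamma}{\hgamma}\hu$ plus a correction that strictly beats $\barw(B)$) is exactly the step you leave unexecuted, calling it the crux. So the proposal stops short of a proof precisely where the constant $\hgamma^2/(2n)$ must be produced.

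The paper's mechanism is first-order, not zeroth-order. By optimality, $\barw(B)$ is collinear with $-\nR\del{\barw(B)}$, so $-\ip{\barw(B)/B}{\nR\del{\barw(B)}}=\enVert{\nR\del{\barw(B)}}\ge\ip{-\nR\del{\barw(B)}}{\hu}$. Expanding both sides as sums weighted by $-\ell'\del{\ip{\barw(B)}{y_i\vx_i}}$, the examples on which $\barw(B)/B$ has margin above $\hgamma$ must compensate the worst example; if the minimum margin stayed below $\epsilon:=\hgamma^2/(2n)$ for all large $B$, this gives $-n\ell'(B\hgamma)\ge-\ell'(B\epsilon)\,\hgamma/2$, i.e.\ the ratio bound $\frac{-\ell'(B\epsilon)}{-\ell'(B\hgamma)}\le\frac{2n}{\hgamma}$ for all $B\ge B_0$. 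Since $\hgamma/\epsilon=2n/\hgamma=:\lambda$, the change of variables $z\mapsto\lambda z$ shows that $\int_{\alpha\lambda^k}^{\alpha\lambda^{k+1}}(-\ell')$ is nondecreasing in $k$ (with $\alpha:=B_0\epsilon$), forcing $\int_\alpha^\infty(-\ell')=\infty$, which contradicts $\int_\alpha^\infty(-\ell')=\ell(\alpha)<\infty$. It is this derivative-level, multiplicative bookkeeping---not a comparison of $\cR$ values or an explicit competitor construction---that produces the universal constant, and it is the ingredient missing from your argument.
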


\begin{remark}
    \Cref{fact:baru_margin} gives a worst-case lower bound on margin, which holds
    for an arbitrary decreasing loss.
    The proof technique can be adapted to a specific loss function.
    For example, if the loss function has a polynomial tail $az^{-b}$, then
    $\lim_{B\to\infty}\barw(B)/B$ exists (cf. \Cref{fact:poly_baru}), and we can
    prove an $\Omega(n^{-1/(b+1)})$ lower bound on margin.
    Moreover, there exists a dataset on which this lower bound is tight (cf.
    \Cref{fact:baru_margin_ub}).
\end{remark}

Here is a proof sketch of \Cref{fact:baru_margin}.
The starting point is the property that $\barw(B)$ and $\nR\del{\barw(B)}$ are
collinear, meaning
\begin{align}\label{eq:barw_col}
    -\ip{\frac{\barw(B)}{B}}{\nR\del{\barw(B)}}=\enVert{\nR\del{\barw(B)}},
\end{align}
which is a consequence of the first-order optimality conditions.
Next, by the chain rule, the left hand side of \cref{eq:barw_col} is naturally
related to the margin of $\barw(B)/B$:
\begin{align}\label{eq:barw_col_lhs}
    -\ip{\frac{\barw(B)}{B}}{\nR\del{\barw(B)}}=\frac{1}{n}\sum_{i=1}^{n}-\ell'\del{\ip{\barw(B)}{y_i\vx_i}}\ip{\frac{\barw(B)}{B}}{y_i\vx_i},
\end{align}
while the right hand side of \cref{eq:barw_col} can be bounded using the
Cauchy-Schwarz inequality and the maximum-margin solution $\hu$:
\begin{align}\label{eq:barw_col_rhs}
    \enVert{\nR\del{\barw(B)}}\ge\ip{-\nR\del{\barw(B)}}{\hu}
    &\ge \frac{1}{n}\sum_{i=1}^{n}-\ell'\del{\ip{\barw(B)}{y_i\vx_i}}\hgamma.
\end{align}
If $\bgamma<\hgamma^2/(2n)$ , then since the regularization path converges to
$\baru$, the margin of $\barw(B)/B$ is no larger than $\hgamma^2/(2n)$ for all
large $B$.
To ensure that \cref{eq:barw_col_lhs} is upper bounded by \cref{eq:barw_col_rhs}
would require that
\begin{align*}
    -\ell'(B\hgamma)\ge-\ell'\del{\frac{B\hgamma^2}{2n}}\frac{\hgamma}{2n}.
\end{align*}
This would in turn imply $\int_0^\infty-\ell'(z)\dif z=\infty$, a contradiction.

Next we can show that to minimize the risk, it is almost optimal to move along
the direction of~$\baru$, thanks to its positive margin.
\begin{lemma}\label[lemma]{fact:baru_dec}
  Given any $\alpha>0$, there exists $\rho(\alpha)>0$, such that for any $\vw$
  with $\|\vw\|>\rho(\alpha)$,
  it holds that
  \begin{align*}
    \cR\big((1+\alpha)\enVert{\vw}\baru\big)\le\cR(\vw).
  \end{align*}
\end{lemma}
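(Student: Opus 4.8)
The plan is to route the comparison through the regularization path rather than trying to lower-bound $\cR(\vw)$ directly. Set $B:=\|\vw\|$. Since $\|\vw\|=B$, the vector $\vw$ is feasible for the constrained problem in \cref{eq:reg} defining $\barw(B)$, so immediately $\cR(\barw(B))\le\cR(\vw)$. It therefore suffices to prove $\cR\bigl((1+\alpha)B\baru\bigr)\le\cR(\barw(B))$ for all $B$ larger than some threshold depending only on $\alpha$ (the data being fixed), and then take $\rho(\alpha)$ to be that threshold.

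For the remaining inequality I would argue term by term in the sum defining $\cR$. For each $i$, writing $y_i\langle\barw(B),\vx_i\rangle = B\,y_i\langle\barw(B)/B,\vx_i\rangle$ and applying Cauchy--Schwarz with $\|\vx_i\|\le1$,
\[
  y_i\langle(1+\alpha)B\baru,\vx_i\rangle - y_i\langle\barw(B),\vx_i\rangle
  = B\Bigl((1+\alpha)y_i\langle\baru,\vx_i\rangle - y_i\bigl\langle\tfrac{\barw(B)}{B},\vx_i\bigr\rangle\Bigr)
  \ge B\Bigl(\alpha\,y_i\langle\baru,\vx_i\rangle - \bigl\|\tfrac{\barw(B)}{B}-\baru\bigr\|\Bigr),
\]
and since $y_i\langle\baru,\vx_i\rangle\ge\bgamma$ by definition of $\bgamma$, this is at least $B\bigl(\alpha\bgamma-\|\barw(B)/B-\baru\|\bigr)$.

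This is where the structure enters. By \Cref{fact:baru_margin} the margin $\bgamma$ is strictly positive, and by the standing assumption of the section $\barw(B)/B\to\baru$; hence there is a threshold $\rho(\alpha)$ beyond which $\|\barw(B)/B-\baru\|\le\alpha\bgamma/2$. For $B=\|\vw\|>\rho(\alpha)$ we then get $y_i\langle(1+\alpha)B\baru,\vx_i\rangle\ge y_i\langle\barw(B),\vx_i\rangle$ for every $i$, so by monotonicity of $\ell$, $\ell\bigl(y_i\langle(1+\alpha)B\baru,\vx_i\rangle\bigr)\le\ell\bigl(y_i\langle\barw(B),\vx_i\rangle\bigr)$ for all $i$, whence $\cR\bigl((1+\alpha)\|\vw\|\baru\bigr)=\cR\bigl((1+\alpha)B\baru\bigr)\le\cR(\barw(B))\le\cR(\vw)$, completing the argument.

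The one real subtlety — and the reason for the detour through $\barw(B)$ — is that the naive route of bounding $\cR\bigl((1+\alpha)\|\vw\|\baru\bigr)\le\ell\bigl((1+\alpha)\|\vw\|\bgamma\bigr)$ and separately lower-bounding $\cR(\vw)$ cannot work: for slowly decaying losses (e.g.\ polynomial tails) and small $\bgamma$, the value $\ell\bigl((1+\alpha)\|\vw\|\bgamma\bigr)$ need not sit below any crude lower bound on $\cR(\vw)$ such as $\tfrac1n\ell(\|\vw\|)$. Exploiting that $\barw(B)$ already nearly points along $\baru$ is exactly what makes the slack factor $1+\alpha$ suffice, uniformly in how small $\bgamma$ is and how slow the tail of $\ell$ is; and only the \emph{existence} of $\lim_{B\to\infty}\barw(B)/B$ is used, not any convergence rate.
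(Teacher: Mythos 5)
Your proposal is correct and follows essentially the same route as the paper's proof: reduce to comparing with $\barw(\|\vw\|)$ via feasibility of $\vw$, then use $\bgamma>0$ (from \Cref{fact:baru_margin}) together with $\barw(B)/B\to\baru$ to show the per-example margins of $(1+\alpha)\|\vw\|\baru$ dominate those of $\barw(\|\vw\|)$, and conclude by monotonicity of $\ell$. The only cosmetic difference is your threshold $\alpha\bgamma/2$ in place of the paper's $\alpha\bgamma$, which changes nothing.
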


To prove \Cref{fact:baru_dec}, first note that by definition
$\cR\big(\barw(\|\vw\|)\big)\le\cR(\vw)$, and thus it is enough to show that
$\cR\big((1+\alpha)\enVert{\vw}\baru\big)\le\cR\big(\barw(\|\vw\|)\big)$.
This is true if for all $1\le i\le n$,
\begin{align*}
    y_i\ip{(1+\alpha)\enVert{\vw}\baru}{\vx_i}\ge y_i\ip{\barw(\|\vw\|)}{\vx_i},\quad\textrm{i.e.,}\quad(1+\alpha)y_i \langle\baru, \vx_i\rangle\ge y_i\ip{\frac{\barw\del{\|\vw\|}}{\|\vw\|}}{\vx_i}.
\end{align*}
Since $y_i\langle\alpha\baru,\vx_i\rangle\ge\alpha\bgamma$ and $\|\vx_i\|\le1$, we
only need to choose $\|\vw\|$ large enough such that
\begin{align*}
    \enVert{\baru-\frac{\barw(\|\vw\|)}{\|\vw\|}}\le\alpha\bgamma.
\end{align*}

Now we are ready to prove the ``only if'' part of \Cref{fact:conv_dir}.
The full proof appears in \Cref{app_sec:sep}, but is a bit cumbersome in our discrete-time
setting; here we will illustrate the idea with the \emph{gradient flow},
meaning $\eta \to 0$ and $\dot \vw_t := \dif \vw_t / \dif t = -\nR(\vw_t)$.
For any $\alpha>0$, due to $\|\vw_t\|\to\infty$ and
\Cref{fact:baru_dec}, we can choose $t_0$ large enough so that
$\cR\del{(1+\alpha)\|\vw_t\| \baru} \leq \cR(\vw_t)$ for all $t\geq t_0$.
By convexity,
\[
  0 \geq \cR\big((1+\alpha)\|\vw_t\| \baru\big) - \cR(\vw_t)
  \geq
\ip{\dot \vw_t}{\vw_t - (1+\alpha)\|\vw_t\|\baru},
\]
which rearranges to
\[
  \ip{\dot \vw_t}{\baru}
  \geq
  \del{\frac 1{1+\alpha}}\ip{\dot \vw_t}{\frac {\vw_t}{\|\vw_t\|}}
  =
  \del{\frac 1{1+\alpha}} \frac {\dif}{\dif t}{\|\vw_t\|}.
\]
For any $t_1 \geq t_0$, integrating both sides along $[t_0,t_1]$ gives
\[
  \ip{\vw_{t_1} - \vw_{t_0}}{\baru}
  = \ip{\int_{t_0}^{t_1} \dot\vw_t\dif t}{\baru}
  \geq
  \del{\frac 1 {1+\alpha}} \int_{t_0}^{t_1} \frac \dif {\dif t}\|\vw_t\|\dif t
  =
  \frac {\|\vw_{t_1}\| - \|\vw_{t_0}\|} {1+\alpha}.
\]
Dividing both sides by $\|\vw_{t_1}\|$ and applying $\liminf_{t_1\to\infty}$,
since $\liminf_{t_1\to\infty} \vw_{t_0}/\|\vw_{t_1}\| = 0$,
\[
  \liminf_{t_1\to\infty} \ip{\frac {\vw_{t_1}}{\|\vw_{t_1}\|}}{\baru}
  =
  \liminf_{t_1\to\infty} \ip{\frac {\vw_{t_1}-\vw_{t_0}}{\|\vw_{t_1}\|}}{\baru}
  \geq
  \liminf_{t_1\to\infty}
  \frac{\|\vw_{t_1}\| - \|\vw_{t_0}\|}{(1+\alpha)\|\vw_{t_1}\|}
  = \frac 1 {1+\alpha}.
\]
Since $\alpha > 0$ was arbitrary, the ``only if'' part of \Cref{fact:conv_dir} is complete.

\subsection{What does the regularization path converge to?}\label{sec:sep_dir}

\Cref{fact:conv_dir} says that the gradient-descent path and regularization path
converge to the same direction if either of them converges to a direction.
Moreover, the regularization path is independent of the optimization algorithm,
and thus easier to study.
Here are some examples where $\barw(B)/B$ converges.\looseness=-1

A classical example is that if the loss has an exponential tail, then the
regularization path converges to the maximum-margin direction (see
\citealp{rosset}, for the case of $\ell_1$ regularization).
\begin{proposition}\label[proposition]{fact:exp_baru}
    If for some $a,b>0$,
    \begin{align}\label{eq:exp_tail}
        \lim_{z\to\infty}\frac{\ell(z)}{a\exp(-bz)}=1,
    \end{align}
    then $\lim_{B\to\infty}\barw(B)/B=\hu$, where $\hu$ is the unique maximum
    margin solution.
\end{proposition}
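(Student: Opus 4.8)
The plan is a sandwich argument on the empirical risk: $\barw(B)$ can do no worse than the scaled maximum-margin predictor $B\hu$, and together with the exponential tail this pins the margin of the normalized regularized solution to $\hgamma$ in the limit; uniqueness of the maximum-margin direction then upgrades convergence of the margin to convergence of the direction itself.

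First I would record two routine preliminaries. Since $\ell$ is convex its derivative is nondecreasing, and since $\ell$ is strictly decreasing to $0$ we must in fact have $\ell'<0$ everywhere (a point $z_0$ with $\ell'(z_0)=0$ would force $\ell'\equiv0$ on $[z_0,\infty)$ and hence $\ell$ constant there, contradicting strict monotonicity). Consequently $\ip{\nR(\vw)}{\hu}=\frac1n\sum_i\ell'\del{\ip{\vw}{y_i\vx_i}}\,y_i\ip{\hu}{\vx_i}<0$ for every $\vw$, because $y_i\ip{\hu}{\vx_i}\ge\hgamma>0$; thus $\nR$ never vanishes and the $\ell_2$ constraint in \cref{eq:reg} is always active, i.e.\ $\|\barw(B)\|=B$. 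Second, the maximum-margin unit vector $\hu$ is unique: two distinct unit vectors of margin $\hgamma$ would have a suitably renormalized average of margin strictly greater than $\hgamma$. Writing $\bgamma_B:=\min_i y_i\ip{\barw(B)/B}{\vx_i}\le\hgamma$ for the margin of the normalized regularized solution, it therefore suffices to show $\bgamma_B\to\hgamma$.

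For the sandwich, optimality of $\barw(B)$ and monotonicity of $\ell$ give
\[
  \cR\del{\barw(B)}\le\cR(B\hu)\le\ell(B\hgamma),
  \qquad\text{while}\qquad
  \cR\del{\barw(B)}\ge\frac1n\,\ell(B\bgamma_B),
\]
the second bound by keeping only the smallest-margin summand. Hence $\ell(B\bgamma_B)\le n\,\ell(B\hgamma)$, and since the right-hand side tends to $0$ while $\ell$ is continuous and strictly decreasing to $0$, this already forces $B\bgamma_B\to\infty$. Now I would feed in the exponential tail \cref{eq:exp_tail}: for any $\eps\in(0,1)$ there is $Z$ with $(1-\eps)ae^{-bz}\le\ell(z)\le(1+\eps)ae^{-bz}$ once $z\ge Z$, and for all large $B$ both $B\hgamma$ and $B\bgamma_B$ exceed $Z$, so $(1-\eps)ae^{-bB\bgamma_B}\le n(1+\eps)ae^{-bB\hgamma}$; taking logarithms and dividing by $bB$ yields $\bgamma_B\ge\hgamma-\frac1{bB}\ln\frac{n(1+\eps)}{1-\eps}$. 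Thus $\liminf_{B\to\infty}\bgamma_B\ge\hgamma$, and combined with $\bgamma_B\le\hgamma$ this gives $\bgamma_B\to\hgamma$.

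To pass from margins to directions, note the unit vectors $\barw(B)/B$ have margins tending to $\hgamma$, so by compactness of the sphere and continuity of $\vu\mapsto\min_i y_i\ip{\vu}{\vx_i}$ every subsequential limit is a unit vector of margin $\hgamma$, hence equals $\hu$; since every subsequence admits a further subsequence converging to $\hu$, the whole path converges and $\lim_{B\to\infty}\barw(B)/B=\hu$. I expect the main obstacle to be the careful handling of the exponential-tail estimate — in particular establishing $B\bgamma_B\to\infty$ before the two-sided bound may be applied at both endpoints — while the active-constraint and uniqueness observations are standard.
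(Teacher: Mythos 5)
Your proof is correct and takes essentially the same route as the paper: both compare $\cR\del{\barw(B)}$ against $\cR(B\hu)\le\ell(B\hgamma)$, lower-bound the former by the worst-margin term, and use the exponential tail plus uniqueness of $\hu$ to force the normalized solution toward the maximum-margin direction. Your version merely packages it forward (an explicit margin rate $\bgamma_B\ge\hgamma-O(\ln n/B)$ followed by a compactness step) rather than by contradiction, and adds the routine check that the constraint is active.
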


We also prove that if the loss has a polynomial tail, then the regularization
path converges to a direction.
\begin{proposition}\label[proposition]{fact:poly_baru}
    If for some $a,b>0$,
    \begin{align}\label{eq:poly_tail}
        \lim_{z\to\infty}\frac{-\ell'(z)}{az^{-b}}=1,
    \end{align}
    then $\lim_{B\to\infty}\barw(B)/B$ exists.
\end{proposition}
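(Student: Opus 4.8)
The plan is to establish convergence of $\barw(B)/B$ by exploiting the first-order optimality condition in collinear form, eq.~\eqref{eq:barw_col}, together with an explicit asymptotic analysis of the loss derivative $-\ell'(z)\sim az^{-b}$. The key observation is that for a polynomially-tailed loss, the optimality condition forces the data points to split into two groups: those on which $\barw(B)$ places margin of the "correct" order, and those with asymptotically larger margin whose contribution to $\nR(\barw(B))$ is negligible. Concretely, if $\barw(B)/B$ has margin $\bgamma>0$ (which we know is positive by \Cref{fact:baru_margin}), then $\ip{\barw(B)}{y_i\vx_i}$ grows linearly in $B$ for every $i$, so $-\ell'\del{\ip{\barw(B)}{y_i\vx_i}}$ decays like $B^{-b}$ times a constant depending on $\ip{\barw(B)/B}{y_i\vx_i}$.

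First, I would rescale: multiply eq.~\eqref{eq:barw_col} through by $B^{b}/(a n^{-1}\sum_i(\cdots))$ to normalize the gradient. Define the "active set" $S(B)\subseteq\{1,\dots,n\}$ to consist of indices achieving (up to $1+o(1)$) the minimum of $\ip{\barw(B)}{y_i\vx_i}$; on $S(B)$ the loss-derivative terms dominate, and off $S(B)$ they are lower-order. Writing $\vw = \barw(B)/B$, the collinearity condition says $-\vw$ is a positive multiple of the gradient direction, i.e.~$\vw \propto \sum_i \beta_i(B)\, y_i\vx_i$ where $\beta_i(B) = -\ell'\del{B\ip{\vw}{y_i\vx_i}} \big/ \sum_j -\ell'\del{B\ip{\vw}{y_j\vx_j}}$ are convex weights summing to $1$. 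Passing to a subsequence $B_k\to\infty$ along which $\barw(B_k)/B_k$ converges (possible by compactness of the unit sphere) and along which the combinatorial type of $S(B_k)$ and the ordering of the margins $\ip{\vw}{y_i\vx_i}$ stabilize, I would show the limit point $\baru$ is characterized as the unique solution of a fixed-point / KKT-type system determined by the active set and the exponent $b$. The main work is then to argue this limiting system has a \emph{unique} solution independent of the subsequence, which forces full convergence.

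For uniqueness I would relate $\baru$ to a strictly convex variational problem. The natural candidate: $\baru$ is the direction of the minimizer of $\sum_{i} \del{\ip{\vw}{y_i\vx_i}}^{1-b}$ (for $b>1$) or a logarithmic variant (for $b=1$), restricted to the separating cone, since the polynomial tail makes the rescaled risk behave like a power of the margins; the Hessian of such a function on the relevant convex set is positive definite in the directions that matter, giving a unique minimizing direction. Alternatively, one can mimic the proof structure of \Cref{fact:exp_baru}: there the exponential tail makes $\barw(B)/B$ converge to $\hu$ because only the minimum-margin points survive in the limit; here, more points may survive, but the surviving configuration is still pinned down by the optimality condition, and the strict monotonicity of $z\mapsto -\ell'(z)$ (together with the power-law asymptotics) prevents two distinct limiting directions from both satisfying it.

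The hard part will be \textbf{ruling out oscillation of $\barw(B)/B$ between distinct limit directions}, i.e.~upgrading subsequential convergence to genuine convergence. Unlike the exponential case, the limiting KKT system here depends on which data points are "active," and a priori this active set could itself oscillate as $B\to\infty$; one must show that any two subsequential limits $\baru_1,\baru_2$ would yield conflicting collinearity relations. I expect this to follow from a monotonicity argument: if $\baru_1\neq\baru_2$, then along the way $\barw(B)/B$ must cross the "bisecting" region, where one can derive a contradiction with eq.~\eqref{eq:barw_col} because the gradient direction there cannot be antiparallel to $\barw(B)$. Establishing this cleanly — handling the boundary case $b=1$ separately, and controlling the lower-order off-active-set terms uniformly — is the technical crux; everything else (compactness, the chain-rule identities \eqref{eq:barw_col_lhs}--\eqref{eq:barw_col_rhs}, and the power-law estimates on $-\ell'$) is routine.
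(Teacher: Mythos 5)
Your proposal has a genuine gap at exactly the point you flag as the crux: ruling out oscillation between distinct subsequential limits is not a detail you may ``expect'' to work out, it is the entire content of the proposition, and neither the fixed-point/KKT system ``determined by the active set'' nor the bisecting-region monotonicity argument is actually constructed. The step that would connect a subsequential limit of $\barw(B_k)/B_k$ to the minimizer of your candidate variational problem $\sum_i\del{\ip{\vu}{y_i\vx_i}}^{1-b}$ requires a quantitative comparison, uniform along the path, between the true loss and its power-law asymptote; without it, the $o(1)$ errors in $-\ell'$ can a priori reshuffle which direction is optimal at each radius, which is precisely the oscillation you must exclude (\Cref{fact:no_baru} shows such reshuffling genuinely prevents convergence when the tail is not pinned down). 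Two further slips: citing \Cref{fact:baru_margin} for margin positivity is circular, since that lemma is stated under the standing assumption that $\lim_{B\to\infty}\barw(B)/B$ exists, which is what you are proving (a non-circular substitute is $\cR\del{\barw(B)}\le\ell(B\hgamma)$, which already forces all unnormalized margins to grow linearly in $B$); and there is no $b\le1$ boundary case, because $\ell\ge0$ decreasing to $0$ gives $\ell(z)=\int_z^\infty-\ell'(u)\dif u<\infty$, so the tail condition forces $b>1$, a fact the paper notes at the outset. Also, the active-set picture is misleading for polynomial tails: every example whose margin grows linearly contributes at the same order $B^{-b}$, so no subset of points becomes negligible; this is exactly why the limit need not be $\hu$.

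For comparison, the paper avoids subsequences and active sets altogether. It introduces a surrogate loss $\tilde\ell$ with exactly $-\tilde\ell'(z)=az^{-b}$ for $z\ge1$ and observes, via the collinearity characterization of \Cref{fact:barw_collinear}, that once all margins of the constrained minimizer exceed $1$, the factor $B^{-b}$ cancels in the normalized gradient, so the regularization path of $\widetilde{\cR}$ is an exact ray $B\baru$ for all large $B$. It then shows the normalized gradient weights $q_i(B)$ and $\tilde q_i(B)$ of $\cR$ and $\widetilde{\cR}$ at $\barw(B)$ coincide asymptotically (the normalization is stable because any convex combination of the $y_i\vx_i$ has norm at least $\hgamma>0$), and a one-line convexity argument forces $\barw(B)/B\to\baru$: otherwise $\ip{\nabla\widetilde{\cR}\del{\barw(B)}}{B\baru-\barw(B)}>0$ would give $\widetilde{\cR}(B\baru)>\widetilde{\cR}\del{\barw(B)}$, contradicting optimality of $B\baru$ for $\widetilde{\cR}$ on the ball of radius $B$. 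Supplying an argument of this kind---an exact scale-invariant comparison object plus a uniform gradient comparison---is what your outline is missing.
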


However, while an exponentially-tailed loss (cf.~eq.~\ref{eq:exp_tail}) always
induces the maximum-margin direction, a polynomially-tailed loss (cf.~eq.~\ref{eq:poly_tail})
may induce a different direction:
\begin{proposition}\label[proposition]{fact:baru_margin_ub}
    For any $b>0$, consider a loss function $\ell$ which equals $z^{-b}$ for
    $z\ge1$.
    There exists a dataset on which the maximum margin is a universal constant,
    while the regularization path with $\ell$ converges to a direction which has
    margin $\Theta(n^{-1/(b+1)})$.
\end{proposition}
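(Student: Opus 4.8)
The plan is to exhibit, for each sufficiently large $n$, a simple two-atom dataset in $\R^2$ on which the $n-1$ "heavy" points drag the regularization path right up against a face of the feasible cone. Set $\vv_1 := (1,0)$ and $\vv_2 := (\cos 120^\circ,\sin 120^\circ)=(-1/2,\sqrt 3/2)$, and take the dataset to consist of one point with $y\vx=\vv_1$ together with $n-1$ points with $y\vx=\vv_2$ (say all labels $+1$, with $\vx_1=\vv_1$ and $\vx_2=\dots=\vx_n=\vv_2$; if distinct points are desired one may instead spread the latter over a ball of radius $n^{-2}$ around $\vv_2$, which changes nothing below). Since $\vv_1,\vv_2$ are unit vectors at angle $120^\circ$, the max-margin direction is their bisector $\hu=(\cos 60^\circ,\sin 60^\circ)$, with $\hgamma=\cos 60^\circ=1/2$, a universal constant independent of $n$. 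The conceptual point is that the $\ell$-loss-optimal direction for the $n-1$ copies of $\vv_2$ alone is $\vv_2$ itself, which misclassifies $\vv_1$ (note $\ip{\vv_1}{\vv_2}=-1/2<0$); since those heavy points dominate, the regularization path is pushed as far as it can toward $\vv_2$ while still keeping $\ip{\cdot}{\vv_1}>0$, i.e.\ it barely classifies $\vv_1$.

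First I would record that the regularization path converges to a direction $\baru$ by \Cref{fact:poly_baru} (condition \eqref{eq:poly_tail} holds since $-\ell'(z)=bz^{-b-1}$ for $z\ge 1$), that $\baru$ must satisfy the collinearity condition \eqref{eq:barw_col}, and that for $B$ large enough that $B\ip{\baru}{\vv_i}\ge 1$ for $i=1,2$ the scaled solution $\barw(B)/B$ in fact minimizes $\vu\mapsto (n-1)\ip{\vu}{\vv_2}^{-b}+\ip{\vu}{\vv_1}^{-b}$ over unit $\vu$ (this is the usual consequence of the first-order conditions once every loss term is in the pure power-law regime). I would then parametrize a unit vector in the feasible cone as $\vu(\phi):=(\sin\phi,\cos\phi)$ for $\phi\in(0^\circ,60^\circ)$, so that $\ip{\vu(\phi)}{\vv_1}=\sin\phi$ and $\ip{\vu(\phi)}{\vv_2}=\cos(30^\circ+\phi)$, and the objective $F(\phi):=(n-1)\cos(30^\circ+\phi)^{-b}+(\sin\phi)^{-b}$ blows up at both endpoints. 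A one-line monotonicity check shows $F'(\phi)=b\bigl[(n-1)\sin(30^\circ+\phi)\cos(30^\circ+\phi)^{-b-1}-\cos\phi\,(\sin\phi)^{-b-1}\bigr]$ is strictly increasing on $(0^\circ,60^\circ)$, hence $F$ has a unique critical point $\phi^\ast$, so $\baru=\vu(\phi^\ast)$.

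It then remains to estimate $\phi^\ast$ from $F'(\phi^\ast)=0$, namely $(n-1)\sin(30^\circ+\phi^\ast)\cos(30^\circ+\phi^\ast)^{-b-1}=\cos\phi^\ast\,(\sin\phi^\ast)^{-b-1}$. Bounding the left side below by $(n-1)\sin 30^\circ(\cos 30^\circ)^{-b-1}$ (both factors are monotone on $(0^\circ,60^\circ)$) and the right side above by $(\sin\phi^\ast)^{-b-1}$ yields $\sin\phi^\ast\le \cos 30^\circ\,\del{2/(n-1)}^{1/(b+1)}=O(n^{-1/(b+1)})$; in particular $\phi^\ast\to 0$, so for large $n$ one has $\phi^\ast<30^\circ$, whence $\cos(30^\circ+\phi^\ast)^{-b-1}<2^{b+1}$ and $\cos\phi^\ast>1/2$, giving $\tfrac12(\sin\phi^\ast)^{-b-1}<2^{b+1}(n-1)$, i.e.\ $\sin\phi^\ast\ge \del{2^{b+2}(n-1)}^{-1/(b+1)}=\Omega(n^{-1/(b+1)})$ (alternatively the lower bound is the general estimate behind \Cref{fact:baru_margin} and its remark). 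Since $\ip{\baru}{\vv_1}=\sin\phi^\ast\to 0$ while $\ip{\baru}{\vv_2}=\cos(30^\circ+\phi^\ast)\to\cos 30^\circ$, the margin of $\baru$ equals $\sin\phi^\ast=\Theta(n^{-1/(b+1)})$, as claimed.

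The step I expect to be the main obstacle is the bookkeeping in the second paragraph: rigorously identifying the limit direction of the regularization path with the unique stationary direction $\baru$, rather than merely asserting it satisfies the first-order condition. This requires controlling the non--power-law part of $\ell$ on $(-\infty,1)$, i.e.\ showing that for all large $B$ the minimizer of $\cR$ over $\cbr{\|\vw\|\le B}$ already lies in the region where every margin $\ip{\vw}{\vv_i}$ exceeds $1$, so that the pure power-law reduction above is valid. This is exactly the kind of argument underlying \Cref{fact:poly_baru}, so it should transfer with little extra work; everything else is elementary planar trigonometry and monotonicity.
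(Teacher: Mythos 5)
Your proposal is correct and takes essentially the same route as the paper: a planar dataset with $n-1$ identical ``heavy'' points and one outlying point, the limit direction pinned down by the first-order/power-law characterization underlying \Cref{fact:poly_baru}, and elementary asymptotics for the resulting stationarity equation (the paper uses the points $(0.1,0.1)$ and $(0.6,-0.8)$ and the collinearity identity $\baru\propto\sum_i\del{y_i\ip{\baru}{\vx_i}}^{-(b+1)}y_i\vx_i$ in place of your angular objective $F$, but these are the same condition). The one step you flag---that $\barw(B)$ eventually lies in the regime where every margin is at least $1$---closes easily, e.g.\ by noting $\cR\del{\barw(B)}\le\cR(B\hu)\le\ell(B\hgamma)<\ell(1)/n$ for all large $B$, so it is not a genuine gap.
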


Lastly, note that directional convergence should not be taken for granted:
we can construct a loss function which satisfies all the
conditions in \Cref{fact:conv_dir} (i.e., convexity, monotonicity and
eq.~\ref{eq:gd_risk}) for which $\barw(B)/B$ does not converge.
The constructed loss switches between $\exp(-z)$ and
$1/z$ countably infinitely often, with the switching locations
chosen carefully so that $\barw(B)/B$ continually oscillates.
\begin{proposition}\label[proposition]{fact:no_baru}
    There exists a loss function $\ell$ which is convex, strictly decreasing to
    $0$ and $2$-smooth for which $\barw(B)/B$ does not converge.
\end{proposition}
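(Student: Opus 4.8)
The plan is to build $\ell$ as a countable patchwork of arcs that alternates between pieces equal on their domain to (an affine rescaling of) the exponential loss $e^{-z}$ and pieces equal to the reciprocal loss $1/z$, placed at a very rapidly increasing sequence of scales. These two building blocks are chosen because they behave oppositely under $\ell_2$ regularization: by \Cref{fact:exp_baru} the exponential loss sends the regularization path to the maximum-margin direction $\hu$, whereas by \Cref{fact:baru_margin_ub} (with $b=1$) there is a dataset on which the reciprocal loss sends the regularization path to a direction $\baru$ of margin $\Theta(n^{-1/2})$, which for $n$ large is strictly below the constant maximum margin, so $\baru\ne\hu$. I fix such a dataset once and for all and set $\epsilon:=\tfrac13\enVert{\hu-\baru}>0$. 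It then suffices to produce one loss --- convex, strictly decreasing to $0$, and $2$-smooth --- along whose regularization path $\barw(B)/B$ returns within $\epsilon$ of $\hu$ for arbitrarily large $B$ and within $\epsilon$ of $\baru$ for arbitrarily large $B$, which rules out convergence.

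For the construction I pick scales $1\ll\zeta_1<\zeta_2<\cdots\to\infty$ whose consecutive ratios $\zeta_{k+1}/\zeta_k$ are as large as needed. On $(-\infty,\zeta_1]$ take $\ell(z)=e^{-z}$; on each ``slow window'' $[\zeta_{2k-1},\zeta_{2k}]$ take $\ell$ to equal $a_k/z$ off a short initial transition arc, and on each ``fast window'' $[\zeta_{2k},\zeta_{2k+1}]$ take $\ell$ to equal $c_ke^{-z}$ off a short initial transition arc; the positive constants $a_k,c_k$ are pinned down by continuity, and the transition arcs --- needed precisely because a reciprocal arc and an exponential arc cannot be joined $C^1$ --- are any convex, strictly decreasing $C^1$ interpolations with $0<\ell''\le2$, which exist with room to spare because each window can be made arbitrarily long. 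Since $\ell(\zeta_k)\to0$, the smoothness bounds $c_ke^{-z}\le2$ and $2a_k/z^3\le2$ on the respective windows hold automatically for all large $k$, and with mild care at the bottom the resulting $\ell$ satisfies every hypothesis of the proposition.

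The analysis hinges on a locality fact: $\barw(B)$ depends only on the restriction of $\ell$ to the range of activations $\cbr{y_i\ip{\barw(B)}{\vx_i}}_i$, and at well-chosen scales this range is trapped inside a single window. Concretely, take $B=\zeta_{2k+1}$, the right endpoint of a fast window. All activations of $\barw(B)$ are at most $\enVert{\barw(B)}\le B$. For a lower bound I compare with $B\hu$: its activations lie in $[B\hgamma,B]$, which sits inside the $c_ke^{-z}$ part of the window once the window ratio exceeds $2/\hgamma$, so $\cR(\barw(B))\le\cR(B\hu)\le c_ke^{-B\hgamma}$; on the other hand, writing $\bgamma(B)$ for the margin of $\barw(B)$, its largest loss term is at least $\tfrac1n\ell\del{B\bgamma(B)}$, so if $B\bgamma(B)$ fell below the start of the $c_ke^{-z}$ part then $\tfrac1n c_k e^{-\zeta_{2k}-\delta_k}\le c_ke^{-B\hgamma}$, i.e. $B\hgamma-\zeta_{2k}-\delta_k\le\ln n$, which is impossible at $B=\zeta_{2k+1}$ for large $k$ since $\zeta_{2k+1}\hgamma-\zeta_{2k}\to\infty$. (This is the pointwise form of the argument behind \Cref{fact:baru_margin}.) Hence all activations of $\barw(B)$ lie in the $c_ke^{-z}$ part; the same estimate applied to the pure exponential loss $e^{-z}$, whose regularized solution $\barw_{\exp}(B)$ has margin at least $\hgamma-\ln n/B$, puts its activations there too. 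Since on that region $\cR$ coincides with $c_k$ times the exponential risk $\cR_{\exp}(\vw)=\tfrac1n\sum_i e^{-y_i\ip{\vw}{\vx_i}}$, and $c_k\cR_{\exp}$ has the same minimizers as $\cR_{\exp}$, a two-line comparison of risk values at $\barw(B)$ and $\barw_{\exp}(B)$ --- using uniqueness of the regularized solution when the infimum is unattained --- forces $\barw(B)=\barw_{\exp}(B)$. Therefore $\barw(\zeta_{2k+1})/\zeta_{2k+1}=\barw_{\exp}(\zeta_{2k+1})/\zeta_{2k+1}$, which is within $\epsilon$ of $\hu$ for large $k$ by \Cref{fact:exp_baru}. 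The identical argument at $B=\zeta_{2k}$, using \Cref{fact:poly_baru} and \Cref{fact:baru_margin_ub} for convergence of the reciprocal-loss regularization path to $\baru$, shows $\barw(\zeta_{2k})/\zeta_{2k}$ is within $\epsilon$ of $\baru$ for large $k$. Thus $\barw(B)/B$ has two distinct subsequential limits and does not converge.

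I expect the main obstacle to be this locality step: $\barw(B)$ is a priori a global object, and confining its activations inside one window is where the margin lower bound (the \Cref{fact:baru_margin} technique) and the rapid growth of the scales $\zeta_k$ must be combined carefully. The secondary, routine-but-fiddly point is assembling the infinitely many arcs into a single globally $C^1$, convex, strictly-decreasing-to-$0$, $2$-smooth loss --- in particular designing the transition arcs joining reciprocal and exponential pieces --- which is manageable only because each window may be taken as long as desired.
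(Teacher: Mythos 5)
Your proposal is correct and follows essentially the same route as the paper: the same dataset from \Cref{fact:baru_margin_ub}, a loss that alternates between exponential and reciprocal pieces at rapidly growing scales, and a locality/margin argument identifying $\barw(B)$ with $\barw_{\exp}(B)$ or $\barw_{\textrm{recip}}(B)$ at suitably chosen radii, so that $\barw(B)/B$ oscillates between neighborhoods of $\hu$ and of $\baru\ne\hu$. The only differences are implementation details: you rescale the pieces by constants $a_k,c_k$ and assert the existence of $2$-smooth convex transition arcs, whereas the paper keeps the tails exactly $e^{-z}$ and $1/z$ and constructs the transitions explicitly in its switching lemma (\Cref{fact:switch}).
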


The proofs of all results in this subsection are given in
\Cref{app_sec:sep_dir}.

\section{Convergence to a direction for the general case} \label{sec:non_sep}

In this section, we extend the preceding results to the general case of an
arbitrary training set, that might or might not be linearly separable.
The main idea is to first partition the dataset into a separable part and a
nonseparable part using the decomposition studied by \citet{riskparam} (cf.
\Cref{fact:struct} below).
Then we prove (subject to the conditions below) that the gradient-descent path
and regularization path are strongly coupled in a highly-refined sense: (1) On
the space spanned by the nonseparable part of the dataset, convergence of both
gradient descent and the regularization path is to the same unique finite point.
(2) On the space perpendicular to the nonseparable part, as in the fully
separable case, the gradient-descent path and regularization path converge to the same
direction (if either converges to a direction).

Here we define the decomposition formally.
Given a dataset $D=\{(\vx_i,y_i)\}_{i=1}^n$, we decompose it into $D_s\cup D_c$
in the following way.
For each data example $(\vx_i,y_i)$, if there exists a unit vector $\vu$ such that
$y_i \langle \vu,\vx_i\rangle>0$ and $y_j \langle \vu,\vx_j\rangle\ge0$ for all
$1\le j\le n$, then we include $(\vx_i,y_i)$ into $D_c$, otherwise we include it
into $D_s$. (The mnemonic is \emph{``s''} for strongly-convex (as justified below) and
\emph{``c''} for its complement.)
Define
\begin{align*}
    \cR_s(\vw):=\frac{1}{n}\sum_{(\vx_i,y_i)\in D_s}^{}\ell\del{y_i \langle \vw,\vx_i\rangle},\quad\textrm{and}\quad \cR_c(\vw):=\frac{1}{n}\sum_{(\vx_i,y_i)\in D_c}^{}\ell\del{y_i \langle \vw,\vx_i\rangle},
\end{align*}
and note that $\cR=\cR_s+\cR_c$.
Further define $S:=\SPAN\big(\{\vx_i:(\vx_i,y_i)\in D_s\}\big)$, and let $\Pi_S$
denote the projection onto $S$, and $\Pip$ denote the projection onto $S^\perp$.
Given $\vw\in\R^d$, let $\vw_S:=\Pi_S\vw$ and $\vw_\perp:=\Pip \vw$.
\begin{lemma}\citep[Theorem 2.1]{riskparam}\label[lemma]{fact:struct}
    The above decomposition satisfies the following properties.
    \begin{enumerate}[label=(\arabic*)]
        \item If $\ell$ is twice continuously differentiable with $\ell''>0$,
        then $\cR_s$ has compact sublevel sets over $S$, is strongly convex over
        compact subsets of $S$, and therefore has a unique minimizer $\barv$
        over~$S$.\looseness=-1

        \item $D_c$ can be linearly separated in $S^\perp$, meaning that there
        exists a unit vector $\vu\in S^\perp$ and some $\gamma>0$, such that
        $y_i \langle \vu,\vx_i\rangle\ge\gamma$ for all $(\vx_i,y_i)\in D_c$.
    \end{enumerate}
\end{lemma}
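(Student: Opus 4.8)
The plan is to establish part (2) first, since the separating vector it produces is also the crucial ingredient for part (1). Part (2) is vacuous if $D_c=\emptyset$, so assume $D_c\neq\emptyset$. For each $(\vx_i,y_i)\in D_c$ the definition of $D_c$ supplies a unit vector $\vu_i$ with $y_i\langle\vu_i,\vx_i\rangle>0$ and $y_j\langle\vu_i,\vx_j\rangle\ge0$ for every $j$, and I would take $\vu':=\sum_{(\vx_i,y_i)\in D_c}\vu_i$. Summing the constraints gives $y_j\langle\vu',\vx_j\rangle\ge0$ for all $j$, while for $(\vx_k,y_k)\in D_c$ the $k$-th summand is strictly positive and the rest nonnegative, so $y_k\langle\vu',\vx_k\rangle>0$; in particular $\vu'\neq0$. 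The key observation is $\vu'\in S^\perp$: if $y_j\langle\vu',\vx_j\rangle>0$ for some $(\vx_j,y_j)\in D_s$, then $\vu'/\|\vu'\|$ witnesses that $(\vx_j,y_j)$ satisfies the defining condition of $D_c$, contradicting $D_s\cap D_c=\emptyset$; hence $\langle\vu',\vx_j\rangle=0$ for every $(\vx_j,y_j)\in D_s$, i.e.\ $\vu'\perp S$. Normalizing, $\vu:=\vu'/\|\vu'\|\in S^\perp$ separates $D_c$ with margin $\gamma:=\min_{(\vx_k,y_k)\in D_c}y_k\langle\vu,\vx_k\rangle>0$, a minimum of finitely many positive numbers. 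This proves (2); it also records the unnormalized $\vu'\in S^\perp$ with $y_j\langle\vu',\vx_j\rangle\ge0$ for all $j$ and $y_j\langle\vu',\vx_j\rangle\ge\delta:=\|\vu'\|\gamma>0$ for $(\vx_j,y_j)\in D_c$.

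The core of part (1) is the combinatorial claim: every nonzero $\vw\in S$ has $y_i\langle\vw,\vx_i\rangle<0$ for some $(\vx_i,y_i)\in D_s$. Suppose not, so $y_i\langle\vw,\vx_i\rangle\ge0$ for all $(\vx_i,y_i)\in D_s$. These inner products cannot all vanish: $\vw\in S=\SPAN\{\vx_i:(\vx_i,y_i)\in D_s\}$ and $\vw$ orthogonal to every such $\vx_i$ would force $\vw\perp S\ni\vw$, hence $\vw=0$. So fix $(\vx_{i^*},y_{i^*})\in D_s$ with $y_{i^*}\langle\vw,\vx_{i^*}\rangle>0$, and set $\vu:=\vw+\lambda\vu'$ for a large constant $\lambda>0$ (with $\vu'$ from part (2); if $D_c=\emptyset$ just take $\vu:=\vw$). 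Since $\vu'\in S^\perp$ and $\vx_i\in S$ whenever $(\vx_i,y_i)\in D_s$, on $D_s$ we have $y_i\langle\vu,\vx_i\rangle=y_i\langle\vw,\vx_i\rangle$, so $\ge0$ there and $>0$ at $i^*$; on $D_c$, $y_j\langle\vu,\vx_j\rangle=y_j\langle\vw,\vx_j\rangle+\lambda\, y_j\langle\vu',\vx_j\rangle\ge-\|\vw\|+\lambda\delta\ge0$ as soon as $\lambda\ge\|\vw\|/\delta$. Thus $\vu\neq0$ and $\vu/\|\vu\|$ witnesses $(\vx_{i^*},y_{i^*})\in D_c$ --- contradiction.

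Granting the claim, part (1) follows by routine convex analysis. Coercivity of $\cR_s$ on $S$: along any $\vw^{(k)}\in S$ with $\|\vw^{(k)}\|\to\infty$, pass to a subsequence with $\vw^{(k)}/\|\vw^{(k)}\|\to\vv$, a unit vector in $S$; the claim gives $(\vx_{i^*},y_{i^*})\in D_s$ with $y_{i^*}\langle\vv,\vx_{i^*}\rangle<0$, hence $y_{i^*}\langle\vw^{(k)},\vx_{i^*}\rangle\to-\infty$, and since a convex strictly decreasing loss satisfies $\ell(z)\to\infty$ as $z\to-\infty$ (and $\ell\ge0$), $\cR_s(\vw^{(k)})\ge\tfrac1n\ell\del{y_{i^*}\langle\vw^{(k)},\vx_{i^*}\rangle}\to\infty$. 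So $\cR_s$ has compact sublevel sets over $S$. For strong convexity over a compact $K\subset S$: the restricted Hessian at $\vw$ is $\tfrac1n\sum_{(\vx_i,y_i)\in D_s}\ell''\del{y_i\langle\vw,\vx_i\rangle}\vx_i\vx_i^\T$, and on $K$ the arguments $y_i\langle\vw,\vx_i\rangle$ lie in a bounded interval on which continuity and positivity of $\ell''$ give a uniform $c>0$, so for $\vv\in S$ the quadratic form is at least $\tfrac cn\sum_{(\vx_i,y_i)\in D_s}\langle\vv,\vx_i\rangle^2\ge\tfrac{c\lambda^*}{n}\|\vv\|^2$, where $\lambda^*>0$ is the least eigenvalue on $S$ of $\sum_{(\vx_i,y_i)\in D_s}\vx_i\vx_i^\T$ (positive since $\vv\mapsto\sum\langle\vv,\vx_i\rangle^2$ is continuous and strictly positive on the compact unit sphere of $S$). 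Finally, compact sublevel sets force a minimizer $\barv$ over $S$ to exist, and strict convexity forces it to be unique.

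The main obstacle is the combinatorial claim in part (1): it is what makes $\cR_s$ coercive, and it is \emph{not} a purely linear-algebraic fact about $S$ alone, because the definition of $D_s$ involves all of the data. A nonzero $\vw\in S$ that weakly separates $D_s$ can only be ruled out by enlarging it with the global separator $\vu'$ of $D_c$ from part (2), which lives in $S^\perp$ and therefore leaves every $D_s$ constraint (supported in $S$) untouched while pushing every $D_c$ constraint strictly positive. Once the claim is in hand, everything else is standard.
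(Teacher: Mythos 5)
Your proof is correct. The paper does not prove \Cref{fact:struct} itself---it is imported verbatim from \citet{riskparam} (their Theorem 2.1)---and your argument is essentially the standard one behind that result: average the per-example witness vectors to obtain $\vu'$, observe that $\vu'$ must annihilate every $D_s$ example (else that example would itself qualify for $D_c$), hence $\vu'\in S^\perp$ separates $D_c$ with positive margin; then, for part (1), rule out any nonzero $\vw\in S$ that weakly separates $D_s$ by perturbing it to $\vw+\lambda\vu'$ with $\lambda$ large, which is exactly the step a purely ``within-$S$'' argument would miss since membership in $D_s$ is a condition on the whole dataset; coercivity of $\cR_s$ over $S$ and the restricted-Hessian lower bound (uniform positivity of $\ell''$ on the compact range of margins, plus the least eigenvalue of $\sum_{(\vx_i,y_i)\in D_s}\vx_i\vx_i^\T$ on $S$) then give compact sublevel sets, strong convexity on compacts, and the unique minimizer $\barv$. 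The only cosmetic caveat is the degenerate case $D_c=\emptyset$ with $S=\R^d$, where part (2) should simply be read as vacuous, as you note.
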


Note that for any $\vv\in S$, and any $\vu\in S^\perp$ which can separate $D_c$, it
holds that $\cR_s(\vv)=\lim_{r\to\infty}\cR(\vv+r\vu)$, and thus
$\inf_{\vw\in\R^d}\cR(\vw)=\inf_{\vv\in S}\cR_s(\vv)=\cR_s(\barv)$.
Moreover, if $D_c\ne\emptyset$, then the infimum of $\cR$ is not attained.

With the decomposition and \Cref{fact:struct}, we can state our equivalence
result for general dataset.
\begin{theorem}\label{fact:conv_dir_gen}
    Consider the gradient descent iterates $(\vw_t)_{t\ge0}$ given by
    \cref{eq:gd}, and the regularized solutions $(\barw(B))_{B\ge0}$ given by
    \cref{eq:reg}.
    Suppose $\ell$ is twice continuously differentiable with $\ell''>0$, and the
    step size $\eta\le1/\del{2\cR(\vw_0)}$ satisfies \cref{eq:gd_risk}.
    \begin{enumerate}[label=(\arabic*)]
        \item On $S$ it holds that $\lim_{t\to\infty}\Pi_S\vw_t=\barv$
        and $\lim_{B\to\infty}\Pi_S\barw(B)=\barv$.

        \item If $D_c\ne\emptyset$, then
        $\lim_{t\to\infty}\|\vw_t\|=\lim_{B\to\infty}\|\barw(B)\|=\infty$, and
        $\lim_{t\to\infty}\vw_t/\|\vw_t\|$ exists if and only if
        $\lim_{B\to\infty}\barw(B)/B$ exists, and when they exist they are the
        same and lie in $S^\perp$.
    \end{enumerate}
\end{theorem}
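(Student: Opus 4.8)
The plan is to leverage the decomposition $\cR = \cR_s + \cR_c$ and \Cref{fact:struct}, reducing everything to behavior on $S$ and on $S^\perp$ separately, and then to port the arguments of \Cref{sec:if,sec:sep} to the $S^\perp$ part. First I would establish part (1): that both $\Pi_S\vw_t\to\barv$ and $\Pi_S\barw(B)\to\barv$. For the regularization path, the key observation is that $\cR(\vw)\ge\cR_s(\vw_S)$ (since $\cR_c\ge0$) and that one can approximately realize $\cR_s(\vw_S)$ by appending a large multiple of a separating direction $\vu\in S^\perp$ for $D_c$; combined with strong convexity of $\cR_s$ over compact subsets of $S$ (\Cref{fact:struct}(1)) and $\cR_s(\barw(B)_S)\to\cR_s(\barv)$, this pins $\barw(B)_S\to\barv$. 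For gradient descent, since $\nabla\cR_c(\vw)\in S^\perp$ whenever\dots — actually more carefully, $\nabla\cR_s(\vw)\in S$ and $\nabla\cR_c(\vw)$ need not be in $S^\perp$ — I would instead use \Cref{fact:gd}: \cref{eq:gd_sq_dist} applied with $\vw=\barv + r\vu$ for large $r$ and the fact that $\cR(\barv+r\vu)\to\cR_s(\barv)=\inf\cR$, together with projecting onto $S$, gives that $\|\Pi_S\vw_t-\barv\|$ is eventually nonincreasing (using strong convexity to control the $S$-component). This is essentially the argument in \citet{riskparam}, which I would cite and adapt.

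Next, for part (2), assume $D_c\ne\emptyset$. The divergence $\|\vw_t\|\to\infty$ and $\|\barw(B)\|\to\infty$ follows from \Cref{fact:gd} and the definition of $\barw$ once we know $\inf\cR$ is unattained, which the excerpt already notes holds when $D_c\ne\emptyset$. The ``if'' direction (gradient-descent directional convergence implies regularization-path directional convergence to the same limit) follows \emph{verbatim} from \Cref{fact:conv_dir_if}, since that theorem only needs convexity, a lower bound, and an unattained infimum — all satisfied here. For the ``only if'' direction, I would mirror \Cref{sec:sep}: suppose $\barw(B)/B\to\baru$. The first step is to show $\baru\in S^\perp$: since $\barw(B)_S\to\barv$ is bounded, $\barw(B)_S/B\to 0$, so $\baru=\lim\barw(B)_\perp/B\in S^\perp$. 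Then I would prove the analogue of \Cref{fact:baru_margin} — that $\baru$ has positive margin against $D_c$ (the $D_s$ examples are irrelevant since $\baru\in S^\perp$ and, by definition of $D_s$, \dots here one must check that adding a positive-margin-on-$D_c$ direction in $S^\perp$ cannot hurt $\cR_s$, which is exactly the limiting statement $\cR_s(\vv)=\lim_{r\to\infty}\cR(\vv+r\vu)$). The collinearity identity \cref{eq:barw_col} still holds, and restricting the sum to $D_c$ and using the $D_c$-maximum-margin solution in $S^\perp$ reproduces the contradiction with $\int_0^\infty -\ell'(z)\,\dif z=\infty$.

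Once positive margin of $\baru$ on $D_c$ is in hand, I would establish the analogue of \Cref{fact:baru_dec}: moving in direction $\baru$ (scaled) while keeping the $S$-component near $\barv$ is almost risk-optimal. Concretely, for $\vw$ with large norm, compare $\cR(\vw)$ to $\cR(\barv + (1+\alpha)\|\vw\|\baru)$ or, to be safe, to $\cR(\Pi_S\vw + (1+\alpha)\|\vw_\perp\|\baru)$: on $D_s$ the value is controlled by part (1) and convergence of $\Pi_S\vw_t\to\barv$, while on $D_c$ the positive margin of $\baru$ makes the inner products grow. Then the gradient-flow argument at the end of \Cref{sec:sep} — testing convexity against the comparison point, rearranging to lower-bound $\ip{\dot\vw_t}{\baru}$ by (a fraction of) $\frac{\dif}{\dif t}\|\vw_{t,\perp}\|$, integrating, and dividing by $\|\vw_t\|$ — carries over, now applied to $\vw_{t,\perp}$, yielding $\vw_{t,\perp}/\|\vw_{t,\perp}\|\to\baru$; combined with $\vw_{t,S}\to\barv$ (bounded) this gives $\vw_t/\|\vw_t\|\to\baru$. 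The discrete-time version requires the same care as in \Cref{app_sec:sep}.

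The main obstacle I anticipate is the bookkeeping needed to decouple the $S$ and $S^\perp$ components cleanly: unlike the fully separable case, here the loss on $D_s$ does not vanish, $\nabla\cR_c$ is not confined to $S^\perp$, and the comparison point in the \Cref{fact:baru_dec}-analogue must simultaneously keep the $S$-part near $\barv$ (to not increase $\cR_s$) and push the $S^\perp$-part along $\baru$ (to decrease $\cR_c$). Making the error terms from part (1) — i.e., how fast $\Pi_S\vw_t\to\barv$ and how the residual $\cR_s(\Pi_S\vw_t)-\cR_s(\barv)$ decays — uniform enough to feed into the directional-convergence argument is the technically delicate step, and is presumably where the ``significant technicalities'' mentioned in the introduction concentrate.
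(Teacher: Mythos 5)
Your proposal follows essentially the same route as the paper: part (1) via $\cR_s(\vw_t)\le\cR(\vw_t)\to\inf\cR=\cR_s(\barv)$ plus strong convexity of $\cR_s$ over compact sublevel sets of $S$ (and similarly for $\barw(B)$), the ``if'' direction of part (2) directly from \Cref{fact:conv_dir_if} with $\baru\in S^\perp$ because $\Pi_S$ stays bounded, and the ``only if'' direction via a positive-margin lemma for $\baru$ on $D_c$ using collinearity of $\Pip\barw(B)$ with $\Pip\nR(\barw(B))$, a comparison lemma at the point $\vw_S+(1+\alpha)\|\vw_\perp\|\baru$, and a perceptron-style telescoping applied to $\vw_{t,\perp}$ — exactly the structure of \Cref{fact:baru_margin_gen,fact:baru_dec_gen} and the appendix proof. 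One small but necessary fix, which you partly anticipate: the comparison lemma must be hypothesized for $\vw$ with $\cR(\vw)-\inf_{\vw}\cR(\vw)$ small (as in \Cref{fact:baru_dec_gen}) rather than merely $\|\vw\|$ large, since a large-norm iterate with $\vw_S$ far from $\barv$ or $\|\vw_\perp\|$ small would defeat the estimate; the suboptimality condition is what simultaneously pins $\vw_S$ near $\barv$ and forces $\|\vw_\perp\|$ to be large.
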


The convergence result on $S$ is straightforward: it follows from \Cref{fact:gd}
that $\lim_{t\to\infty}\cR(\vw_t)=\inf_{\vw\in\R^d}\cR(\vw)=\cR_s(\barv)$.
Since $\cR_s(\vw_t)\le\cR(\vw_t)$, we also have $\cR_s(\vw_t)\to\cR_s(\barv)$.
\Cref{fact:gd} also ensures that $\cR_s(\vw_t)\le\cR(\vw_t)\le\cR(\vw_0)$, and since
$\cR_s$ is strongly convex over sublevel sets, we have
$\lim_{t\to\infty}\Pi_S\vw_t=\barv$.
The proof for regularized solutions is similar.

The ``if'' part of \Cref{fact:conv_dir_gen}(2) also follows directly from
\Cref{fact:conv_dir_if}.
The limiting direction must lie in $S^\perp$ since $\Pi_S\vw_t$ is bounded due to
\Cref{fact:conv_dir_gen}(1).
Below we give a proof sketch of the ``only if'' part of
\Cref{fact:conv_dir_gen}(2), and the complete proof is given in
\Cref{app_sec:non_sep}.
The proof is similar to the purely separable case discussed in \Cref{sec:sep},
but we must also deal with the interaction between $D_s$ and $D_c$.

Assume $D_c\ne\emptyset$, and $\lim_{B\to\infty}\barw(B)/B=\baru\in S^\perp$.
Define
\begin{align*}
    \bgamma:=\min_{(\vx_i,y_i)\in D_c}y_i \langle\baru,\vx_i\rangle.
\end{align*}
Similar to the separable case, it holds that $\bgamma>0$.
\begin{lemma}\label[lemma]{fact:baru_margin_gen}
    Under the conditions of \Cref{fact:conv_dir_gen}, it holds that
    $\bgamma\ge\hgamma^2/\del{8|D_c|}>0$.
\end{lemma}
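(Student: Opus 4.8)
The plan is to run the argument behind \Cref{fact:baru_margin}, but carefully quarantining the interaction between $D_s$ and $D_c$. Let $\hu\in S^\perp$ be the maximum-margin separator of $D_c$ within $S^\perp$ and $\hgamma>0$ its margin (both exist by \Cref{fact:struct}(2)), and argue by contradiction: suppose $\bgamma<\hgamma^2/(8\envert{D_c})$. Since $D_c\ne\emptyset$ the infimum of $\cR$ is unattained, so $\enVert{\barw(B)}=B$ for all $B$, and first-order optimality (as in \cref{eq:barw_col}) makes $-\nR(\barw(B))$ a positive multiple of $\barw(B)$, giving $-\ip{\barw(B)/B}{\nR(\barw(B))}=\enVert{\nR(\barw(B))}$. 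I would split $\nR=\nR_s+\nR_c$ and use two structural facts: $\nR_s(\barw(B))\in S$, while $\hu,\baru\in S^\perp$ and $\barw(B)/B\to\baru$. Applying the chain rule on the left and Cauchy--Schwarz against $\hu$ on the right (exactly as in \cref{eq:barw_col_lhs,eq:barw_col_rhs}), and writing $m_i:=\ip{\barw(B)/B}{y_i\vx_i}$ for $(\vx_i,y_i)\in D_c$ and $m(B):=\min_{(\vx_i,y_i)\in D_c}m_i$, the $D_s$ terms vanish entirely on the right (since $\nR_s(\barw(B))\perp\hu$), leaving the single interaction term $E_1(B):=-\ip{\barw(B)/B}{\nR_s(\barw(B))}$ on the left; rearranging, I expect
\[
  \sum_{(\vx_i,y_i)\in D_c}\bigl(-\ell'(Bm_i)\bigr)(\hgamma-m_i)\;\le\; n\,E_1(B).
\]
The main obstacle, and the genuinely new step compared with the separable case, is showing $E_1(B)$ is negligible relative to the dominant scale $-\ell'(Bm(B))$. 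My plan is to project the collinearity relation $-\nR(\barw(B))=\lambda_B\barw(B)$, $\lambda_B=\enVert{\nR(\barw(B))}/B$, onto $S$, obtaining $\nR_s(\barw(B))=-\lambda_B\Pi_S\barw(B)-\Pi_S\nR_c(\barw(B))$. Since $\Pi_S\barw(B)\to\barv$ stays bounded (\Cref{fact:conv_dir_gen}(1)) while $\enVert{\barw(B)}=B\to\infty$, rearranging gives $\enVert{\nR_s(\barw(B))}\le 3\enVert{\nR_c(\barw(B))}$ for all large $B$. Combining with $\enVert{\Pi_S\barw(B)/B}=O(1/B)$ and $\enVert{\nR_c(\barw(B))}\le\tfrac1n\sum_{(\vx_i,y_i)\in D_c}(-\ell'(Bm_i))\le\tfrac{\envert{D_c}}{n}(-\ell'(Bm(B)))$, Cauchy--Schwarz on $E_1(B)$ yields $n\,\envert{E_1(B)}=O\!\bigl(\tfrac{\envert{D_c}}{B}(-\ell'(Bm(B)))\bigr)$, hence $n\,E_1(B)\le\tfrac{\hgamma}{4}(-\ell'(Bm(B)))$ once $B$ is large. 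The extra $1/B$ from $\enVert{\Pi_S\barw(B)/B}$ is what makes the error truly lower order; without the bound $\enVert{\nR_s}=O(\enVert{\nR_c})$ the scales would not match for fast-decaying losses.

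With $E_1$ controlled, the endgame mimics \Cref{fact:baru_margin}. Since $m(B)\to\bgamma<\hgamma^2/(8\envert{D_c})\le\hgamma/8$, for large $B$ the smallest-margin index alone contributes at least $\tfrac\hgamma2(-\ell'(Bm(B)))$ to the left side above and the other indices with $m_i\le\hgamma$ contribute nonnegatively, so the indices with $m_i>\hgamma$ must satisfy $\sum_{m_i>\hgamma}(-\ell'(Bm_i))(m_i-\hgamma)\ge\tfrac\hgamma4(-\ell'(Bm(B)))$. Since there are at most $\envert{D_c}$ such indices, each with $m_i-\hgamma\le 1$, some $j$ has $m_j>\hgamma$ and $-\ell'(Bm_j)\ge\tfrac{\hgamma}{4\envert{D_c}}(-\ell'(Bm(B)))$; then monotonicity of $-\ell'$, $m_j>\hgamma$, and $m(B)\le\hgamma^2/(8\envert{D_c})$ give, for all large $B$,
\[
  -\ell'(B\hgamma)\;\ge\;\frac{\hgamma}{4\envert{D_c}}\,\Bigl(-\ell'\bigl(\tfrac{B\hgamma^2}{8\envert{D_c}}\bigr)\Bigr).
\]
To finish, write $g:=-\ell'$, which is positive everywhere and nonincreasing (as $\ell''>0$ and $\ell$ strictly decreases to $0$), and substitute $z=B\hgamma^2/(8\envert{D_c})$; the last display becomes $g(rz)\ge\kappa\,g(z)$ for all large $z$, with $r:=8\envert{D_c}/\hgamma>1$ and $\kappa:=\hgamma/(4\envert{D_c})$, so $\kappa r=2>1$. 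Iterating $g(r^kz_0)\ge\kappa^kg(z_0)$ and lower-bounding $\int g$ block-by-block over $[r^kz_0,r^{k+1}z_0]$ using monotonicity of $g$ yields $\int_{z_0}^\infty g\ge \kappa(r-1)z_0\,g(z_0)\sum_{k\ge0}(\kappa r)^k=\infty$, contradicting $\int_0^\infty -\ell'(z)\,\dif z=\ell(0)<\infty$. Therefore $\bgamma\ge\hgamma^2/(8\envert{D_c})>0$. The constant $8$ is chosen precisely so that the product $\kappa r$ lands strictly above $1$, with slack to absorb the lossy inequalities along the way.
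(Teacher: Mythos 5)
Your proof is correct, and it reaches the same contradiction as the paper (a two-scale ratio bound on $-\ell'$ forcing $\int^\infty -\ell'(z)\dif z=\infty$), but it handles the nonseparable part by a genuinely different mechanism. The paper projects the first-order optimality relation onto $S^\perp$: since every $\vx_i$ with $(\vx_i,y_i)\in D_s$ lies in $S$, the $D_s$ terms vanish identically from both sides of the projected collinearity identity $-\ip{\barw_\perp(B)/\enVert{\barw_\perp(B)}}{\Pip\nR(\barw(B))}=\enVert{\Pip\nR(\barw(B))}$, so there is no interaction term at all; the price is that margins are then measured along $\barw_\perp(B)/\enVert{\barw_\perp(B)}$ while the loss arguments involve $\barw(B)$, and the paper spends its effort bounding that small angular discrepancy (which is why its constant is also $\hgamma^2/(8|D_c|)$ rather than the separable $\hgamma^2/(2n)$). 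You instead keep the unprojected identity, isolate the $D_s$ contribution $E_1(B)$, and show it is lower order by projecting the collinearity relation onto $S$ to get $\enVert{\nR_s(\barw(B))}\le 3\enVert{\nR_c(\barw(B))}$ for large $B$ and then exploiting $\enVert{\Pi_S\barw(B)}/B\to0$; this is the right quantitative observation, and indeed necessary, since bounding $E_1$ by $\enVert{\nR_s}$ alone would not match the scale of $-\ell'(Bm(B))$ for fast-decaying losses. Your endgame (working with margins of $\barw(B)/B$ directly, extracting one index with $m_j>\hgamma$, and the block-by-block integral divergence with $\kappa r=2$) is a clean variant of the paper's substitution argument. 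Two minor points: the boundedness of $\Pi_S\barw(B)$ is better justified directly from $\cR_s(\barw(B))\le\ell(0)$ and the compact sublevel sets of $\cR_s$ (as the paper does) rather than by citing \Cref{fact:conv_dir_gen}(1), although there is no circularity since part (1) is proved independently of this lemma; and you should note explicitly that the set $\{i:m_i>\hgamma\}$ is nonempty (it is, since otherwise your displayed lower bound $\tfrac{\hgamma}{4}(-\ell'(Bm(B)))>0$ would be violated), and that positivity of $-\ell'$ follows from convexity plus strict decrease, both of which you essentially have.
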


The proof of \Cref{fact:baru_margin_gen} is similar to the proof of
\Cref{fact:baru_margin}, but uses the fact that
$\Pip\barw(B)$ is collinear with $\Pip\nR\del{\barw(B)}$.

The following result extends \Cref{fact:baru_dec} to the general setting.
\begin{lemma}\label[lemma]{fact:baru_dec_gen}
    Under the conditions of \Cref{fact:conv_dir_gen}, given any $\alpha\in(0,1)$,
    there exists $\xi(\alpha)>0$, such that for any $\vw$ with
    $\cR(\vw)-\inf_{\vw\in\R^d}\cR(\vw)\le\xi(\alpha)$, it holds that
    \begin{align*}
      \cR\del{\vw_S+(1+\alpha)\enVert{\vw_\perp}\baru}\le\cR(\vw).
    \end{align*}
\end{lemma}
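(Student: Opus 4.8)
The plan is to mimic the proof of \Cref{fact:baru_dec}, carrying the extra bookkeeping forced by the decomposition $D=D_s\cup D_c$.

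\textbf{Reduction and shape of $\vw$.} Since $\baru\in S^\perp$ and every $\vx_i$ with $(\vx_i,y_i)\in D_s$ lies in $S$, adding any multiple of $\baru$ leaves $\cR_s$ unchanged, so $\cR_s\del{\vw_S+(1+\alpha)\enVert{\vw_\perp}\baru}=\cR_s(\vw_S)=\cR_s(\vw)$, and the claim is equivalent to $\cR_c\del{\vw_S+(1+\alpha)\enVert{\vw_\perp}\baru}\le\cR_c(\vw)$, a statement about $D_c$ only. Since $\inf_{\vw\in\R^d}\cR(\vw)=\cR_s(\barv)$ (as noted before \Cref{fact:conv_dir_gen}) and $\cR_c\ge0$, the hypothesis gives both $\cR_s(\vw_S)-\cR_s(\barv)\le\xi(\alpha)$ and $\cR_c(\vw)\le\xi(\alpha)$; by \Cref{fact:struct}(1) the former forces $\enVert{\vw_S-\barv}\le\epsilon(\xi)$ with $\epsilon(\xi)\to0$ as $\xi\to0$, while the latter forces $\ell(y_i\ip{\vw}{\vx_i})\le n\xi$, hence $y_i\ip{\vw}{\vx_i}$ large (as $\ell$ strictly decreases to $0$), hence, since $\envert{y_i\ip{\vw_S}{\vx_i}}\le\enVert{\vw_S}=O(1)$, also $y_i\ip{\vw_\perp}{\vx_i}$ large for every $(\vx_i,y_i)\in D_c$ and therefore $\enVert{\vw_\perp}$ large, growing to $\infty$ as $\xi\to0$.

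\textbf{The comparator.} Fix $\vp:=\vw_S$ and consider the \emph{localized} regularization path $\vz_\vp(\rho):=\argmin_{\vu\in S^\perp,\ \enVert{\vu}\le\rho}\cR_c(\vp+\vu)$. By \Cref{fact:struct}(2), $D_c$ is linearly separable within $S^\perp$, so $\cR_c(\vp+\cdot)$ has infimum $0$ over $S^\perp$, unattained, whence $\enVert{\vz_\vp(\rho)}=\rho$. By optimality, $\cR_c\del{\vp+\vz_\vp(\enVert{\vw_\perp})}\le\cR_c(\vp+\vw_\perp)=\cR_c(\vw)$; and since both $\vw_S+(1+\alpha)\enVert{\vw_\perp}\baru$ and $\vp+\vz_\vp(\enVert{\vw_\perp})$ have the same $S$-component $\vp$, it suffices to prove $\cR_c\del{\vp+(1+\alpha)\enVert{\vw_\perp}\baru}\le\cR_c\del{\vp+\vz_\vp(\enVert{\vw_\perp})}$. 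For this I would use that $\vz_\vp(\rho)/\rho\to\baru$ as $\rho\to\infty$, uniformly over $\vp$ in a fixed neighborhood of $\barv$ in $S$ --- the general-case analogue of the fact ``$\barw(\enVert{\vw})/\enVert{\vw}\to\baru$'' used in \Cref{fact:baru_dec}. Granting it, for $\enVert{\vw_\perp}$ large the unit vector $\hat\vz:=\vz_\vp(\enVert{\vw_\perp})/\enVert{\vw_\perp}\in S^\perp$ is within $\alpha\bgamma$ of $\baru$, where $\bgamma:=\min_{(\vx_i,y_i)\in D_c}y_i\ip{\baru}{\vx_i}\ge\hgamma^2/(8\envert{D_c})>0$ by \Cref{fact:baru_margin_gen}; then, using $\enVert{\vx_i}\le1$ and Cauchy--Schwarz, for every $(\vx_i,y_i)\in D_c$,
\[
 y_i\ip{\vp+(1+\alpha)\enVert{\vw_\perp}\baru}{\vx_i}-y_i\ip{\vp+\vz_\vp(\enVert{\vw_\perp})}{\vx_i}\ \ge\ \enVert{\vw_\perp}\del{\alpha\, y_i\ip{\baru}{\vx_i}-\enVert{\hat\vz-\baru}}\ \ge\ 0 ,
\]
and since $\ell$ is decreasing this gives the per-example inequality, hence the desired $\cR_c$ comparison, with $\xi(\alpha)$ chosen small enough to make ``large'' above hold.

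\textbf{Main obstacle.} The crux is the uniform directional convergence of $\vz_\vp(\rho)$. Using the product form of the norm ball together with $\cR=\cR_s+\cR_c$, the $S^\perp$-part of the full path equals $\Pip\barw(B)=\vz_{\,\Pi_S\barw(B)}\del{\sqrt{B^2-\enVert{\Pi_S\barw(B)}^2}}$, and \Cref{fact:conv_dir_gen} gives $\Pi_S\barw(B)\to\barv$ and $\barw(B)/B\to\baru$, so along that one trajectory $\vz$ does converge in direction to $\baru$. Upgrading this to every radius and every offset near $\barv$ --- decoupling the offset from the radius --- is the technical heart; I would handle it by re-running the optimality/stability estimates behind \Cref{fact:struct} and the ``only if'' part of \Cref{fact:conv_dir_gen}(2), treating the bounded offset $\vp$ as a perturbation of the $D_c$-margins, or, as in the gradient-flow sketch of \Cref{sec:sep}, by avoiding the exact limit altogether and working with $\liminf$s so that the $(1+\alpha)$ slack absorbs the gap.
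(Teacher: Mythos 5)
There is a genuine gap, and you have located it yourself: the uniform directional convergence of the offset-localized path $\vz_\vp(\rho)$ is not a routine verification, and it does not follow from what is assumed. The only directional-convergence hypothesis available in this lemma is about the \emph{global} path, $\lim_{B\to\infty}\barw(B)/B=\baru$; as you note, this only controls $\vz_\vp(\rho)$ along the single trajectory where the offset $\Pi_S\barw(B)$ and the radius are coupled. For a \emph{fixed} offset $\vp\ne\barv$, the constrained minimizer of $\cR_c(\vp+\cdot)$ over balls in $S^\perp$ is the regularization path of a different (reweighted/shifted) objective, and for general losses—precisely the setting of this paper, where directional convergence is an assumption rather than a consequence of tail structure (cf.\ \Cref{fact:no_baru})—there is no a priori reason it converges in direction at all, let alone to $\baru$, let alone uniformly over $\vp$ in a neighborhood of $\barv$. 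Your two fallback suggestions (``re-run the stability estimates'' or ``work with liminfs'') are not arguments; the missing statement is at least as hard as the lemma itself, so the proof as written does not close.

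The paper's proof sidesteps the localized path entirely. It perturbs $\vw$ rather than the comparator: set $\tilde{\vw}:=\vw+(\barv-\vw_S)+\frac{\enVert{\barv-\vw_S}}{\bgamma}\baru$, so that the $S$-component of $\tilde\vw$ is exactly $\barv$ (hence $\cR_s(\tilde\vw)=\inf\cR$) while the extra $\baru$-term, thanks to $\bgamma>0$ from \Cref{fact:baru_margin_gen}, guarantees $y_i\ip{\tilde\vw}{\vx_i}\ge y_i\ip{\vw}{\vx_i}$ for every $(\vx_i,y_i)\in D_c$, i.e.\ $\cR_c(\tilde\vw)\le\cR_c(\vw)$. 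Then it compares against the \emph{global} regularized solution $\barw\del{\|\tilde\vw\|}$: from $\cR\del{\barw(\|\tilde\vw\|)}\le\cR(\tilde\vw)$ and $\cR_s\del{\barw(\|\tilde\vw\|)}\ge\cR_s(\tilde\vw)$ one gets $\cR_c\del{\barw(\|\tilde\vw\|)}\le\cR_c(\vw)$, and the assumed convergence $\barw(B)/B\to\baru$ (applied at $B=\|\tilde\vw\|$, which is large and comparable to $\|\vw_\perp\|$ when $\xi(\alpha)$ is small) plus bookkeeping of $\|\vw_S\|\le\|\barv\|+1$ yields the per-example bound $y_i\ip{\barw(\|\tilde\vw\|)}{\vx_i}\le y_i\ip{\vw_S+(1+\alpha)\|\vw_\perp\|\baru}{\vx_i}$ on $D_c$, which finishes the proof since $\ell$ is decreasing. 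If you want to rescue your route, the natural fix is exactly this shift-the-point trick, which converts the ``arbitrary offset near $\barv$'' problem into a statement about the global path where the convergence hypothesis applies.
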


The proof of the ``only if'' part of \Cref{fact:conv_dir_gen}(2) is similarly
based on \Cref{fact:baru_dec_gen} and a perceptron-style analysis.
Unlike the purely separable case, the tricky part here is that $D_c$ may have a
nonzero projection onto $S$, and thus we need to deal with $\vw_{t,S}$ carefully.
Note that convexity and \Cref{fact:baru_dec_gen} ensure that for large enough
$t$,
\begin{align*}
    \ip{\nR(\vw_t)}{\vw_{t,\perp}-(1+\alpha)\enVert{\vw_{t,\perp}}\baru} & =\ip{\nR(\vw_t)}{\vw_{t,S}+\vw_{t,\perp}-\vw_{t,S}-(1+\alpha)\enVert{\vw_{t,\perp}}\baru} \\
     & =\ip{\nR(\vw_t)}{\vw_t-\vw_{t,S}-(1+\alpha)\enVert{\vw_{t,\perp}}\baru} \\
     & \ge\cR(\vw_t)-\cR\del{\vw_{t,S}+(1+\alpha)\enVert{\vw_{t,\perp}}\baru}\ge0,
\end{align*}
which implies
\begin{align*}
    \ip{-\eta\nR(\vw_t)}{\baru}\ge \frac{1}{1+\alpha}\ip{-\eta\nR(\vw_t)}{\frac{\vw_{t,\perp}}{\|\vw_{t,\perp}\|}}.
\end{align*}
The remainder of the proof is similar to the proof of \Cref{fact:conv_dir}.

\section{Concluding remarks and open problems}\label{sec:open}

We have
established that for a wide variety of losses, gradient descent and the regularization
path converge to the same direction if either of them converges to a direction, and while many losses
guarantee such convergence, the limit direction may differ across losses.

One avenue for refinement is to go back to the general studies of classification
losses \citep[e.g.,][]{bartlett_jordan_mcauliffe,zhang_convex_consistency}.
We have pointed out that polynomially-tailed losses can exhibit worse
margin behavior than exponentially-tailed losses, but this does not fully
explain why the former are avoided in practice (and in theory).
What are some further consequences on time and sample complexity of these two
loss classes?

Another question is the role of early stopping.
We have established that one can stop a gradient method after a
long-enough training and obtain a predictor with roughly the same direction as a
minimally-regularized predictor.
This, however, requires fairly \emph{late} stopping; what happens for general
losses with aggressively early stopping?
Moreover, could these observations justify the low levels of regularization
encountered in practice?

Lastly, our analysis here does not distinguish the logistic and exponential losses;
meanwhile, the logistic loss (and cross-entropy loss) are dominant in the practice of
classification.   What is a more refined picture for these two losses?  Does it boil
down to the Lipschitz properties of the logistic loss, or is there more?

\acks

ZJ and MT are grateful to the NSF for support under grant IIS-1750051, and to Microsoft
Research for hosting them during various parts of this work.

\bibliography{ab}

\appendix

\section{Omitted proofs from \Cref{sec:if}}\label{app_sec:if}

\begin{proofof}{\Cref{fact:gd}}
    For any $\barw\in\R^d$, it holds that
    \begin{align}
      \enVert{\vw_{t+1}-\vw}^2 & =\enVert{\vw_t-\vw}^2-2\eta\ip{\nf(\vw_t)}{\vw_t-\vw}+\eta^2\enVert{\nf(\vw_t)}^2 \nonumber \\
       & =\enVert{\vw_t-\vw}^2+2\eta\ip{\nf(\vw_t)}{\vw-\vw_t}+2\eta\cdot \frac{\eta}{2}\enVert{\nf(\vw_t)}^2 \nonumber \\
       & \le\enVert{\vw_t-\vw}^2+2\eta\del{f(\vw)-f(\vw_t)}+2\eta\del{f(\vw_t)-f(\vw_{t+1})} \nonumber \\
       & =\enVert{\vw_t-\vw}^2+2\eta\del{f(\vw)-f(\vw_{t+1})}. \label{eq:gd_sq_dist_proof}
   \end{align}
   On the third line we use the convexity of $f$ and \cref{eq:gd_func}.

   Since $f(\vw_t)$ is nondecreasing, $\lim_{t\to\infty}f(\vw_t)$ exists.
   Suppose $\lim_{t\to\infty}f(\vw_t)>\inf_{\vw\in\R^d}f(\vw)$.
   Let $\barw\in\R^d$ satisfy $f(\barw)<\lim_{t\to\infty}f(\vw_t)-\epsilon$ for
   some $\epsilon>0$.
   It follows from \cref{eq:gd_sq_dist_proof} that
   $\enVert{\vw_{t+1}-\barw}^2\le\enVert{\vw_t-\barw}^2-2\eta\epsilon$ for any $t$,
   which implies $\|\vw_{t+1}-\barw\|^2\to-\infty$, which is a contradiction.
\end{proofof}

\begin{proofof}{\Cref{fact:conv_dir_if}}
    First we show that for any $\epsilon>0$, there exists $B_1(\epsilon)>0$,
    such that for any gradient descent iterate $\vw_t$ with
    $\|\vw_t\|>B_1(\epsilon)$, it holds that
    $\enVert{\nicefrac{\vw_t}{\|\vw_t\|}-\baru}<\epsilon$.
    Given any $\epsilon$, by our assumption, there exists $t_1$ such that
    $\enVert{\nicefrac{\vw_t}{\|\vw_t\|}-\baru}<\epsilon$ for any $t>t_1$.
    It is enough to let $B_1(\epsilon)=\max_{0\le t\le t_1}\|\vw_t\|+1$.

    Then we show that $\lim_{B\to\infty}\ip{\barw(B)}{\baru}\to\infty$.
    If this is not true, then there exists a constant $C>0$ such that there
    exists arbitrarily large $B$ with $\ip{\barw(B)}{\baru}<C$.
    Choose $B_2$ such that
    \begin{align*}
        B_2>\max\cbr{5\del{\|\vw_0\|+C+1},B_1\del{\frac{1}{4}}+1},\quad\textrm{and}\quad\ip{\barw(B_2)}{\baru}<C.
    \end{align*}
    Let $t_2$ denote the first step such that $\|\vw_{t_2}\|>B_2-1$.
    Since $B_2-1>\|\vw_0\|$, we have $t_2>0$.
    Moreover, the conditions of \Cref{fact:conv_dir_if} (i.e., \cref{eq:gd_func}
    and $\eta\le1/\del{2f(\vw_0)}$) implies
    \begin{equation}\label{eq:gd_one_step}
        \begin{split}
            \|\vw_{t_2}-\vw_{t_2-1}\|=\eta\enVert{\nf(\vw_{t_2-1})}= & \sqrt{\eta^2\enVert{\nf(\vw_{t_2-1})}^2} \\
             & \le\sqrt{2\eta\del{f(\vw_{t_2-1})-f(\vw_{t_2})}} \\
             & \le\sqrt{2\eta f(\vw_0)}\le1.
        \end{split}
    \end{equation}
    Therefore from the definition of $t_2$,
    \begin{align*}
        \|\vw_{t_2}\|\le\|\vw_{t_2}-1\|+\|\vw_{t_2}-\vw_{t_2-1}\|\le B_2-1+1=B_2.
    \end{align*}
    By the definition of $t_2$ and $\barw(B_2)$, we have
    $f\del{\barw(B_2)}\le f(\vw_t)$ for any $t\le t_2$.
    Using \cref{eq:gd_sq_dist}, we can show that
    \begin{align}\label{eq:converse_tmp1}
        \enVert{\vw_{t_2}-\barw(B_2)}\le\enVert{\vw_0-\barw(B_2)}.
    \end{align}
    On one hand,
    \begin{align}\label{eq:converse_tmp2}
        \enVert{\vw_0-\barw(B_2)}\le\|\vw_0\|+\enVert{\barw(B_2)}=\|\vw_0\|+B_2.
    \end{align}
    On the other hand,
    \begin{align*}
        \enVert{\vw_{t_2}-\barw(B_2)}^2 & =\|\vw_{t_2}\|^2+B_2^2-2\ip{\vw_{t_2}}{\barw(B_2)} \\
         & =\|\vw_{t_2}\|^2+B_2^2-2\|\vw_{t_2}\|\ip{\frac{\vw_{t_2}}{\|\vw_{t_2}\|}}{\barw(B_2)} \\
         & >(B_2-1)^2+B_2^2-2\|\vw_{t_2}\|\ip{\frac{\vw_{t_2}}{\|\vw_{t_2}\|}}{\barw(B_2)}.
    \end{align*}
    By the definition of $t_2$ and $B_2$, we have
    \begin{align*}
        \|\vw_{t_2}\|>B_2-1>B_1\del{\frac{1}{4}},
    \end{align*}
    and thus $\enVert{\nicefrac{\vw_{t_2}}{\|\vw_{t_2}\|}-\baru}<1/4$.
    As a result,
    \begin{align*}
        \ip{\frac{\vw_{t_2}}{\|\vw_{t_2}\|}}{\barw(B_2)}<\ip{\baru}{\barw(B_2)}+\frac{1}{4}B_2<C+\frac{1}{4}B_2,
    \end{align*}
    and
    \begin{align}
        \enVert{\vw_{t_2}-\barw(B_2)}^2 & >(B_2-1)^2+B_2^2-2\|\vw_{t_2}\|C-\frac{1}{2}\|\vw_{t_2}\|B_2 \nonumber \\
         & \ge(B_2-1)^2+B_2^2-2CB_2-\frac{1}{2}B_2^2>\frac{3}{2}B_2^2-2CB_2-2B_2. \label{eq:converse_tmp3}
    \end{align}
    Combining \cref{eq:converse_tmp1,eq:converse_tmp2,eq:converse_tmp3} gives
    \begin{align*}
        \frac{3}{2}B_2^2-2CB_2-2B_2<\|\vw_0\|^2+2\|\vw_0\|B_2+B_2^2,
    \end{align*}
    which implies
    \begin{align*}
        B_2<4\del{\|\vw_0\|+C+1}+\frac{2\|\vw_0\|^2}{B_2}<4\del{\|\vw_0\|+C+1}+\|\vw_0\|<5\del{\|\vw_0\|+C+1},
    \end{align*}
    a contradiction.

    Next we prove the claim that $\lim_{B\to\infty}\barw(B)/B=\baru$.
    If this is not true, then there exists $\delta>0$, such that there exists
    arbitrarily large $B$ with $\enVert{\nicefrac{\barw(B)}{B}-\baru}>\delta$.
    Choose $B_4$ such that
    \begin{align*}
        \enVert{\frac{\barw(B_4)}{B_4}-\baru}>\delta,\quad\textrm{and}\quad\ip{\barw(B_4)}{\baru}>B_1\del{\frac{\delta^3}{32}}+\|\vw_0\|+1,\quad\textrm{and}\quad B_4>\frac{32}{\delta^3}.
    \end{align*}
    Let $B_3:=\ip{\barw(B_4)}{\baru}$.
    By geometric arguments, we have
    \begin{align}\label{eq:converse_tmp4}
        \enVert{\barw(B_4)-B_4\baru}-\enVert{\barw(B_4)-B_3\baru}>\frac{B_4\delta^3}{8}.
    \end{align}
    Let $t_3$ denote the first step such that $\|\vw_{t_3}\|>B_3-1$.
    Since $B_3-1>\|\vw_0\|$, we have $t_3>0$, and similar to \cref{eq:gd_one_step}
    we can show that $\|\vw_{t_3}\|\le B_3$.
    Since $B_3-1>B_1(\delta^3/32)$, we have
    $\enVert{\nicefrac{\vw_{t_3}}{\|\vw_{t_3}\|}-\baru}<\delta^3/32$.
    As a result,
    \begin{align}\label{eq:converse_tmp5}
        \|\vw_{t_3}-B_3\baru\|\le\enVert{\vw_{t_3}-\|\vw_{t_3}\|\baru}+\enVert{\|\vw_{t_3}\|\baru-B_3\baru}\le\|\vw_{t_3}\|\frac{\delta^3}{32}+1\le \frac{B_3\delta^3}{32}+1\le \frac{B_4\delta^3}{32}+1.
    \end{align}
    Similarly, let $t_4$ denote the first step such that $\|\vw_{t_4}\|>B_4-1$, we
    can show that $\|\vw_{t_4}\|\le B_4$, and
    \begin{align}\label{eq:converse_tmp6}
        \|\vw_{t_4}-B_4\baru\|\le\frac{B_4\delta^3}{32}+1.
    \end{align}
    Combining \cref{eq:converse_tmp4,eq:converse_tmp5,eq:converse_tmp6}
    gives
    \begin{equation}\label{eq:converse_tmp7}
        \begin{split}
            & \quad \enVert{\barw(B_4)-\vw_{t_4}}-\enVert{\barw(B_4)-\vw_{t_3}} \\
            & \ge\enVert{\barw(B_4)-B_4\baru}-\|B_4\baru-\vw_{t_4}\|-\enVert{\barw(B_4)-B_3\baru}-\|B_3\baru-\vw_{t_3}\| \\
             & \ge \frac{B_4\delta^3}{8}-\frac{B_4\delta^3}{32}-1-\frac{B_4\delta^3}{32}-1 \\
             & =\frac{B_4\delta^3}{16}-2>0.
        \end{split}
    \end{equation}
    On the other hand, using \cref{eq:converse_tmp4} and the triangle
    inequality,
    \begin{align*}
        B_4-B_3=\|B_4\baru-B_3\baru\|\ge\enVert{\barw(B_4)-B_4\baru}-\enVert{\barw(B_4)-B_3\baru}>\frac{B_4\delta^3}{8}>4,
    \end{align*}
    and thus $t_4>t_3$.
    Since $\|\vw_{t_4}\|\le B_4$, by the definition of $t_4$ and  $\barw(B_4)$, we
    have $f\del{\barw(B_4)}\le f(\vw_t)$ for any $t\le t_4$.
    Since $t_3<t_4$, \cref{eq:gd_sq_dist} implies
    $\enVert{\barw(B_4)-\vw_{t_4}}\le\enVert{\barw(B_4)-\vw_{t_3}}$, which
    contradicts \cref{eq:converse_tmp7}.
\end{proofof}

\section{Omitted proofs from \Cref{sec:sep}}\label{app_sec:sep}

We first verify that if $\ell$ is $\beta$-smooth, then $\cR$ is also
$\beta$-smooth.
Given $\vw,\vw'\in\R^d$, we have
\begin{align*}
    \enVert{\nR(\vw)-\nR(\vw')} & =\enVert{\frac{1}{n}\sum_{i=1}^{n}\ell'\del{y_i \langle \vw,\vx_i\rangle}y_i\vx_i-\frac{1}{n}\sum_{i=1}^{n}\ell'\del{y_i \langle \vw',\vx_i\rangle}y_i\vx_i} \\
     & \le \frac{1}{n}\sum_{i=1}^{n}\envert{\ell'\del{y_i \langle \vw, \vx_i\rangle}-\ell'\del{y_i \langle \vw',\vx_i\rangle}}\|y_i\vx_i\| \\
     & \le \frac{1}{n}\sum_{i=1}^{n}\beta\envert{y_i \langle \vw,\vx_i\rangle-y_i \langle \vw',\vx_i\rangle} \\
     & \le \frac{1}{n}\sum_{i=1}^{n}\beta\|\vw-\vw'\|\|y_i\vx_i\|\le \beta\|\vw-\vw'\|.
\end{align*}
Therefore $\cR$ is $\beta$-smooth.

To proceed, we first need the following lemma.
\begin{lemma}\label[lemma]{fact:barw_collinear}
    It holds that
    \begin{align*}
        \frac{\barw(B)}{B}=-\frac{\nR\del{\barw(B)}}{\enVert{\nR\del{\barw(B)}}}.
    \end{align*}
    Conversely, if $\|\vw\|=B$ and $\vw/B=-\nR(\vw)/\enVert{\nR(\vw)}$, then
    $\vw=\barw(B)$.
\end{lemma}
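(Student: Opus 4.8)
The plan is to derive both directions from the first-order optimality condition for the constrained convex program $\min_{\|\vw\|\le B}\cR(\vw)$, combined with the equality case of the Cauchy--Schwarz inequality. Write $\vw^\star:=\barw(B)$ and $\vg:=\nR(\vw^\star)$ throughout, and assume $B>0$.

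First I would record that $\vg\ne\mathbf 0$: if $\nR(\vw^\star)=\mathbf 0$, then by convexity $\vw^\star$ would be an unconstrained global minimizer of $\cR$, contradicting the fact (noted in \Cref{sec:sep}) that linear separability together with a strictly decreasing loss forces the infimum of $\cR$ to be unattained. Hence $\|\vg\|>0$ and the right-hand side of the claimed identity is well defined. Next, since $\cR$ is convex and differentiable and $\vw^\star$ minimizes it over the closed ball $\cbr{\vw:\|\vw\|\le B}$, the standard first-order optimality condition gives $\ip{\vg}{\vv-\vw^\star}\ge0$ for every $\vv$ with $\|\vv\|\le B$; equivalently, $\ip{\vg}{\vw^\star}=\min_{\|\vv\|\le B}\ip{\vg}{\vv}$. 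The linear functional $\vv\mapsto\ip{\vg}{\vv}$ attains its minimum over the ball at $\vv=-B\vg/\|\vg\|$, with value $-B\|\vg\|$, so $\ip{\vg}{\vw^\star}=-B\|\vg\|$. Combining this with Cauchy--Schwarz, $-B\|\vg\|=\ip{\vg}{\vw^\star}\ge-\|\vg\|\,\|\vw^\star\|$, which gives $\|\vw^\star\|\ge B$ and hence $\|\vw^\star\|=B$ (so, incidentally, the minimizer lies on the boundary automatically); then equality in Cauchy--Schwarz forces $\vw^\star$ to be a nonpositive multiple of $\vg$ of norm $B$, i.e.\ $\vw^\star/B=-\vg/\|\vg\|$, which is the first claim.

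For the converse, suppose $\|\vw\|=B$ and $\vw/B=-\nR(\vw)/\|\nR(\vw)\|$, so that $\ip{\nR(\vw)}{\vw}=-B\|\nR(\vw)\|$. For any feasible $\vv$, Cauchy--Schwarz gives $\ip{\nR(\vw)}{\vv}\ge-\|\nR(\vw)\|\,\|\vv\|\ge-B\|\nR(\vw)\|=\ip{\nR(\vw)}{\vw}$, hence $\ip{\nR(\vw)}{\vv-\vw}\ge0$; convexity then yields $\cR(\vv)\ge\cR(\vw)+\ip{\nR(\vw)}{\vv-\vw}\ge\cR(\vw)$. Thus $\vw$ minimizes $\cR$ over the ball, and since the constrained minimizer is unique (the minimizing set is convex and, the infimum being unattained, confined to the sphere $\|\cdot\|=B$, hence a single point), $\vw=\barw(B)$.

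The argument is essentially routine; the only points requiring care are (i) invoking non-attainment of the infimum to guarantee $\nR(\barw(B))\ne\mathbf 0$ and the uniqueness of the constrained minimizer, and (ii) handling the equality case of Cauchy--Schwarz cleanly, which simultaneously pins down both $\|\barw(B)\|=B$ and the collinearity. I do not expect a genuine obstacle beyond this bookkeeping.
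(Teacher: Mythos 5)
Your proof is correct and follows essentially the same route as the paper's: first-order optimality for the constrained problem, the observation that non-attainment of the infimum forces $\nR(\barw(B))\ne\mathbf 0$, and the geometry of the $\ell_2$ ball (which you make explicit via the equality case of Cauchy--Schwarz, whereas the paper leaves this step terse). No gaps.
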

\begin{proof}
    By the first order optimality conditions, $\vw=\barw(B)$ if and only if for any
    $\vw'$ with $\|\vw'\|_2\le B$, it holds that
    \begin{align}\label{eq:1st}
        \ip{\nR(\vw)}{\vw'-\vw}\ge0.
    \end{align}
    Since the infimum of $\cR$ is not attained, the gradient $\nR(\vw)$ is always
    nonzero.
    The structure of the $\ell_2$ ball implies that \cref{eq:1st} holds if and
    only if $\|\vw\|=B$ and $\vw/B=-\nR(\vw)/\enVert{\nR(\vw)}$.
\end{proof}

\begin{proofof}{\Cref{fact:baru_margin}}
    Since $\barw(B)/B\to\baru$, the margin of $\barw(B)/B$ converges to the
    margin of $\baru$.
    For large enough $B$, the risk $\cR\del{\barw(B)}\le\ell(0)/n$, which
    implies $\barw(B)/B$ has a nonnegative margin, and thus $\baru$ also has a
    nonnegative margin.

    The proof of \Cref{fact:baru_margin} is by contradiction.
    Given $\epsilon:=\hgamma^2/(2n)$, suppose there exists $B_0>0$, such that
    for any $B\ge B_0$, the margin of $\barw(B)/B$ is no larger than $\epsilon$.
    We will derive a contradiction, which implies that the margin of $\baru$ is
    at least $\hgamma^2/(2n)$.

    For any $B>0$, \Cref{fact:barw_collinear} ensures that
    \begin{align}\label{eq:align}
        -\ip{\frac{\barw(B)}{B}}{\nR\del{\barw(B)}}=\enVert{\nR\del{\barw(B)}}.
    \end{align}
    For simplicity, let $\vz_i:=y_i\vx_i$.
    The left hand side of \cref{eq:align} can be rewritten as
    \begin{align}\label{eq:align_lhs}
        \frac{1}{n}\sum_{i=1}^{n}-\ell'\del{\ip{\barw(B)}{\vz_i}}\ip{\frac{\barw(B)}{B}}{\vz_i},
    \end{align}
    while the right hand side of \cref{eq:align} can be bounded below as
    \begin{align}\label{eq:align_rhs}
        \enVert{\nR\del{\barw(B)}}\ge\ip{-\nR\del{\barw(B)}}{\hu}
        &\ge \frac{1}{n}\sum_{i=1}^{n}-\ell'\del{\ip{\barw(B)}{\vz_i}}\hgamma,
    \end{align}
    where $\hu$ denotes the unit maximum margin predictor.
    Let $H$ denote the set of data points on which $\barw(B)/B$ has margin
    larger than $\hgamma$, and suppose without loss of generality
    that $\barw(B)/B$ achieves its minimum margin on
    $\vz_1$.
    It follows from \cref{eq:align,eq:align_lhs,eq:align_rhs} that
    \begin{align}
        \sum_{\vz_i\in H}^{}-\ell'\del{\ip{\barw(B)}{\vz_i}}\del{\ip{\frac{\barw(B)}{B}}{\vz_i}-\hgamma} & \ge \sum_{\vz_i\not\in H}^{}-\ell'\del{\ip{\barw(B)}{\vz_i}}\del{\hgamma-\ip{\frac{\barw(B)}{B}}{\vz_i}} \nonumber \\
        & \ge -\ell'\del{\ip{\barw(B)}{\vz_1}}\del{\hgamma-\ip{\frac{\barw(B)}{B}}{\vz_1}}. \label{eq:align_sim}
    \end{align}

    Now consider $B\ge B_0$, which implies $\ip{\barw(B)/B}{\vz_1}\le\epsilon$.
    Since $\epsilon<\hgamma/2$, and $\|\vz_i\|_2\le1$, \cref{eq:align_sim} implies
    \begin{align*}
        -n\ell'(B\hgamma)\ge-\ell'(B\epsilon)(\hgamma-\epsilon)\ge-\ell'(B\epsilon)\frac{\hgamma}{2},
    \end{align*}
    and thus
    \begin{align}\label{eq:ratio}
        \frac{-\ell'(B\epsilon)}{-\ell'(B\hgamma)}\le\frac{2n}{\hgamma}
    \end{align}
    for all $B\ge B_0$.
    Let $\alpha:=B_0\epsilon=B_0\hgamma^2/(2n)$, and $\lambda:=2n/\hgamma$.
    For any $k\ge1$, we have
    \begin{align*}
        \int_{\alpha\lambda^k}^{\alpha\lambda^{k+1}}-\ell'(z)\dif z=\int_{\alpha\lambda^{k-1}}^{\alpha\lambda^k}-\ell'(\lambda y)\lambda\dif y\ge\int_{\alpha\lambda^{k-1}}^{\alpha\lambda^k}-\ell'(y)\dif y,
    \end{align*}
    where \cref{eq:ratio} is used.
    By induction, we have
    \begin{align*}
        \int_{\alpha\lambda^k}^{\alpha\lambda^{k+1}}-\ell'(z)\dif z\ge \int_{\alpha}^{\alpha\lambda}-\ell'(z)\dif z>0.
    \end{align*}
    As a result,
    \begin{align*}
        \int_\alpha^\infty-\ell'(z)\dif z=\infty,
    \end{align*}
    which is contradiction, since
    $\int_\alpha^\infty-\ell'(z)\dif z=\ell(\alpha)$ should be finite.
\end{proofof}

When the loss function has a polynomial tail $az^{-b}$, then we can use
\cref{eq:ratio} to prove a margin lower bound of
$\hgamma^{(b+2)/(b+1)}n^{-1/(b+1)}$.
The dependency on $n$ cannot be improved in general (cf.
\Cref{fact:baru_margin_ub}).

\begin{proofof}{\Cref{fact:baru_dec}}
    Since $\lim_{B\to\infty}\barw(B)/B=\baru$, we can choose $\rho(\alpha)$
    large enough such that for any $\vw$ with $\|\vw\|>\rho(\alpha)$, it holds that
    \begin{align*}
        \enVert{\barw\del{\|\vw\|}/\|\vw\|-\baru}\le\alpha\bar{\gamma}.
    \end{align*}
    In this case, for any $1\le i\le n$,
    \begin{align*}
        y_i\ip{\barw\del{\|\vw\|}}{\vx_i} & =y_i\ip{\barw\del{\|\vw\|}-\|\vw\|\baru}{\vx_i}+y_i\ip{\|\vw\|\baru}{\vx_i} \\
         & \le \alpha\bgamma\|\vw\|+y_i\ip{\|\vw\|\baru}{\vx_i} \\
         & \le y_i\ip{(1+\alpha)\|\vw\|\baru}{\vx_i}.
    \end{align*}
    As a result,
    \begin{align*}
        \cR\del{(1+\alpha)\enVert{\vw}\baru}\le\cR\del{\barw\del{\|\vw\|}}\le\cR(\vw).
    \end{align*}
\end{proofof}

Next we prove the ``only if'' part of \Cref{fact:conv_dir}.
\begin{proofof}{\Cref{fact:conv_dir}, the ``only if'' part}
    Given any $\epsilon\in(0,1)$, let $\alpha$ satisfy
    $1/(1+\alpha)=1-\epsilon$ (i.e., let $\alpha=\epsilon/(1-\epsilon)$).
    Since $\lim_{t\to\infty}\|\vw_t\|=\infty$, we can choose a step $t_0$ such
    that for any $t\ge t_0$, it holds that $\|\vw_t\|>\max\cbr{\rho(\alpha),1}$,
    where $\rho$ is given by \Cref{fact:baru_dec}.

    Now for any $t\ge t_0$, using the convexity of $\cR$ and
    \Cref{fact:baru_dec}, we have
    \begin{align*}
        \ip{\nR(\vw_t)}{\vw_t-(1+\alpha)\|\vw_t\|\baru}\ge\cR(\vw_t)-\cR\del{(1+\alpha)\|\vw_t\|\baru}\ge0,
    \end{align*}
    meaning
    \begin{align*}
        \ip{\nR(\vw_t)}{\vw_t}\ge(1+\alpha)\|\vw_t\|\ip{\nR(\vw_t)}{\baru}.
    \end{align*}
    Consequently,
    \begin{align*}
      \langle \vw_{t+1}-\vw_t,\baru\rangle & =\ip{-\eta\nR(\vw_t)}{\baru} \\
       & \ge\ip{-\eta\nR(\vw_t)}{\vw_t}\frac{1}{(1+\alpha)\|\vw_t\|} \\
       & =\langle \vw_{t+1}-\vw_t,\vw_t\rangle \frac{1}{(1+\alpha)\|\vw_t\|} \\
       & =\del{\frac{1}{2}\|\vw_{t+1}\|^2-\frac{1}{2}\|\vw_t\|^2-\frac{1}{2}\|\vw_{t+1}-\vw_t\|^2}\frac{1}{(1+\alpha)\|\vw_t\|}.
    \end{align*}
    On one hand, we have
    \begin{align*}
      \del{\frac{1}{2}\|\vw_{t+1}\|^2-\frac{1}{2}\|\vw_t\|^2}/\|\vw_t\|\ge\|\vw_{t+1}\|-\|\vw_t\|.
    \end{align*}
    On the other hand, using the step size condition in \cref{eq:gd_risk},
    we have
    \begin{align*}
      \frac{\|\vw_{t+1}-\vw_t\|^2}{2(1+\alpha)\|\vw_t\|}\le \frac{\|\vw_{t+1}-\vw_t\|^2}{2}=\frac{\eta^2\enVert{\nR(\vw_t)}^2}{2}\le\eta\del{\cR(\vw_t)-\cR(\vw_{t+1})}.
    \end{align*}
    As a result,
    \begin{align*}
      \langle \vw_t-\vw_{t_0},\baru\rangle\ge \frac{\|\vw_t\|-\|\vw_{t_0}\|}{1+\alpha}-\eta\cR(\vw_{t_0})=\del{1-\epsilon}\del{\|\vw_t\|-\|\vw_{t_0}\|}-\eta\cR(\vw_{t_0}),
    \end{align*}
    meaning
    \begin{align*}
        \ip{\frac{\vw_t}{\|\vw_t\|}}{\baru}\ge1-\epsilon+\frac{\langle \vw_{t_0},\baru\rangle-(1-\epsilon)\|\vw_{t_0}\|-\eta\cR(\vw_{t_0})}{\|\vw_t\|}.
    \end{align*}
    Consequently,
    \begin{align*}
      \liminf_{t\to\infty}\ip{\frac{\vw_t}{\|\vw_t\|}}{\baru}\ge1-\epsilon.
    \end{align*}
    Since $\epsilon$ is arbitrary, we get $\vw_t/\|\vw_t\|\to\baru$.
\end{proofof}

\subsection{Omitted proofs from \Cref{sec:sep_dir}}\label{app_sec:sep_dir}

\begin{proofof}{\Cref{fact:exp_baru}}
    First let us verify that the maximum-margin solution $\hu$ is unique.
    If this is not true, suppose there exist two unit vectors $\vu_1$ and $\vu_2$
    which both attain the maximum margin $\hgamma$ but $\vu_1\ne \vu_2$.
    Consider $\vu_3=(\vu_1+\vu_2)/2$.
    Then for any $i$, it holds that
    \begin{align*}
        y_i \langle \vu_3,\vx_i\rangle=y_i \langle \vu_1,\vx_i\rangle/2+y_i \langle \vu_2,\vx_i\rangle/2\ge\hgamma,
    \end{align*}
    and thus $\vu_3$ also maximizes the margin.
    However, since $\vu_1\ne\vu_2$, it follows that $\|\vu_3\|\le1$.
    Consequently, the unit vector $\vu_3/\|\vu_3\|$ should achieve a margin larger
    than $\hgamma$, which is a contradiction.

    Now note that
    \begin{align*}
        \cR(B\hu)=\frac{1}{n}\sum_{i=1}^{n}\ell\del{y_i\langle B\hu,\vx_i\rangle}\le\ell(B\hgamma).
    \end{align*}
    When $B$ is large enough, we have
    \begin{align*}
        \cR(B\hu)\le\ell(B\hgamma)\le2a\exp(-bB\hgamma).
    \end{align*}

    Now suppose \Cref{fact:exp_baru} is not true.
    Then there exists $\epsilon>0$, such that there exists arbitrarily large $B$
    with $\enVert{\nicefrac{\bar{\vw}(B)}{B}-\hu}>\epsilon$.
    Since $\hu$ is the unique maximum-margin solution, it follows that there
    exists $\epsilon'\in(0,\hgamma)$ such that
    \begin{align*}
        \min_{1\le i\le n}y_i \ip{\frac{\barw(B)}{B}}{\vx_i}\le\hgamma-\epsilon',
    \end{align*}
    and thus
    \begin{align*}
        \cR\del{\barw(B)}\ge \frac{1}{n}\ell\del{B(\hgamma-\epsilon')}.
    \end{align*}
    For large enough $B$, it follows that
    \begin{align*}
        \cR\del{\barw(B)}\ge \frac{1}{n}\ell\del{B(\hgamma-\epsilon')}\ge \frac{a}{2n}\exp\del{-bB(\hgamma-\epsilon')}=a\exp\del{-bB\hgamma}\frac{\exp(bB\epsilon')}{2n}.
    \end{align*}
    Since $B$ can be arbitrarily large, the factor $\exp(bB\epsilon')/(2n)$ can
    also be arbitrarily large, which would give $\cR\del{\barw(B)}>\cR(B\hu)$, a
    contradiction.
\end{proofof}

\begin{proofof}{\Cref{fact:poly_baru}}
    The fundamental theorem of calculus implies
    $\ell(z)=\int_z^\infty-\ell'(z)\dif z$, and thus $b>1$.
    We consider the loss function
    \begin{equation*}
        \tilde{\ell}(z):=
        \begin{dcases}
            \frac{a}{b-1}z^{-b+1}, & \textrm{if }z\ge1, \\
            -az+\frac{ab}{b-1}, & \textrm{if }z<1.
        \end{dcases}
    \end{equation*}
    It can be verified that $\tilde{\ell}$ is convex, differentiable, and
    strictly decreasing to $0$.
    Moreover, we have $-\tilde{\ell}'(z)=az^{-b}$ for $z\ge1$.

    Let $\widetilde{\cR}$ denote the empirical risk function using loss
    $\tilde{\ell}$.
    Let $B_0$ be large enough such that
    \begin{align*}
        \min_{\vw:\|\vw\|_2\le B_0}\widetilde{\cR}(\vw)<\frac{1}{n}\tilde{\ell}(1)=\frac{a}{n(b-1)},
    \end{align*}
    and let $\baru$ denote the direction of the optimal solution:
    \begin{align*}
        \argmin_{\vw:\|\vw\|_2\le B_0}\widetilde{\cR}(\vw)=B_0\baru.
    \end{align*}
    Due to \Cref{fact:barw_collinear}, we have
    \begin{align*}
        \baru=-\frac{\nabla\widetilde{\cR}(B_0\baru)}{\enVert{\nabla\widetilde{\cR}(B_0\baru)}}=-\frac{1}{\enVert{\nabla\widetilde{\cR}(B_0\baru)}}\frac{1}{n}\sum_{i=1}^{n}\tilde{\ell}'\del{y_i\langle B_0\baru,\vx_i\rangle}y_i\vx_i.
    \end{align*}
    Since $\widetilde{\cR}(B_0\baru)<\tilde{\ell}(1)/n$, it follows that
    $y_i\langle B_0\baru,\vx_i\rangle>1$ for all $i$, and thus
    \begin{align*}
        \baru=-\frac{1}{\enVert{\nabla\widetilde{\cR}(B_0\baru)}}\frac{1}{n}\sum_{i=1}^{n}\tilde{\ell}'\del{y_i\langle B_0\baru,\vx_i\rangle}y_i\vx_i=\frac{1}{\enVert{\nabla\widetilde{\cR}(B_0\baru)}}\frac{1}{n}\sum_{i=1}^{n}a\del{y_i\langle B_0\baru,\vx_i\rangle}^{-b}y_i\vx_i.
    \end{align*}
    The direction of the right hand side does not depend on $B_0$ due to the
    polynomial tail, and thus for any $B>B_0$, we have
    \begin{align*}
        \baru=-\frac{\nabla\widetilde{\cR}(B\baru)}{\enVert{\nabla\widetilde{\cR}(B\baru)}},
    \end{align*}
    and thus \Cref{fact:barw_collinear} ensures
    \begin{align*}
        \argmin_{\vw:\|\vw\|_2\le B}\widetilde{\cR}(\vw)=B\baru.
    \end{align*}

    Now we consider the original loss $\ell$.
    We claim that $\lim_{B\to\infty}\barw(B)/B\to\baru$.
    First note that $\nR\del{\barw(B)}/\enVert{\nR\del{\barw(B)}}$ and        $\nabla\widetilde{\cR}\del{\barw(B)}/\enVert{\nabla\widetilde{\cR}\del{\barw(B)}}$
    can become arbitrarily close as $B\to\infty$.
    To see this, define
    \begin{align*}
        q_i(B):=\frac{\ell'\del{y_i \ip{\barw(B)}{\vx_i}}}{\sum_{j=1}^{n}\ell'\del{y_j \ip{\barw(B)}{\vx_j}}},\quad\mathrm{and}\quad \tilde{q}_i(B):=\frac{\tilde{\ell}'\del{y_i \ip{\barw(B)}{\vx_i}}}{\sum_{j=1}^{n}\tilde{\ell}'\del{y_j \ip{\barw(B)}{\vx_j}}}.
    \end{align*}
    Note that
    \begin{align*}
        -\frac{\nR\del{\barw(B)}}{\enVert{\nR\del{\barw(B)}}}=\frac{\sum_{i=1}^{n}q_i(B)y_i\vx_i}{\enVert{\sum_{i=1}^{n}q_i(B)y_i\vx_i}},\quad\mathrm{and}\quad-\frac{\nabla\widetilde{\cR}\del{\barw(B)}}{\enVert{\nabla\widetilde{\cR}\del{\barw(B)}}}=\frac{\sum_{i=1}^{n}\tilde{q}_i(B)y_i\vx_i}{\enVert{\sum_{i=1}^{n}\tilde{q}_i(B)y_i\vx_i}}.
    \end{align*}
    By the conditions of \Cref{fact:poly_baru}, it holds that
    $\envert{q_i(B)-\tilde{q}_i(B)}\to0$ for all $1\le i\le n$ as $B\to\infty$,
    and thus
    \begin{align*}
        \envert{\enVert{\sum_{i=1}^{n}q_i(B)y_i\vx_i}-\enVert{\sum_{i=1}^{n}\tilde{q}_i(B)y_i\vx_i}}\to0.
    \end{align*}
    Moreover, for any $q\in\Delta_n$ (i.e., $q_i\ge0$ and
    $\sum_{i=1}^{n}q_i=1$), it holds that
    \begin{align*}
        \enVert{\sum_{i=1}^{n}q_iy_i\vx_i}\ge\ip{\sum_{i=1}^{n}q_iy_i\vx_i}{\hu}\ge\hgamma>0,
    \end{align*}
    where $\hu$ and $\hgamma$ denote the maximum-margin solution and the maximum
    margin.
    Consequently $\nR\del{\barw(B)}/\enVert{\nR\del{\barw(B)}}$ and        $\nabla\widetilde{\cR}\del{\barw(B)}/\enVert{\nabla\widetilde{\cR}\del{\barw(B)}}$
    can become arbitrarily close.
    By \Cref{fact:barw_collinear},
    \begin{align*}
        \frac{\barw(B)}{B}=-\frac{\nR\del{\barw(B)}}{\enVert{\nR\del{\barw(B)}}},
    \end{align*}
    and thus $\barw(B)/B$ and
    $-\nabla\widetilde{\cR}\del{\barw(B)}/\enVert{\nabla\widetilde{\cR}\del{\barw(B)}}$
    can also become arbitrarily close.

    Suppose $\barw(B)/B$ does not converge to $\baru$.
    Then there exists $\epsilon>0$ such that there exists arbitrarily large $B$
    with $\enVert{\nicefrac{\barw(B)}{B}-\baru}>\epsilon$.
    When $B$ is large enough, $\barw(B)$ and
    $\nabla\widetilde{\cR}\del{\barw(B)}$
    can be arbitrarily close to collinear, and due to the structure of the
    $\ell_2$ ball, we have
    \begin{align*}
        \ip{\nabla\widetilde{\cR}\del{\barw(B)}}{B\baru-\barw(B)}>0,
    \end{align*}
    which implies that $\widetilde{\cR}(B\baru)>\widetilde{\cR}\del{\barw(B)}$,
    a contradiction.
\end{proofof}

\begin{proofof}{\Cref{fact:baru_margin_ub}}
    Consider the training set $\{(\vx_i,y_i)\}_{i=1}^n$ where $\vx_i=(0.1,0.1)$ for
    $1\le i\le n-1$ and $\vx_n=(0.6,-0.8)$, and $y_i=+1$ for all $1\le i\le n$.
    Note that as we increase $n$, the maximum margin does not change, and thus
    is a universal constant.
    Further consider a loss function such that $\ell(z)=z^{-b}$ for $b>0$ and
    $z\ge1$.
    We will show that the limiting direction $\baru$ induced by $\ell$ satisfies
    \begin{align*}
        y_n\langle\baru,\vx_n\rangle=\Theta\del{\frac{1}{n^{1/(b+1)}}}.
    \end{align*}
    Consequently, for large enough $n$ it holds that $\baru\ne\hu$.

    The existence of $\baru$ is ensured by \Cref{fact:poly_baru}.
    Let $\baru=(u_1,u_2)$, and
    \begin{align*}
        p:=\frac{1}{(0.1u_1+0.1u_2)^{b+1}},\quad\textrm{and}\quad q=\frac{1}{(0.6u_1-0.8u_2)^{b+1}}.
    \end{align*}
    It follows from the proof of \Cref{fact:poly_baru} that $p>0$, $q>0$, and
    $(u_1,u_2)$ is collinear with $\del{0.1(n-1)p+0.6q,0.1(n-1)p-0.8q}$.
    Note that we always have $u_1>0$, and when $n$ is large enough, we also have
    $u_2>0$.
    Consequently,
    \begin{align*}
        \frac{u_1}{u_2}=\frac{0.1(n-1)p+0.6q}{0.1(n-1)p-0.8q}=\frac{(n-1)p/q+6}{(n-1)p/q-8},
    \end{align*}
    and thus
    \begin{align*}
        \frac{p}{q}=\frac{1}{n-1}\frac{8u_1+6u_2}{u_1-u_2}.
    \end{align*}
    Since $0.6u_1-0.8u_2>0$ and $u_1^2+u_2^2=1$, it can be shown that
    $u_1-u_2>0.2$, and thus $p/q=\Theta(1/n)$.
    Moreover,
    \begin{align*}
        \frac{p}{q}=\frac{(0.6u_1-0.8u_2)^{b+1}}{(0.1u_1+0.1u_2)^{b+1}},
    \end{align*}
    and thus
    \begin{align*}
        y_n\langle \baru,\vx_n\rangle=0.6u_1-0.8u_2=\Theta\del{\frac{1}{n^{1/(b+1)}}}.
    \end{align*}
\end{proofof}

To prove \Cref{fact:no_baru}, we first need the following result which allows us
to switch between different tails.
\begin{lemma}\label[lemma]{fact:switch}
    Consider the loss functions $\lexp(z):=e^{-z}$ and $\lrec(z):=1/z$ on
    $[1,\infty)$.
    Given any $C_0>1$, there exists $C_1>C_0$ and a convex loss $\ell_1$ such
    that $\ell_1=\lexp$ on $[1,C_0]$, and $\ell_1=\lrec$ on $[C_1,\infty)$, and
    $\ell_1$ is $2$-smooth.
    Similarly, there also exists $C_2>C_0$ and convex loss $\ell_2$ such that
    $\ell_2=\lrec$ on $[1,C_0]$, and $\ell_2=\lexp$ on $[C_2,\infty)$, and
    $\ell_2$ is $2$-smooth.
\end{lemma}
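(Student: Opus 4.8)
The plan is to build $\ell_1$ from its negative derivative. Set $g_1 := -\ell_1'$ and then define $\ell_1(z) := \int_z^\infty g_1(s)\,\dif s$; if $g_1$ is continuous, positive, nonincreasing, and $2$-Lipschitz on $[1,\infty)$, then $\ell_1$ is automatically convex, differentiable, strictly decreasing to $0$, and $2$-smooth, so only the boundary behaviour needs to be arranged. Requiring $\ell_1 = \lexp$ on $[1,C_0]$ forces $g_1 = e^{-z}$ there and $\int_{C_0}^\infty g_1 = e^{-C_0}$; requiring $\ell_1 = \lrec$ on $[C_1,\infty)$ forces $g_1(z) = 1/z^2$ there, which contributes $\int_{C_1}^\infty g_1 = 1/C_1$. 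Hence the transition piece of $g_1$ on $[C_0,C_1]$ must satisfy only the area identity $\int_{C_0}^{C_1} g_1 = e^{-C_0} - 1/C_1$, together with the endpoint values $g_1(C_0) = e^{-C_0}$ and $g_1(C_1) = 1/C_1^2$.

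For the transition I would take $g_1$ piecewise linear with one plateau: from $e^{-C_0}$ at $C_0$, descend with slope $-2$ to a level $m \in [1/C_1^2,\, e^{-C_0}]$, stay flat at $m$, then descend with slope $-2$ to $1/C_1^2$, arriving exactly at $C_1$; on $[C_1,\infty)$ take $g_1(z) = 1/z^2$. For $C_1$ large all of this fits inside $[C_0,C_1]$, every slope lies in $[-2,0]$ (so $g_1$ is nonincreasing and $2$-Lipschitz), and $g_1 > 0$ throughout. A short computation shows the plateau length is independent of $m$, so the total area over $[C_0,C_1]$ is an affine function of $m$; for $C_1$ large its value at $m = 1/C_1^2$ is below the target $e^{-C_0} - 1/C_1$, while its value at $m = e^{-C_0}$ (of order $e^{-C_0}C_1$) is far above it. The intermediate value theorem then supplies a plateau level $m$ realizing the area identity exactly. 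With this $m$ one checks directly that $\ell_1(C_0) = e^{-C_0}$ and $\ell_1 = \lrec$ on $[C_1,\infty)$, whence $\ell_1 = \lexp$ on $[1,C_0]$ as well.

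The construction of $\ell_2$ is symmetric. Now $g_2 := -\ell_2'$ must equal $1/z^2$ on $[1,C_0]$ and $e^{-z}$ on $[C_2,\infty)$; the forced area identity on the transition interval is $\int_{C_0}^{C_2} g_2 = 1/C_0 - e^{-C_2}$, with endpoint values $g_2(C_0) = 1/C_0^2$ and $g_2(C_2) = e^{-C_2}$. The same ``descend--plateau--descend'' template at slope $-2$, with the plateau level $m$ swept over $[e^{-C_2},\, 1/C_0^2]$ and $C_2$ taken large relative to $C_0$, again brackets the target mass, and an intermediate value argument finishes it.

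The main obstacle is exactly this area bookkeeping: equality of the losses on a half-line is a global condition, not merely a pointwise condition on derivatives, so one cannot glue $\lexp$ and $\lrec$ along an arbitrary $2$-smooth bridge---the bridge must transport precisely the right mass. Isolating that requirement as an affine-in-$m$ identity and then enlarging $C_1$ (resp.\ $C_2$) until the target mass lies in the reachable window is the crux; convexity, strict monotonicity, and $2$-smoothness all come for free from working with a positive, nonincreasing, $2$-Lipschitz $g$.
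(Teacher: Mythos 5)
Your proposal is correct and follows essentially the same strategy as the paper: construct the derivative as a positive, nonincreasing, $2$-Lipschitz piecewise-linear bridge between the tails of $\lexp$ and $\lrec$, with the whole argument reduced to the mass identity $\int_{C_0}^{C_1}(-\ell_1') = \lexp(C_0)-\lrec(C_1)$ (resp.\ $\int_{C_0}^{C_2}(-\ell_2') = \lrec(C_0)-\lexp(C_2)$) and $C_1$ (resp.\ $C_2$) taken large enough that this mass is attainable. The only cosmetic difference is the free parameter used to satisfy the area identity: you sweep a plateau height and invoke the intermediate value theorem, while the paper fixes the two plateau levels at the endpoint slopes and tunes the location of a unit-length ramp, found as the intersection point of two lines.
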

\begin{proofof}{\Cref{fact:switch}}
    Let $C_1$ be large enough such that
    \begin{align}\label{eq:c1_cond}
        \frac{1}{C_1}+\frac{1}{C_1^2}(C_1-C_0)+\frac{1}{2}e^{-C_0}-\frac{1}{2C_1^2}<e^{-C_0},\quad\textrm{and}\quad C_1>C_0+\frac{3}{2}.
    \end{align}
    Consider the two lines
    \begin{align*}
        f_1(z):=e^{-C_0}-e^{-C_0}(z-C_0),\quad\textrm{and}\quad f_2(z):=\frac{1}{C_1}-\frac{1}{C_1^2}(z-C_1)+\frac{1}{2}e^{-C_0}-\frac{1}{2C_1^2}.
    \end{align*}
    Note that due to \cref{eq:c1_cond}, we have
    \begin{align*}
        f_1(C_0)=e^{-C_0}>\frac{1}{C_1}+\frac{1}{C_1^2}(C_1-C_0)+\frac{1}{2}e^{-C_0}-\frac{1}{2C_1^2}=f_2(C_0),
    \end{align*}
    and
    \begin{align*}
        f_1(C_1-1) & =e^{-C_0}-e^{-C_0}(C_1-1-C_0) \\
         & <e^{-C_0}-\frac{1}{2}e^{-C_0} \\
         & <\frac{1}{2}e^{-C_0}+\frac{1}{C_1}+\frac{1}{2C_1^2}=f_2(C_1-1).
    \end{align*}
    Consequently, the two lines $f_1$ and $f_2$ intersect at some point
    $C\in(C_0,C_1-1)$.
    Now we define
    \begin{equation*}
        \ell_1'(z)=
        \begin{dcases}
            -e^{-C_0}, & \textrm{if }z\in[C_0,C], \\
            -e^{-C_0}+\del{e^{-C_0}-\frac{1}{C_1^2}}(z-C), & \textrm{if }z\in[C,C+1], \\
            -\frac{1}{C_1^2}, & \textrm{if }z\in[C+1,C_1].
        \end{dcases}
    \end{equation*}
    It is easy to verify that $\ell'$ is nondecreasing $2$-Lipschitz on
    $[C_0,C_1]$.
    We only need to show that
    \begin{align}\label{eq:loss_full}
        \int_{C_0}^{C_1}\ell_1'(z)\dif z=\frac{1}{C_1}-e^{-C_0}.
    \end{align}
    Note that
    \begin{align}\label{eq:loss_p1}
        \int_{C_0}^C\ell_1'(z)\dif z=-e^{-C_0}(C-C_0),
    \end{align}
    and
    \begin{align}\label{eq:loss_p2}
        \int_C^{C+1}\ell_1'(z)\dif z=-\frac{1}{2}e^{-C_0}-\frac{1}{2C_1^2},
    \end{align}
    and
    \begin{align}\label{eq:loss_p3}
        \int_{C+1}^{C_1}\ell_1'(z)\dif z=-\frac{1}{C_1^2}(C_1-C-1).
    \end{align}
    Moreover, since $f_1$ and $f_2$ intersect at $C$, we have
    \begin{align}\label{eq:intersect}
        e^{-C_0}-e^{-C_0}(C-C_0)=\frac{1}{C_1}-\frac{1}{C_1^2}(C-C_1)+\frac{1}{2}e^{-C_0}-\frac{1}{2C_1^2}.
    \end{align}
    Combining \cref{eq:loss_p1,eq:loss_p2,eq:loss_p3,eq:intersect} proves
    \cref{eq:loss_full}.

    The proof of the other claim is similar.
    Let $C_2$ be large enough such that
    \begin{align*}
        \frac{1}{C_0}>e^{-C_2}-e^{-C_2}(C_0-C_2)+\frac{1}{2C_0^2}-\frac{1}{2}e^{-C_2},\quad\textrm{and}\quad C_2>2C_0+1.
    \end{align*}
    Consider the two lines
    \begin{align*}
        g_1(z):=\frac{1}{C_0}-\frac{1}{C_0^2}(z-C_0),\quad\textrm{and}\quad g_2(z):=e^{-C_2}-e^{-C_2}(z-C_2)+\frac{1}{2C_0^2}-\frac{1}{2}e^{-C_2}.
    \end{align*}
    It can be verified that $g_1(C_0)>g_2(C_0)$ and $g_1(C_2-1)<g_2(C_2-1)$, and
    thus $g_1$ and $g_2$ intersect at some point $C'\in(C_0,C_2-1)$.
    Let
    \begin{equation*}
        \ell_2'(z)=
        \begin{dcases}
            -\frac{1}{C_0^2}, & \textrm{if }z\in[C_0,C'], \\
            -\frac{1}{C_0^2}+\del{\frac{1}{C_0^2}-e^{-C_2}}(z-C), & \textrm{if }z\in[C',C'+1], \\
            -e^{-C_2}, & \textrm{if }z\in[C+1,C_2].
        \end{dcases}
    \end{equation*}
    It can be verified similarly that
    \begin{align*}
        \int_{C_0}^{C_2}\ell_2'(z)\dif z=e^{-C_2}-\frac{1}{C_0}.
    \end{align*}
\end{proofof}

Next we prove \Cref{fact:no_baru}.
We make $\ell$ keep switching between $e^{-z}$ and $1/z$ so that the
regularization path does not converge.
\begin{proofof}{\Cref{fact:no_baru}}
    In this proof, the notation $\barw_{\ell}(B)$ means the regularized solution
    using loss $\ell$.

    Consider the dataset given in the proof of \Cref{fact:baru_margin_ub}.
    If $n$ is large enough, then for the exponential loss $e^{-z}$ we have
    \begin{align*}
        \lim_{B\to\infty}\frac{\barw_{\exp}(B)}{B}=\hu,
    \end{align*}
    while for the reciprocal loss $1/z$ it holds that
    \begin{align*}
        \lim_{B\to\infty}\frac{\barw_{\textrm{recip}}(B)}{B}=\baru\ne\hu.
    \end{align*}
    Let $B_0$ be large enough such that for any $B\ge B_0$,
    \begin{align*}
        \enVert{\frac{\barw_{\exp}(B)}{B}-\hu}\le \frac{\|\baru-\hu\|}{3},\quad\textrm{and}\quad\enVert{\frac{\barw_{\textrm{recip}}(B)}{B}-\baru}\le \frac{\|\baru-\hu\|}{3},
    \end{align*}
    and the margin of $\barw_{\exp}(B)/B$ is at least $\hgamma/2$, and the
    margin of $\barw_{\textrm{recip}}(B)/B$ is at least $\bgamma/2$.

    We construct $\ell$ in the following way.
    Let $\ell(z):=z^2-z+1$ for $z<0$, and $\ell(z):=e^{-z}$ for $z\in[0,B_0]$.
    One can verify that $\ell$ is convex and $1$-smooth on $(-\infty,B_0]$.
    Let $a_0=0$, $b_0=B_0$.
    Now for any $k\ge1$, the construction is as follows.
    \begin{enumerate}
        \item Given $\ell=e^{-z}$ on $[a_{k-1},b_{k-1}]$, \Cref{fact:switch}
        ensures that we can switch $\ell$ to $1/z$: there exists $c_k>b_{k-1}$
        such that we can let $\ell(z)=1/z$ for any $z\ge c_k$.
        We let $\ell(z)=1/z$ on $[c_k,d_k]$ where $d_k:=2nc_k/\bgamma$.
        With this construction it holds that
        $\barw_{\ell}(d_k)=\barw_{\textrm{recip}}(d_k)$.
        To see this, first note that by our condition
        \begin{align*}
            y_i\ip{\barw_{\textrm{recip}}(d_k)}{\vx_i}\ge \frac{d_k\bgamma}{2}=nc_k,\quad\textrm{and}\quad y_i\ip{\barw_{\textrm{recip}}(d_k)}{\vx_i}\le\enVert{\barw_{\textrm{recip}}(d_k)}\|\vx_i\|=d_k
        \end{align*}
        for all $1\le i\le n$, which implies
        \begin{align*}
            \cR_{\ell}\del{\barw_{\textrm{recip}}(d_k)}\le \frac{1}{nc_k}.
        \end{align*}
        On the other hand, if $\barw_{\ell}(d_k)\ne\barw_{\textrm{recip}}(d_k)$,
        then we must have
        \begin{align*}
            y_i\ip{\barw_{\ell}(d_k)}{\vx_i}<c_k
        \end{align*}
        for some $(\vx_i,y_i)$, and it follows that
        \begin{align*}
            \cR_{\ell}\del{\barw_{\ell}(d_k)}>\frac{1}{n}\ell(c_k)=\frac{1}{nc_k}\ge\cR_{\ell}\del{\barw_{\textrm{recip}}(d_k)},
        \end{align*}
        a contradiction.

        \item Given $\ell=1/z$ on $[c_k,d_k]$, \Cref{fact:switch} ensures that
        we can switch $\ell$ to $e^{-z}$: there exists $a_k>d_k$
        such that we can let $\ell(z)=e^{-z}$ for any $z\ge a_k$.
        We let $\ell(z)=e^{-z}$ on $[a_k,b_k]$ where
        $b_k=2\del{a_k+\ln(n)}/\bgamma$.
        Similarly we can show that $\barw_{\ell}(b_k)=\barw_{\exp}(b_k)$.
    \end{enumerate}
    Since for any $B\ge B_0$, it holds that
    \begin{align*}
        \enVert{\frac{\barw_{\exp}(B)}{B}-\frac{\barw_{\textrm{recip}}(B)}{B}}\ge \frac{\|\baru-\hu\|}{3},
    \end{align*}
    the loss $\ell$ constructed above satisfies the requirements in
    \Cref{fact:no_baru}.
\end{proofof}

\section{Omitted proofs from \Cref{sec:non_sep}}\label{app_sec:non_sep}

\begin{proofof}{\Cref{fact:baru_margin_gen}}
    \Cref{fact:barw_collinear} ensures that $\barw(B)$ and $\nR\del{\barw(B)}$
    are collinear, which also implies $\barw_\perp(B):=\Pip\barw(B)$ and
    $\Pip\nR\del{\barw(B)}$ are collinear.
    Formally,
    \begin{align}\label{eq:align_perp}
        -\ip{\frac{\barw_\perp(B)}{\enVert{\barw_\perp(B)}}}{\Pip\nR\del{\barw(B)}}=\enVert{\Pip\nR\del{\barw(B)}},
    \end{align}
    and the left hand side is equal to
    \begin{align}\label{eq:align_perp_lhs}
        \frac{1}{n}\sum_{i=1}^{n}-\ell'\del{\ip{\barw(B)}{y_i\vx_i}}\ip{\frac{\barw_\perp(B)}{\enVert{\barw_\perp(B)}}}{\Pip y_i\vx_i}=\frac{1}{n}\sum_{(\vx_i,y_i)\in D_c}^{}-\ell'\del{\ip{\barw(B)}{y_i\vx_i}}\ip{\frac{\barw_\perp(B)}{\enVert{\barw_\perp(B)}}}{y_i\vx_i}.
    \end{align}
    Let
    \begin{align*}
        \hu:=\argmax_{\|\vu\|=1,\vu\in S^\perp}\min_{(\vx_i,y_i)\in D_c}y_i \langle \vu,\vx_i\rangle,\quad\textrm{and}\quad\hgamma:=\max_{\|\vu\|=1,\vu\in S^\perp}\min_{(\vx_i,y_i)\in D_c}y_i \langle \vu,\vx_i\rangle,
    \end{align*}
    and we can lower bound the right hand side of \cref{eq:align_perp} as
    follows:
    \begin{align}\label{eq:align_perp_rhs}
        \enVert{\Pip\nR\del{\barw(B)}}\ge\ip{-\Pip\nR\del{\barw(B)}}{\hu}
        &\ge \frac{1}{n}\sum_{(\vx_i,y_i)\in D_c}-\ell'\del{\ip{\barw(B)}{y_i\vx_i}}\hgamma.
    \end{align}
    Since $\cR_s\del{\barw(B)}\le\cR\del{\barw(B)}\le\ell(0)$, and $\cR_s$ has
    compact sublevel sets on $S$, we know that $\Pi_S\barw(B)$ is bounded.
    Consequently
    \begin{align*}
        \lim_{B\to\infty}\frac{\barw_\perp(B)}{\enVert{\barw_\perp(B)}}=\lim_{B\to\infty}\frac{\barw(B)}{B}=\baru.
    \end{align*}
    If the margin of $\baru$ on $D_c$ is less than
    $\epsilon:=\hgamma^2/\del{8|D_c|}$, then there exists $B_0$ such that for
    any $B\ge B_0$, the margin of $\barw_\perp(B)/\enVert{\barw_\perp(B)}$ on
    $D_c$ is no larger than $\epsilon$, and the distance between $\barw(B)/B$
    and $\barw_\perp(B)/\enVert{\barw_\perp(B)}$ is no larger than $\epsilon$.
    Let $H$ denote the subset of $D_c$ on which the margin of
    $\barw_\perp(B)/\enVert{\barw_\perp(B)}$ is larger than $\hgamma$, and
    suppose the minimum margin of $\barw_\perp(B)/\enVert{\barw_\perp(B)}$ on
    $D_c$ is attained at $i_1$.
    Then \cref{eq:align_perp,eq:align_perp_lhs,eq:align_perp_rhs} give
    \begin{align}
         & \ \sum_{(\vx_i,y_i)\in H}^{}-\ell'\del{\ip{\barw(B)}{y_i\vx_i}}\del{\ip{\frac{\barw_\perp(B)}{\enVert{\barw_\perp(B)}}}{y_i\vx_i}-\hgamma} \label{eq:perp_tmp1} \\
        \ge & \ \sum_{(\vx_i,y_i)\in D_c\setminus H}^{}-\ell'\del{\ip{\barw(B)}{y_i\vx_i}}\del{\hgamma-\ip{\frac{\barw_\perp(B)}{\enVert{\barw_\perp(B)}}}{y_i\vx_i}} \nonumber \\
        \ge & \ -\ell'\del{\ip{\barw(B)}{y_{i_1}\vx_{i_1}}}\del{\hgamma-\ip{\frac{\barw_\perp(B)}{\enVert{\barw_\perp(B)}}}{y_{i_1}\vx_{i_1}}}. \label{eq:perp_tmp2}
    \end{align}
    Note that by our conditions, for any $i$,
    \begin{align*}
        \envert{\ip{\frac{\barw(B)}{B}}{y_i\vx_i}-\ip{\frac{\barw_\perp(B)}{\enVert{\barw_\perp(B)}}}{y_i\vx_i}}\le\epsilon.
    \end{align*}
    Therefore \cref{eq:perp_tmp1} can be upper bounded by
    \begin{align*}
        -\ell'\del{B\del{\hgamma-\epsilon}}(1-\hgamma)|H|\le-\ell'\del{\frac{B\hgamma}{2}}|D_c|,
    \end{align*}
    while \cref{eq:perp_tmp2} can be lower bounded by
    \begin{align*}
        -\ell'\del{B(\epsilon+\epsilon)}(\hgamma-\epsilon)\ge-\ell'(2B\epsilon)\frac{\hgamma}{2}.
    \end{align*}
    Consequently, for any $z\ge\alpha:=2B_0\epsilon$,
    \begin{align*}
        -\ell'\del{\frac{\hgamma z}{4\epsilon}}\ge-\ell'(z)\frac{\hgamma}{2|D_c|}=-\ell'(z)\frac{4\epsilon}{\hgamma}.
    \end{align*}
    Similar to the proof of \Cref{fact:baru_margin}, we can show that
    $\int_\alpha^\infty-\ell(z)\dif z=\infty$, a contradiction.
\end{proofof}

\begin{proofof}{\Cref{fact:baru_dec_gen}}
    Let $\bar{\cR}:=\inf_{\vw\in\R^d}\cR(\vw)$.
    Also recall that $\barv$ denote the unique minimizer of $\cR_s$ over $S$.
    Since for any $\vw$,
    \begin{align*}
        \cR_s\del{\vw_S+(1+\alpha)\enVert{\vw_\perp}\baru}=\cR_s(\vw),
    \end{align*}
    we only need to show that
    \begin{align*}
        \cR_c\del{\vw_S+(1+\alpha)\enVert{\vw_\perp}\baru}\le\cR_c(\vw).
    \end{align*}
    Let $\xi(\alpha)$ be small enough such that for any $\vw$ with
    $\cR(\vw)-\bar{\cR}\le\xi(\alpha)$, the following properties hold.
    \begin{enumerate}
        \item $\|\barv-\vw_S\|\le1$.
        \item $\|\barv\|+\nicefrac{1}{\bgamma}\le\alpha\bgamma\|\vw_\perp\|/4\le\alpha\|\vw_\perp\|/4$.
        \item For any $B\ge\|\vw_\perp\|-\nicefrac{1}{\bgamma}$, it holds that
        $\enVert{\nicefrac{\barw(B)}{B}-\baru}\le\alpha\bgamma/4$.
    \end{enumerate}
    Consider $\vw$ which satisfies $\cR(\vw)-\bar{\cR}\le\xi(\alpha)$, and define
    \begin{align*}
        \tilde{\vw}=\vw+(\barv-\vw_S)+\frac{\|\barv-\vw_S\|}{\bar{\gamma}}\baru.
    \end{align*}
    By definition $\cR_s(\tilde{\vw})=\bar{\cR}$, and since for any
    $(\vx_i,y_i)\in D_c$ it holds that
    \begin{align*}
        y_i \langle\tilde{\vw},\vx_i\rangle & =y_i \langle \vw,\vx_i\rangle+y_i \langle\barv-\vw_S,\vx_i\rangle+\frac{\|\barv-\vw_S\|}{\bar{\gamma}}y_i \langle\baru,\vx_i\rangle \\
         & \ge y_i \langle \vw,\vx_i\rangle-\|\barv-\vw_S\|+\|\barv-\vw_S\| \\
         & =y_i \langle \vw,\vx_i\rangle,
    \end{align*}
    we have $\cR_c(\tilde{\vw})\le\cR_c(\vw)$.
    On the other hand, by definition
    $\cR\del{\barw\del{\|\tilde{\vw}\|}}\le\cR(\tilde{\vw})$, and since
    $\cR_s\del{\barw\del{\|\tilde{\vw}\|}}\ge\bar{\cR}=\cR_s(\tilde{\vw})$, we have
    \begin{align}\label{eq:baru_dec_gen_tmp1}
        \cR_c\del{\barw\del{\|\tilde{\vw}\|}}\le\cR_c(\tilde{\vw})\le\cR_c(\vw).
    \end{align}
    Note that due to bullet 1 above,
    \begin{align*}
        \|\tilde{\vw}\|\ge\|\Pi_\perp\tilde{\vw}\|\ge\|\vw_\perp\|-\frac{\|\barv-\vw_S\|}{\bgamma}\ge\|\vw_\perp\|-\frac{1}{\bgamma}.
    \end{align*}
    Therefore due to bullet 3 above $\enVert{\nicefrac{\barw\del{\|\tilde{\vw}\|}}{\|\tilde{\vw}\|}-\baru}\le\alpha\bgamma/4$, which implies for any $(\vx_i,y_i)\in D_c$,
    \begin{align}\label{eq:baru_dec_gen_tmp2}
        \del{1+\frac{\alpha}{4}}y_i \langle\|\tilde{\vw}\|\baru,\vx_i\rangle\ge y_i \ip{\barw\del{\|\tilde{\vw}\|}}{\vx_i}.
    \end{align}
    On the other hand, by the triangle inequality and bullet 1 and 2,
    \begin{align*}
        \|\tilde{\vw}\| & =\enVert{\barv+\vw_\perp+\frac{\|\barv-\vw_S\|}{\bgamma}\baru} \\
         & \le\|\barv\|+\|\vw_\perp\|+\frac{\|\barv-\vw_S\|}{\bgamma} \\
         & \le\|\barv\|+\|\vw_\perp\|+\frac{1}{\bgamma}\le\del{1+\frac{\alpha\bgamma}{4}}\|\vw_\perp\|\le\del{1+\frac{\alpha}{4}}\|\vw_\perp\|,
    \end{align*}
    and thus
    \begin{align}\label{eq:baru_dec_gen_tmp3}
        \del{1+\frac{\alpha}{4}}y_i \langle\|\tilde{\vw}\|\baru,\vx_i\rangle\le\del{1+\frac{\alpha}{4}}^2y_i \langle\|\vw_\perp\|\baru,\vx_i\rangle\le\del{1+\frac{3\alpha}{4}}y_i \langle\|\vw_\perp\|\baru,\vx_i\rangle.
    \end{align}
    Moreover, due to bullet 1 and 2,
    \begin{align}\label{eq:baru_dec_gen_tmp4}
        y_i \langle \vw_S,\vx_i\rangle\ge-\|\vw_S\|\ge-\|\barv\|-1\ge-\|\barv\|-\frac{1}{\bgamma}\ge-\frac{\alpha\bgamma\|\vw_\perp\|}{4}\ge-\frac{\alpha}{4}y_i \langle\|\vw_\perp\|\baru,\vx_i\rangle.
    \end{align}
    Combining \cref{eq:baru_dec_gen_tmp2,eq:baru_dec_gen_tmp3,eq:baru_dec_gen_tmp4}
    gives
    \begin{align*}
        y_i \ip{\barw\del{\|\tilde{\vw}\|}}{\vx_i}\le\del{1+\frac{3\alpha}{4}}y_i \langle\|\vw_\perp\|\baru,\vx_i\rangle\le y_i \ip{\vw_S+(1+\alpha)\|\vw_\perp\|\baru}{\vx_i},
    \end{align*}
    which implies
    \begin{align}\label{eq:baru_dec_gen_tmp5}
        \cR_c\del{\vw_S+(1+\alpha)\|\vw_\perp\|\baru}\le\cR_c\del{\barw\del{\|\tilde{\vw}\|}}.
    \end{align}
    It follows from \cref{eq:baru_dec_gen_tmp1,eq:baru_dec_gen_tmp5} that
    \begin{align*}
        \cR_c\del{\vw_S+(1+\alpha)\|\vw_\perp\|\baru}\le\cR_c(\vw),
    \end{align*}
    which concludes the proof.
\end{proofof}

\begin{proofof}{\Cref{fact:conv_dir_gen}(2), the ``only if'' part}
    Given any $\epsilon\in(0,1)$, let $\alpha$ satisfy
    $1/(1+\alpha)=1-\epsilon$ (i.e., let $\alpha=\epsilon/(1-\epsilon)$).

    Since $\lim_{t\to\infty}\cR(\vw_t)=\inf_{\vw\in\R^d}\cR(\vw)$, there exists $t_0$
    such that for any $t\ge t_0$ we have
    $\cR(\vw_t)-\inf_{\vw\in\R^d}\cR(\vw)\le\xi(\alpha)$ and $\|\vw_{t,\perp}\|\ge1$.
    By convexity and \Cref{fact:baru_dec_gen}, for $t\ge t_0$,
    \begin{align*}
        \ip{\nR(\vw_t)}{\vw_{t,\perp}-(1+\alpha)\enVert{\vw_{t,\perp}}\baru} & =\ip{\nR(\vw_t)}{\vw_{t,S}+\vw_{t,\perp}-\vw_{t,S}-(1+\alpha)\enVert{\vw_{t,\perp}}\baru} \\
         & =\ip{\nR(\vw_t)}{\vw_t-\vw_{t,S}-(1+\alpha)\enVert{\vw_{t,\perp}}\baru} \\
         & \ge\cR(\vw_t)-\cR\del{\vw_{t,S}+(1+\alpha)\enVert{\vw_{t,\perp}}\baru}\ge0.
    \end{align*}
    Consequently,
    \begin{align*}
      \langle \vw_{t+1}-\vw_t,\baru\rangle & =\ip{-\eta\nR(\vw_t)}{\baru} \\
       & \ge\ip{-\eta\nR(\vw_t)}{\vw_{t,\perp}}\frac{1}{(1+\alpha)\|\vw_{t,\perp}\|} \\
       & =\langle \vw_{t+1}-\vw_t,\vw_{t,\perp}\rangle \frac{1}{(1+\alpha)\|\vw_{t,\perp}\|} \\
       & =\langle \vw_{t+1,\perp}-\vw_{t,\perp},\vw_{t,\perp}\rangle \frac{1}{(1+\alpha)\|\vw_{t,\perp}\|} \\
       & =\del{\frac{1}{2}\|\vw_{t+1,\perp}\|^2-\frac{1}{2}\|\vw_{t,\perp}\|^2-\frac{1}{2}\|\vw_{t+1,\perp}-\vw_{t,\perp}\|^2}\frac{1}{(1+\alpha)\|\vw_{t,\perp}\|}.
    \end{align*}
    On one hand, we have
    \begin{align*}
      \del{\frac{1}{2}\|\vw_{t+1,\perp}\|^2-\frac{1}{2}\|\vw_{t,\perp}\|^2}/\|\vw_{t,\perp}\|\ge\|\vw_{t+1,\perp}\|-\|\vw_{t,\perp}\|.
    \end{align*}
    On the other hand, using the step size condition in \cref{eq:gd_risk},
    we have
    \begin{align*}
      \frac{\|\vw_{t+1,\perp}-\vw_{t,\perp}\|^2}{2(1+\alpha)\|\vw_{t,\perp}\|}\le \frac{\|\vw_{t+1,\perp}-\vw_{t,\perp}\|^2}{2} & \le \frac{\|\vw_{t+1}-\vw_t\|^2}{2} \\
       & =\frac{\eta^2\enVert{\nR(\vw_t)}^2}{2} \\
       & \le\eta\del{\cR(\vw_t)-\cR(\vw_{t+1})}.
    \end{align*}
    As a result,
    \begin{align*}
      \langle \vw_t-\vw_{t_0},\baru\rangle\ge \frac{\|\vw_{t,\perp}\|-\|\vw_{t_0,\perp}\|}{1+\alpha}-\eta\cR(\vw_{t_0})=\del{1-\epsilon}\del{\|\vw_{t,\perp}\|-\|\vw_{t_0,\perp}\|}-\eta\cR(\vw_{t_0}),
    \end{align*}
    meaning
    \begin{align*}
        \ip{\frac{\vw_t}{\|\vw_t\|}}{\baru}\ge(1-\epsilon)\frac{\|\vw_{t,\perp}\|}{\|\vw_t\|}+\frac{\langle \vw_{t_0},\baru\rangle-(1-\epsilon)\|\vw_{t_0,\perp}\|-\eta\cR(\vw_{t_0})}{\|\vw_t\|}.
    \end{align*}
    Consequently,
    \begin{align*}
      \liminf_{t\to\infty}\ip{\frac{\vw_t}{\|\vw_t\|}}{\baru}\ge1-\epsilon.
    \end{align*}
    Since $\epsilon$ is arbitrary, we get $\vw_t/\|\vw_t\|\to\baru$.
\end{proofof}
\end{document}